\patchcmd{\appendices}{\quad}{. }{}{}
\setlist[itemize]{leftmargin=4.5mm}
\newtheorem{lemma}{Lemma}
\newtheorem{proposition}{Proposition}
\newtheorem{remark}{Remark}
{\bf}{\rm}
\newcommand{\bigO}{\mathcal{O}}
\newcommand{\inr}[1]{\langle #1 \rangle}
\newcommand{\norm}[1]{\|#1\|}
\newcommand{\R}{{\mathbb{R}}}
\newcommand*{\dt}[1]{\accentset{\mbox{\large\bfseries .}}{#1}}
\DeclareMathOperator*{\argmin}{\arg\!\min}
\title{Screening Sinkhorn Algorithm for Regularized Optimal Transport}
\begin{document}

\author{Mokhtar Z. Alaya \\
LITIS EA4108\\
University of Rouen Normandy\\
\texttt{mokhtarzahdi.alaya@gmail.com} 
\And
Maxime Bérar\\
LITIS EA4108\\
University of Rouen Normandy\\
\texttt{maxime.berar@univ-rouen.fr} \\
\And
Gilles Gasso \\
LITIS EA4108\\
INSA, University of Rouen Normandy\\
\texttt{gilles.gasso@insa-rouen.fr} 
\And
Alain Rakotomamonjy\\
LITIS EA4108 \\
University of Rouen Normandy\\
and Criteo AI Lab, Criteo Paris \\
\texttt{alain.rakoto@insa-rouen.fr} \\
}

\maketitle

\begin{abstract}
We introduce in this paper a novel strategy for efficiently approximating the Sinkhorn distance between two discrete measures. After identifying neglectable components
of the dual solution of the regularized Sinkhorn problem, we propose to screen those components by directly setting them at that value before entering the Sinkhorn problem. This allows us to solve a smaller Sinkhorn problem while ensuring approximation with provable guarantees.
More formally, the approach is based on a new formulation of \emph{dual of Sinkhorn divergence problem} and on the KKT optimality conditions of this problem, which enable identification of dual components to be screened.
This new analysis leads to the \textsc{Screenkhorn} algorithm.
We illustrate the efficiency of \textsc{Screenkhorn} on complex tasks such as dimensionality reduction and domain adaptation involving regularized optimal transport.
\end{abstract}

\section{Introduction} \label{sec:introduction}

Computing optimal transport (OT) distances between pairs of probability measures or histograms, such as the earth mover's distance~\citep{werman1985,Rubner2000} and Monge-Kantorovich or Wasserstein distance~\citep{villani09optimal}, are currently generating an increasing attraction in different machine learning tasks~\citep{pmlr-v32-solomon14,kusnerb2015,pmlr-v70-arjovsky17a,ho2017}, statistics~\citep{frogner2015nips,panaretos2016,ebert2017ConstructionON,bigot2017,flamary2018WDA}, and computer vision~\citep{bonnel2011,Rubner2000,solomon2015}, among other applications~\citep{klouri17,peyre2019COTnowpublisher}.
In many of these problems, OT exploits the geometric features of the objects at hand in the underlying spaces to be leveraged in comparing probability measures.
This effectively leads to improved performance of methods that are oblivious to the geometry, for example the chi-squared distances or the Kullback-Leibler divergence.
Unfortunately, this advantage comes at the price of an enormous computational cost of solving the OT problem, that can be prohibitive in large scale applications.
For instance, the OT between two histograms with supports of equal size $n$ can be formulated as a linear programming problem that requires generally super $\bigO(n^{2.5})$~\citep{leeSidford2013PathFI} arithmetic operations, which is problematic when $n$ becomes larger.

A remedy to the heavy computation burden of OT lies in a prevalent approach referred to as regularized OT~\citep{cuturinips13} and operates by adding an entropic regularization penalty to the original problem.  
Such a regularization guarantees a unique solution, since the objective function is strongly convex, and a greater computational stability.
More importantly, this regularized OT can be solved efficiently with celebrated matrix scaling algorithms, such as Sinkhorn's fixed point iteration method~\citep{sinkhorn1967,knight2008,kalantari2008}. 

Several works have considered further improvements in the resolution of this regularized OT problem.
A greedy version of Sinkhorn algorithm, called Greenkhorn~\cite{altschulernips17}, allows to select and update columns and rows that most violate the polytope constraints.                   
Another approach based on low-rank approximation of the cost matrix using the Nystr\"om method induces the Nys-Sink algorithm~\citep{altschuler2018Nystrom}. 
Other classical optimization algorithms have been considered for approximating the OT, for instance accelerated gradient descent~\citep{xie2018proxpointOT,dvurechensky18aICML,lin2019}, quasi-Newton methods~\citep{blondel2018ICML,cuturi2016SIAM} and stochastic gradient descent~\citep{genevay2016stochOT,khalilabid2018}. 

{In this paper, we propose a novel technique for accelerating the Sinkhorn algorithm when computing regularized OT distance between discrete measures. Our idea
is strongly related to a screening strategy when solving a \emph{Lasso}
problem in sparse supervised learning \citep{Ghaoui2010SafeFE}. Based on the fact
that a  transport plan resulting from an OT problem is sparse or presents a large
number of neglectable values \citep{blondel2018ICML}, our objective is to identify the  dual variables of an approximate Sinkhorn problem, that are smaller than a predefined threshold, and thus that can be safely removed before optimization while not altering too much the solution of the problem.  
Within this global context, our contributions are the following:
\begin{itemize}
	  \setlength\itemsep{-0.1cm}
	
	\item From a methodological point of view, we propose a new formulation of the dual of the Sinkhorn divergence problem by imposing variables to be larger than a threshold.
	This formulation allows us to introduce sufficient conditions, computable beforehand, for a variable to  strictly satisfy its constraint, leading then to
	a ``screened'' version of the dual of Sinkhorn divergence. 
	\item We provide some theoretical analysis of the solution of the ``screened''  Sinkhorn divergence, showing that its objective value and the marginal constraint satisfaction are properly controlled 	as the number of screened variables decreases.
	\item From an algorithmic standpoint, we use a constrained L-BFGS-B algorithm \citep{nocedal1980,byrd1995L-BFGS-B} but provide a careful analysis of the lower and upper bounds of the dual	variables, resulting in a well-posed and efficient algorithm denoted as \textsc{Screenkhorn}.
	\item Our empirical analysis depicts how the approach behaves in a simple Sinkhorn divergence computation context. When considered in  complex machine learning
	pipelines, we show that \textsc{Screenkhorn} can lead to strong gain in efficiency
	while not compromising on accuracy.
\end{itemize}}

The remainder of the paper is organized as follow. In Section~\ref{sec:regularized_discrete_ot} we briefly review the basic setup of regularized discrete OT. 
Section~\ref{sec:screened_dual_of_sinkhorn_divergence} contains our main contribution, that is, the \textsc{Screenkhorn} algorithm. 
Section~\ref{sec:analysis_of_marginal_violations} is devoted to theoretical guarantees for marginal violations of \textsc{Screenkhorn}. 
In Section~\ref{sec:numerical_experiments} we present numerical results for the proposed algorithm, compared with the state-of-art Sinkhorn algorithm as implemented in~\cite{flamary2017pot}. 
The proofs of theoretical results are postponed to the supplementary material {as well as additional empirical results}.

\emph{Notation.} For any positive matrix $T \in \R^{n\times m}$, we define its entropy as $H(T) = -\sum_{i,j} T_{ij} \log(T_{ij}).$
Let $r(T) = T\mathbf 1_m \in \R^n$ and $c(T) = T^\top\mathbf 1_n \in \R^m$ denote the rows and columns sums of $T$ respectively. The coordinates $r_i(T)$ and $c_j(T)$ denote the $i$-th row sum and the $j$-th column sum of $T$, respectively.
The scalar product between two matrices denotes the usual inner product, that is $\inr{T, W} = \text{tr}(T^\top W) = \sum_{i,j}T_{ij}W_{ij},$ where $T^\top$ is the transpose of $T$. 
We write $\mathbf{1}$ (resp. $\mathbf{0}$) the vector having all coordinates equal to one (resp. zero).
$\Delta(w)$ denotes the diag operator, such that if $w \in \R^n$, then $\Delta(w) = \text{diag}(w_1, \ldots, w_n)\in \R^{n\times n}$.
For a set of indices $L=\{i_1, \ldots, i_k\} \subseteq \{1, \ldots, n\}$ satisfying $i_1 < \cdots <i_k,$ we denote the complementary set of $L$ by $L^\complement = \{1, \ldots, n\} \backslash L$. We also denote $|L|$ the cardinality of $L$.
Given a vector $w \in \R^n$, we denote $w_L= (w_{i_1}, \ldots, w_{i_k})^\top \in \R^k$ and its complementary $w_{L^\complement} \in \R^{n- k}$.  The notation is similar for matrices; given another subset of indices $S = \{j_1, \ldots, j_l\} \subseteq \{1, \ldots, m\}$ with $j_1 < \cdots <j_l,$ and a matrix $T\in \R^{n\times m}$, we use $T_{(L,S)}$, to denote the submatrix of $T$, namely the rows and columns of $T_{(L,S)}$ are indexed by $L$ and $S$ respectively.
When applied to matrices and vectors,  $\odot$ and $\oslash$ (Hadamard product and division) and exponential notations refer to elementwise operators.
Given two real numbers $a$ and $b$, we write $a\vee b = \max(a,b)$ and $a\wedge b = \min(a,b).$


\section{Regularized discrete OT} \label{sec:regularized_discrete_ot}

We briefly expose in this section the setup of OT between two discrete measures. We then consider the case when those distributions are only available through a finite number of samples, that is $\mu = \sum_{i=1}^n \mu_i \delta_{x_i} \in \Sigma_n$ and $\nu = \sum_{j=1}^m \nu_i \delta_{y_j} \in \Sigma_m$, where $\Sigma_n$ is the probability simplex with $n$ bins, namely the set of probability vectors in $\R_+^n$, i.e., $\Sigma_n = \{w \in \R_+^n: \sum_{i=1}^n w_i = 1\}.$
We denote their probabilistic couplings set as $\Pi(\mu, \nu) = \{P \in \R_+^{n\times m}, P\mathbf{1}_m = \mu, P^\top \mathbf{1}_n = \nu\}.$ 
\paragraph{Sinkhorn divergence.}

Computing OT distance between the two discrete measures $\mu$ and $\nu$  amounts to solving a linear problem~\citep{kantorovich1942} given by
\begin{equation*}
  \label{monge-kantorovich}
  \mathcal{S}(\mu, \nu) =  \min_{P\in \Pi(\mu, \nu)} \inr{C, P},
\end{equation*}
where $P= (P_{ij}) \in \R^{n\times m}$ is called the transportation plan, namely each entry $P_{ij}$ represents the fraction of mass moving from $x_i$ to $y_j$, and $C= (C_{ij}) \in \R^{n\times m}$ is a cost matrix comprised of nonnegative elements and related to the energy needed to move a probability mass from $x_i$ to $y_j$. 
The entropic regularization of OT distances~\citep{cuturinips13} relies on the addition of a penalty term as follows:
\begin{equation}
\label{sinkhorn-primal}
  \mathcal{S}_\eta(\mu, \nu) =  \min_{P\in \Pi(\mu, \nu)} \{\inr{C, P} - \eta H(P)\},
\end{equation}
where $\eta > 0$ is a regularization parameter. We refer to $\mathcal{S}_\eta(\mu, \nu) $ as the \emph{Sinkhorn divergence}~\citep{cuturinips13}.

\paragraph{Dual of Sinkhorn divergence.}

Below we provide the derivation of the dual problem for the regularized OT problem~\eqref{sinkhorn-primal}. Towards this end, we begin with writing its Lagrangian dual function:
\begin{equation*}
  \mathscr{L}(P,w, z) = \inr{C,P} + \eta \inr{\log P, P} + \inr{w, P\mathbf{1}_m - \mu} + \inr{z,P^\top \mathbf{1}_n - \nu}.
\end{equation*}
The dual of Sinkhorn divergence can be derived by solving $\min_{P \in \R_+^{n\times m}}\mathscr{L}(P,w, z)$. It is easy to check that objective function $P\mapsto \mathscr{L}(P,w, z)$ is strongly convex and differentiable. Hence, one can solve the latter minimum by setting $\nabla_P \mathscr{L}(P,w, z)$ to $\mathbf{0}_{n\times m}$. Therefore, we get $  P^\star_{ij} = \exp\big(- \frac{1}{\eta} (w_i + z_j + C_{ij}) - 1\big), 
$ for all $i=1, \ldots, n$ and $j=1, \ldots, m$. Plugging this solution,  and setting the change of variables $u = -w/\eta - 1/2$ and $v = - z/\eta - 1/2$, the dual problem is given by
\begin{equation}
\label{sinkhorn-dual}
\min_{u \in \R^n, v\in\R^m}\big\{\Psi(u,v):= \mathbf{1}_n^\top B(u,v)\mathbf{1}_m - \inr{u, \mu} - \inr{v, \nu} \big\},
\end{equation}
where $B(u,v) := \Delta(e^{u}) K \Delta(e^{v})$ and $K := e^{-C/\eta}$ stands for the Gibbs kernel associated to the cost matrix $C$. 
We refer to problem~\eqref{sinkhorn-dual} as the \emph{dual of Sinkhorn divergence}. Then, the optimal solution $P^\star$ of the primal problem~\eqref{sinkhorn-primal} takes the form $P^\star = \Delta(e^{u^\star}) K \Delta(e^{v^\star})$
where the couple $(u^\star, v^\star)$ satisfies:
\begin{align*}
\label{sinkhorn-dual}
  (u^\star, v^\star) &= \argmin_{u \in \R^{n}, v\in \R^m} \{\Psi(u,v)\}.
\end{align*}
Note that the matrices $\Delta(e^{u^\star})$ and $\Delta(e^{v^\star})$ are unique up to a constant factor~\citep{sinkhorn1967}. Moreover, $P^\star$ can be solved efficiently by iterative Bregman projections~\citep{benamou2015IterativeBP} referred to as Sinkhorn iterations, and the method is referred to as \textsc{Sinkhorn} algorithm which, recently, has been proven to achieve a near-$\bigO(n^2)$ complexity~\citep{altschulernips17}.


\section{Screened dual of Sinkhorn divergence} \label{sec:screened_dual_of_sinkhorn_divergence}

\begin{wrapfigure}{o}{0.3\textwidth}
\vspace{-12pt}
\centering
\includegraphics[width=0.3\textwidth]{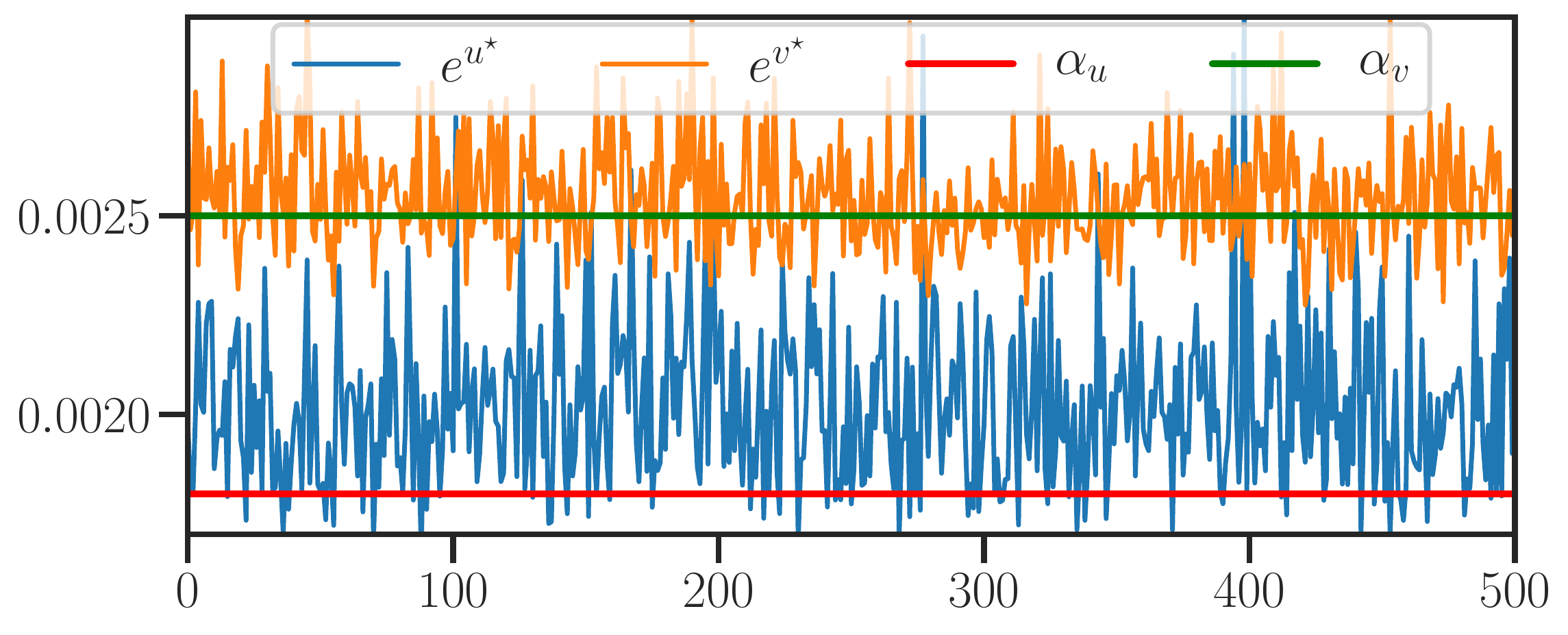}
\caption{Plots of $(e^{u^\star}, e^{v^\star})$ with $(u^\star, v^\star)$ is the pair solution of dual of Sinkhorn divergence~\eqref{sinkhorn-dual} and the thresholds $\alpha_u, \alpha_v$.}
\label{fig:motivations}
\vspace{-12pt}
\end{wrapfigure}

\paragraph{Motivation.} 

The key idea of our approach is motivated by the so-called \emph{static screening test}~\citep{Ghaoui2010SafeFE} in supervised learning, which is a method able to {safely} identify inactive features, i.e., features that have zero components in the solution vector. 
Then, these inactive features can be removed from the optimization problem to reduce its scale.
Before diving into detailed algorithmic analysis, let us present a brief illustration of how we adapt static screening test to the dual of Sinkhorn divergence.
Towards this end, we define the convex set $\mathcal{C}^r_{\alpha} \subseteq \R^r$, for $r\in \mathbb N$ and $\alpha >0$, by $\mathcal{C}^r_{\alpha} = \{w\in \R^{r}:  e^{w_i} \geq \alpha\}$.
In Figure~\ref{fig:motivations}, we plot $(e^{u^\star}, e^{v^\star})$ where $(u^\star, v^\star)$ is the pair solution of the dual of Sinkhorn divergence~(\ref{sinkhorn-dual}) in the particular case of: $n=m=500, \eta=1, \mu = \nu = \frac 1n \mathbf 1_n, x_i \sim\mathcal{N}((0,0)^\top, \begin{psmallmatrix}1 & 0\\0 & 1\end{psmallmatrix}), y_j \sim\mathcal{N}((3,3)^\top, \begin{psmallmatrix}1 &-0.8 \\ -0.8 &1 \end{psmallmatrix})$ and the cost matrix $C$ corresponds to the pairwise euclidean distance, i.e., $C_{ij} = \norm{x_i - y_j}_2$. 
We also plot two lines corresponding to $e^{u^\star} \equiv \alpha_u$ and $e^{v^\star} \equiv \alpha_v$ for some $\alpha_u>0$ and $\alpha_v >0$, choosing randomly and playing the role of thresholds to select indices to be discarded. {If we are able to identify these indices before solving the problem, they can be fixed at the thresholds and removed then from the optimization procedure yielding an approximate solution.}

\paragraph{Static screening test.} Based on this idea, we define a so-called \emph{approximate dual of Sinkhorn divergence} 
\begin{equation} 
\label{screen-sinkhorn}
\min_{u \in \mathcal{C}^n_{\frac \varepsilon \kappa}, v\in \mathcal{C}^m_{\varepsilon\kappa}} \big\{\Psi_{\kappa}(u,v):= \mathbf{1}_n^\top B(u,v)\mathbf{1}_m - \inr{\kappa u, \mu} - \inr{\frac v\kappa, \nu} \big\},
\end{equation}
which is simply a dual of Sinkhorn divergence with lower-bounded variables, where the bounds
are $\alpha_u = \varepsilon \kappa^{-1}$ and $\alpha_v = \varepsilon \kappa$ with $\varepsilon > 0$ and $\kappa > 0$ being fixed numeric constants which values will be
clear later. 
{The new formulation~\eqref{screen-sinkhorn} has the form of $(\kappa\mu, \nu / \kappa)$-scaling problem under constraints on the variables $u$ and $v$.  Those constraints make the problem  significantly different from the standard scaling-problems~\citep{KALANTARI199687}. 
We further emphasize that $\kappa$ plays a key role in our screening strategy.  Indeed, without $\kappa$, $e^u$ and $e^v$ can have inversely related scale  that may lead in,  for instance $e^u$ being too large and $e^v$ being too small, situation in which the screening test would apply only to coefficients of $e^u$ or $e^v$ and not for both of them.
Moreover, it is clear that the approximate dual of Sinkhorn divergence coincides with the dual of Sinkhorn divergence~\eqref{sinkhorn-dual} when $\varepsilon=0$ and $\kappa=1$.
 {Intuitively, our hope is to gain efficiency in solving
problem (\ref{screen-sinkhorn}) compared to the original one in Equation \eqref{sinkhorn-dual} by avoiding optimization of variables smaller than the threshold and by identifying
those that make the constraints active. } More formally, 
 the core of the static screening test aims at locating two subsets of indices $(I, J)$ in $\{1, \ldots, n\}\times\{1, \ldots, m\}$ satisfying: $e^{u_i} > \alpha_u, \text{ and } e^{v_j} > \alpha_v, \text{ for all } (i,j) \in I \times J$ and 
$e^{u_{i'}} = \alpha_u, \text{ and } e^{v_{j'}} = \alpha_v, \text{ for all } (i',j') \in I^\complement \times J^\complement$, namely $(u,v) \in \mathcal{C}^n_{\alpha_u}\times \mathcal{C}^m_{\alpha_v}$. {The following key result states sufficient conditions for identifying variables in $I^\complement$ and $J^\complement$.}

\begin{lemma}
\label{lemma_actives_sets}
Let $(u^{*}, v^{*})$ be an optimal solution of problem~\eqref{screen-sinkhorn}. 
Define
\begin{equation}
\label{I_epsilon_kappa_J_epsilon_kappa}
I_{\varepsilon,\kappa} = \big\{i=1, \ldots, n: \mu_i \geq \frac {\varepsilon^2} \kappa^{} r_i(K)\big\}, J_{\varepsilon,\kappa} = \big\{j=1, \ldots, m: \nu_j \geq \kappa{\varepsilon^2}{} c_j(K)\big\}
\end{equation}
Then one has $e^{u^{*}_i} = \varepsilon\kappa^{-1}$ and $e^{v^{*}_j} = \varepsilon\kappa$ for all $i \in I^\complement_{\varepsilon,\kappa} $ and $j\in J^\complement_{\varepsilon,\kappa} .$
\end{lemma}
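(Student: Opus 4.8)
The plan is to characterize the minimizer of the constrained convex program~\eqref{screen-sinkhorn} through its KKT conditions and then argue, by contradiction, that the lower-bound constraint must be \emph{active} at every index in $I^\complement_{\varepsilon,\kappa}$ and $J^\complement_{\varepsilon,\kappa}$. First I would write the constraints defining $\mathcal{C}^n_{\varepsilon/\kappa}\times\mathcal{C}^m_{\varepsilon\kappa}$ in the form $u_i \geq \log(\varepsilon/\kappa)$ and $v_j \geq \log(\varepsilon\kappa)$, so that the feasible set is a shifted box. Since $\Psi_\kappa$ is smooth and convex and strictly interior feasible points exist, Slater's condition holds and the KKT conditions are necessary at the optimum $(u^{*},v^{*})$. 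Introducing multipliers $\lambda_i \geq 0$ and $\sigma_j \geq 0$ for the lower bounds, and using $r_i(B(u^{*},v^{*})) = e^{u^{*}_i}\sum_j K_{ij}e^{v^{*}_j}$ and $c_j(B(u^{*},v^{*})) = e^{v^{*}_j}\sum_i e^{u^{*}_i}K_{ij}$, stationarity reads $e^{u^{*}_i}\sum_{j} K_{ij}e^{v^{*}_j} = \kappa\mu_i + \lambda_i$ and $e^{v^{*}_j}\sum_i e^{u^{*}_i}K_{ij} = \kappa^{-1}\nu_j + \sigma_j$, together with complementary slackness $\lambda_i\big(u^{*}_i - \log(\varepsilon/\kappa)\big) = 0$ and $\sigma_j\big(v^{*}_j - \log(\varepsilon\kappa)\big) = 0$.

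Next I would run the contradiction argument on the $u$-block. Fix $i \in I^\complement_{\varepsilon,\kappa}$, so that $\mu_i < \frac{\varepsilon^2}{\kappa}\, r_i(K)$, and suppose the constraint is \emph{inactive}, i.e.\ $e^{u^{*}_i} > \varepsilon/\kappa$. Complementary slackness then forces $\lambda_i = 0$, so stationarity collapses to $e^{u^{*}_i}\sum_j K_{ij}e^{v^{*}_j} = \kappa\mu_i$. Feasibility of $v^{*}$ gives $e^{v^{*}_j}\geq \varepsilon\kappa$ for every $j$, and since $K_{ij} > 0$ we obtain $\sum_j K_{ij}e^{v^{*}_j} \geq \varepsilon\kappa\, r_i(K)$. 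Multiplying by $e^{u^{*}_i} > \varepsilon/\kappa$ (all factors positive, so strictness is preserved) yields $\kappa\mu_i = e^{u^{*}_i}\sum_j K_{ij}e^{v^{*}_j} > \tfrac{\varepsilon}{\kappa}\cdot\varepsilon\kappa\, r_i(K) = \varepsilon^2 r_i(K)$, that is $\mu_i > \frac{\varepsilon^2}{\kappa}\, r_i(K)$, contradicting $i\in I^\complement_{\varepsilon,\kappa}$. Hence $e^{u^{*}_i} = \varepsilon/\kappa$ for all such $i$. The $v$-block is perfectly symmetric: for $j\in J^\complement_{\varepsilon,\kappa}$ assume $e^{v^{*}_j} > \varepsilon\kappa$, use $\sigma_j = 0$ together with $e^{u^{*}_i}\geq \varepsilon/\kappa$ to bound $\sum_i e^{u^{*}_i}K_{ij}\geq \tfrac{\varepsilon}{\kappa}\, c_j(K)$, and deduce $\nu_j > \kappa\varepsilon^2 c_j(K)$, contradicting $j\in J^\complement_{\varepsilon,\kappa}$.

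I do not expect a genuine obstacle here; the argument is short once the optimality conditions are in place. The only points requiring care are justifying the KKT system (constraint qualification, which is immediate for a box constraint) and tracking strict versus non-strict inequalities: the product of the strict bound $e^{u^{*}_i} > \varepsilon/\kappa$ with the non-strict bound on $\sum_j K_{ij}e^{v^{*}_j}$ stays strict precisely because $K_{ij}>0$ guarantees $r_i(K)>0$ (and likewise $c_j(K)>0$). It is also worth flagging \emph{why} the scaling constant $\kappa$ produces such clean thresholds: the $u$-bound $\varepsilon/\kappa$ and the $v$-bound $\varepsilon\kappa$ are chosen so that their product is exactly $\varepsilon^2$, which is what surfaces in the definitions of $I_{\varepsilon,\kappa}$ and $J_{\varepsilon,\kappa}$, while $\kappa$ itself redistributes the two thresholds symmetrically between the row and column subproblems.
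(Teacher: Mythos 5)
Your proof is correct and follows essentially the same route as the paper's: both write the KKT system for problem~\eqref{screen-sinkhorn}, use feasibility ($e^{u^{*}_i}\geq \varepsilon/\kappa$, $e^{v^{*}_j}\geq \varepsilon\kappa$) to lower-bound $e^{u^{*}_i}\sum_j K_{ij}e^{v^{*}_j}$ by $\varepsilon^2 r_i(K)$, and invoke complementary slackness; the paper argues directly that the multiplier $\lambda^{*}_i$ is strictly positive for $i\in I^\complement_{\varepsilon,\kappa}$, while you phrase the same step as a contradiction starting from an inactive constraint. Your write-up is, if anything, more careful than the paper's about constraint qualification and about where strict inequality is needed.
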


Proof of Lemma~\ref{lemma_actives_sets} is postponed to the supplementary material. It is worth to note that first order optimality conditions applied to $(u^{*}, v^{*})$ ensure that if $e^{u^{*}_i} > \varepsilon\kappa^{-1}$ then $e^{u^{*}_i} (Ke^{v^{*}})_i =  \kappa\mu_i$ and if $e^{v^{*}_j} > \varepsilon\kappa$ then $e^{v^{*}_j} (K^\top e^{u^{*}})_j =  \kappa^{-1}\nu_j$, that correspond to the Sinkhorn marginal conditions~\citep{peyre2019COTnowpublisher} up to the scaling factor $\kappa$. 

\paragraph{Screening with a fixed number budget of points.}
The approximate dual of Sinkhorn divergence is defined with respect to $\varepsilon$ and $\kappa$. {As those parameters are
	difficult to interpret, we exhibit their relations with  a {fixed number budget of points} from the supports of $\mu$ and $\nu$}.
In the sequel, we denote by $n_b \in\{1, \ldots, n\}$ and $m_b\in\{1, \ldots, m\}$ the number of points {that are going to be optimized in problem~\eqref{screen-sinkhorn}, \emph{i.e}, the points we cannot guarantee
that  $e^{u^{*}_i} = \varepsilon\kappa^{-1}$ and $e^{v^{*}_j} = \varepsilon\kappa$ }. 

Let us define $\xi \in \R^n$ and $\zeta \in \R^m$ to be the ordered decreasing vectors of $\mu \oslash r(K)$ and $\nu \oslash c(K)$ respectively, that is $\xi_1 \geq \xi_2 \geq \cdots \geq \xi_n$ and $\zeta_1 \geq \zeta_2 \geq \cdots \geq \zeta_m$.
To keep only $n_b$-budget and $m_b$-budget of points, the parameters $\kappa$ and $\varepsilon$ satisfy ${\varepsilon^2}\kappa^{-1} = \xi_{n_b}$ and $\varepsilon^2\kappa = \zeta_{m_b}$. Hence 
\begin{equation}
\label{epsilon_kappa}
 \varepsilon = (\xi_{n_b}\zeta_{m_b})^{1/4} \text{ and } \kappa = \sqrt{\frac{\zeta_{m_b}}{\xi_{n_b}}}.
\end{equation}
This guarantees that $|I_{\varepsilon, \kappa}| = n_b$ and $|J_{\varepsilon, \kappa}| = m_b$ by construction. In addition, when $(n_b,m_b)$ tends to the full number budget of points $(n,m)$, the objective in problem \eqref{screen-sinkhorn} converges to the objective of dual of Sinkhorn divergence~\eqref{sinkhorn-dual}.

{We are now in position to formulate the
optimization problem related to the screened dual of Sinkhorn.} Indeed, using the above analyses, any solution $(u^*, v^*)$ of problem~\eqref{screen-sinkhorn} satisfies $e^{u^*_i} \geq \varepsilon\kappa^{-1}$ and $e^{v^*_j} \geq \varepsilon\kappa$ for all $(i,j) \in (I_{\varepsilon,\kappa}\times J_{\varepsilon,\kappa}),$ and $e^{u^*_i} = \varepsilon\kappa^{-1}$ and $e^{v^*_j} = \varepsilon\kappa$ for all $(i,j) \in (I^\complement_{\varepsilon,\kappa}\times J^\complement_{\varepsilon,\kappa})$.
{Hence, we can restrict the problem \eqref{screen-sinkhorn} to 
variables in $I_{\varepsilon,\kappa}$ and $J_{\varepsilon,\kappa}$}. This boils down
to restricting the constraints feasibility $\mathcal{C}^n_{\frac \varepsilon \kappa} \cap \mathcal{C}^m_{\varepsilon\kappa}$ to the screened domain defined by $\mathcal{U}_{\text{sc}} \cap \mathcal{V}_{\text{sc}}$, 
\begin{equation*}
\mathcal{U}_{\text{sc}} = \{u \in \R^{n_b}: e^{u_{I_{\varepsilon,\kappa}}} \succeq \frac \varepsilon\kappa\mathbf 1_{n_b}\} \text{ and } \mathcal{V}_{\text{sc}} =\{v\in\R^{m_b}: e^{v_{J_{\varepsilon,\kappa}}} \succeq \varepsilon\kappa \mathbf{1}_{m_b}\}\end{equation*}
where the vector comparison $\succeq$ has to be understood elementwise. {And,
	  by replacing in Equation \eqref{screen-sinkhorn}, the variables belonging to $(I^\complement_{\varepsilon,\kappa}\times J^\complement_{\varepsilon,\kappa})$ by $\varepsilon\kappa^{-1}$ and
$\varepsilon\kappa$}, we derive the \emph{screened dual of Sinkhorn divergence problem} as
\begin{align}
\label{screen-sinkhorn_second_def}
\min_{u \in \mathcal{U}_{\text{sc}}, v \in \mathcal{V}_{\text{sc}}}\{\Psi_{\varepsilon, \kappa}(u,v)\}
\end{align}
where 
\begin{align*} 
\Psi_{\varepsilon,\kappa}(u, v) &= (e^{u_{I_{\varepsilon,\kappa}}})^\top K_{(I_{\varepsilon,\kappa}, J_{\varepsilon,\kappa})} e^{v_{J_{\varepsilon,\kappa}}} + 
\varepsilon \kappa (e^{u_{I_{\varepsilon,\kappa}}})^\top K_{(I_{\varepsilon,\kappa}, J^\complement_{\varepsilon,\kappa})}\mathbf 1_{m_b} + \varepsilon \kappa^{-1} \mathbf 1_{n_b}^\top K_{(I^\complement_{\varepsilon,\kappa}, J_{\varepsilon,\kappa})}e^{v_{J_{\varepsilon,\kappa}}}\\
&\qquad - \kappa \mu_{I_{\varepsilon,\kappa}}^\top u_{I_{\varepsilon,\kappa}} - \kappa^{-1} \nu_{J_{\varepsilon,\kappa}}^\top v_{J_{\varepsilon,\kappa}} + \Xi
\end{align*}
with $\Xi = \varepsilon^2 \sum_{i \in I^\complement_{\varepsilon,\kappa}, j \in J^\complement_{\varepsilon,\kappa}} K_{ij} -\kappa \log(\varepsilon\kappa^{-1})\sum_{i \in I^\complement_{\varepsilon,\kappa}}\mu_i - \kappa^{-1} \log(\varepsilon\kappa)\sum_{j\in J^\complement_{\varepsilon,\kappa}} \nu_j$.

 The above problem uses only the restricted parts $K_{(I_{\varepsilon,\kappa}, J_{\varepsilon,\kappa})},$ $K_{(I_{\varepsilon,\kappa}, J^\complement_{\varepsilon,\kappa})},$ and $K_{(I^\complement_{\varepsilon,\kappa}, J_{\varepsilon,\kappa})}$ of the Gibbs kernel $K$ for calculating the objective function $\Psi_{\varepsilon, \kappa}$. Hence, a gradient descent scheme will also need only those rows/columns of $K$. This is in contrast to Sinkhorn algorithm which performs alternating updates of all rows and columns of $K$. In summary, \textsc{Screenkhorn} consists of two steps: the first one is a screening pre-processing providing the active sets $I_{\varepsilon,\kappa}$, $J_{\varepsilon,\kappa}$. 
The second one consists in solving Equation \eqref{screen-sinkhorn_second_def}
using a constrained L-BFGS-B \citep{byrd1995L-BFGS-B} for the stacked variable $\theta=(u_{I_{\varepsilon,\kappa}},v_{J_{\varepsilon,\kappa}}).$ 
Pseudocode of our proposed algorithm is shown in Algorithm~\ref{screenkhorn}. {
	Note that in practice, we initialize the L-BFGS-B algorithm based on the output of a  method, called \textsc{Restricted Sinkhorn} (see Algorithm~\ref{restricted_sinkhorn} in the supplementary), which is a Sinkhorn-like algorithm applied to the active dual variables $\theta=(u_{I_{\varepsilon,\kappa}},v_{J_{\varepsilon,\kappa}}).$ While simple and efficient, the solution of this 
	\textsc{Restricted Sinkhorn} algorithm does not satisfy the lower bound constraints of Problem \eqref{screen-sinkhorn_second_def} but provide a good candidate solution.
}
Also note that L-BFGS-B handles box constraints on variables, but it becomes more efficient when these box bounds are carefully determined for problem~\eqref{screen-sinkhorn_second_def}. 
The following proposition (proof in supplementary material) expresses these bounds that are pre-calculated in the initialization step of \textsc{Screenkhorn}.
\begin{proposition}
\label{prop:bounds_of_usc_and_vsc}
Let $(u^{\text{sc}}, v^{\text{sc}})$ be an optimal pair solution of problem~\eqref{screen-sinkhorn_second_def} and $K_{\min} = \min\limits_{i\in I_{\varepsilon,\kappa},j \in J_{\varepsilon,\kappa}}K_{ij}$. Then,
one has
\begin{equation}
\label{bound_on_u}
\frac \varepsilon\kappa \vee \frac{\min_{i \in I_{\varepsilon,\kappa}}\mu_i}{\varepsilon (m- m_b) + \frac{\max_{j\in J_{\varepsilon,\kappa}} \nu_j}{n\varepsilon\kappa K_{\min}} m_b} \leq e^{u^{\text{sc}}_i} \leq \frac{\max_{i \in I_{\varepsilon,\kappa}} \mu_i}{m\varepsilon K_{\min}},
\end{equation}
and
\begin{equation}
\label{bound_on_v}
\varepsilon\kappa \vee \frac{\min_{j \in J_{\varepsilon,\kappa}}\nu_j}{\varepsilon(n- n_b) + \frac{\kappa\max_{i\in I_{\varepsilon,\kappa}} \mu_i}{m\varepsilon K_{\min} } n_b} \leq e^{v^{\text{sc}}_j} \leq  \frac{\max_{j \in J_{\varepsilon,\kappa}} \nu_j}{n\varepsilon K_{\min} }
\end{equation}
for all $i\in I_{\varepsilon,\kappa}$ and $j\in J_{\varepsilon,\kappa}$.
\end{proposition}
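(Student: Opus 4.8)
The plan is to convert the first-order optimality conditions of the box-constrained convex program \eqref{screen-sinkhorn_second_def} into Sinkhorn-type scaling identities and then read off the four bounds by elementary kernel estimates. Throughout write $a_i = e^{u^{\text{sc}}_i}$ for $i\in I_{\varepsilon,\kappa}$ and $b_j = e^{v^{\text{sc}}_j}$ for $j\in J_{\varepsilon,\kappa}$. Since $\Psi_{\varepsilon,\kappa}$ is convex and differentiable and the feasible set is the box $\mathcal U_{\text{sc}}\times\mathcal V_{\text{sc}}$, at the optimum each coordinate either sits at its lower bound ($a_i=\varepsilon\kappa^{-1}$, resp. $b_j=\varepsilon\kappa$) or is stationary. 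Differentiating gives $\partial\Psi_{\varepsilon,\kappa}/\partial u_i = a_i D_i-\kappa\mu_i$ and $\partial\Psi_{\varepsilon,\kappa}/\partial v_j = b_j E_j-\kappa^{-1}\nu_j$, where $D_i := \sum_{j\in J_{\varepsilon,\kappa}}K_{ij}b_j+\varepsilon\kappa\sum_{j\in J^\complement_{\varepsilon,\kappa}}K_{ij}$ and $E_j := \sum_{i\in I_{\varepsilon,\kappa}}a_i K_{ij}+\varepsilon\kappa^{-1}\sum_{i\in I^\complement_{\varepsilon,\kappa}}K_{ij}$. Hence at an unconstrained coordinate $a_i D_i=\kappa\mu_i$ and $b_j E_j=\kappa^{-1}\nu_j$, while at a pinned coordinate the corresponding derivative is nonnegative.

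First I would establish the two upper bounds, since they depend only on feasibility and not on each other. Bounding every kernel entry below by $K_{\min}$ together with $b_j\geq\varepsilon\kappa$ shows that each of the $m$ summands of $D_i$ is at least $\varepsilon\kappa K_{\min}$, so $D_i\geq m\,\varepsilon\kappa K_{\min}$; substituting into $a_i=\kappa\mu_i/D_i$ and using $\mu_i\leq\max_{i\in I_{\varepsilon,\kappa}}\mu_i$ gives the right-hand side of \eqref{bound_on_u}, and the symmetric estimate $E_j\geq n\,\varepsilon\kappa^{-1}K_{\min}$ gives the right-hand side of \eqref{bound_on_v}. At a pinned coordinate the scaling identity is unavailable, but the upper bound still holds because the defining inequality of $I_{\varepsilon,\kappa}$ in \eqref{I_epsilon_kappa_J_epsilon_kappa}, namely $\mu_i\geq\varepsilon^2\kappa^{-1}r_i(K)\geq\varepsilon^2\kappa^{-1}mK_{\min}$, forces $\varepsilon\kappa^{-1}\leq\max_{i}\mu_i/(m\varepsilon K_{\min})$, and likewise for $J_{\varepsilon,\kappa}$.

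Next I would derive the lower bounds, now recycling the upper bounds just proved. The term $\varepsilon\kappa^{-1}$ (resp. $\varepsilon\kappa$) in the maximum is simply feasibility. For the marginal term I upper bound $D_i$: the $m_b$ entries indexed by $J_{\varepsilon,\kappa}$ satisfy $K_{ij}\leq 1$ (as $K=e^{-C/\eta}$ with $C\geq 0$) and $b_j\leq\max_{j}\nu_j/(n\varepsilon K_{\min})$, while the remaining $m-m_b$ terms contribute at most $\varepsilon\kappa(m-m_b)$, so $D_i\leq \max_{j}\nu_j\,m_b/(n\varepsilon K_{\min})+\varepsilon\kappa(m-m_b)$; then $a_i=\kappa\mu_i/D_i\geq\kappa\min_{i}\mu_i/D_i$ yields exactly the second argument of the $\vee$ in \eqref{bound_on_u}, and the mirror computation with $I\leftrightarrow J$, $n\leftrightarrow m$, $\kappa\leftrightarrow\kappa^{-1}$ gives \eqref{bound_on_v}.

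The main obstacle is the complementary-slackness bookkeeping at pinned coordinates and the cross-coupling of the two blocks. For a pinned coordinate the marginal lower bound cannot come from the scaling identity; instead I would use the KKT inequality $a_iD_i\geq\kappa\mu_i$ at $a_i=\varepsilon\kappa^{-1}$, i.e. $D_i\geq\kappa^2\mu_i/\varepsilon$, and combine it with the upper estimate $\bar D_i$ on $D_i$ used above (which satisfies $\bar D_i\geq D_i$) to get $\kappa\min_i\mu_i/\bar D_i\leq\kappa\mu_i/D_i\leq\varepsilon\kappa^{-1}=a_i$; hence the pinned value dominates the marginal term and the lower-bound $\vee$ is respected, with the $v$-block handled identically. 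The coupling is resolved by the ordering already indicated: both upper bounds are proved from feasibility alone and only then fed into the lower-bound estimates, so no circularity arises. Everything else reduces to the routine substitutions sketched above.
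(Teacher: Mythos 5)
Your proof is correct and follows essentially the same route as the paper's: read off the scaling identity $e^{u^{\text{sc}}_i}\sum_j K_{ij}e^{v^{\text{sc}}_j}=\kappa\mu_i$ from the first-order conditions, lower-bound the kernel sum via $K_{ij}\geq K_{\min}$ and $e^{v^{\text{sc}}_j}\geq\varepsilon\kappa$ to get the upper bounds, then upper-bound it via $K_{ij}\leq 1$ and the just-proved upper bound on $e^{v^{\text{sc}}_j}$ to get the lower bounds. The only difference is that you explicitly verify the bounds at coordinates pinned to the lower bound (via the KKT inequality and the defining inequality of $I_{\varepsilon,\kappa}$), a case the paper's proof leaves implicit; this is a welcome extra bit of care rather than a divergence in method.
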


\LinesNotNumbered
\begin{algorithm}[tbp]
\SetNlSty{textbf}{}{.}
\DontPrintSemicolon
\caption{\textsc{Screenkhorn}$(C,\eta,\mu,\nu,n_b,m_b)$}
\label{screenkhorn}

\textbf{Step 1:} \textcolor{black}{{Screening pre-processing}}\vspace{.1cm}\\

\nl   $\xi \gets \texttt{sort}(\mu \oslash r(K)),$ $\zeta \gets \texttt{sort}(\nu \oslash c(K));$ //(decreasing order)\\
\nl   $\varepsilon \gets (\xi_{n_b}\zeta_{m_b})^{1/4}, \text{  } \kappa \gets \sqrt{{\zeta_{m_b}}/{\xi_{n_b}}}$;\\
\nl   $I_{\varepsilon,\kappa} \gets \{i=1, \ldots, n: \mu_i \geq {\varepsilon^2} \kappa^{-1} r_i(K)\}, J_{\varepsilon,\kappa} \gets \{j=1, \ldots, m: \nu_j \geq \varepsilon^2\kappa c_j(K)\};$\\ 
\nl   $\underline{\mu} \gets \min_{i \in I_{\varepsilon,\kappa}} \mu_i, \bar{\mu} \gets \max_{i \in I_{\varepsilon,\kappa}} \mu_i, \underline{\nu} \gets \min_{j \in J_{\varepsilon,\kappa}} \nu_i, \bar{\nu} \gets \max_{j \in J_{\varepsilon,\kappa}} \nu_i$; \\
\nl   $\underline{u} \gets \log\big(\frac \varepsilon\kappa \vee \frac{\underline{\mu}}{\varepsilon (m-m_b) + \varepsilon \vee \frac{\bar{\nu}}{n\varepsilon\kappa K_{\min}} m_b}\big), \bar{u} \gets  \log\big(\frac{\bar{\mu}}{m\varepsilon K_{\min}}\big);$\\
\nl   $\underline{v} \gets \log\big(\varepsilon\kappa \vee \frac{\underline{\nu}}{\varepsilon(n-n_b) + \varepsilon \vee \frac{\kappa\bar{\mu}}{m\varepsilon K_{\min}} n_b}\big), \bar{v} \gets \log\big(\frac{\bar{\nu}}{n\varepsilon K_{\min}}\big);$\\
\nl   $ \bar{\theta} \gets \texttt{stack}(\bar{u}\mathbf 1_{n_b}, \bar{v}\mathbf 1_{m_b}),$ $ \underline{\theta} \gets \texttt{stack}(\underline{u}\mathbf 1_{n_b}, \underline{v}\mathbf 1_{m_b}) ;$\\

\vspace{.2cm}
\noindent \textbf{Step 2:} \textcolor{black}{{L-BFGS-B solver on the screened variables}}\vspace{.1cm}\\
{
\nl  $u^{(0)}\gets \log(\varepsilon\kappa^{-1}) \mathbf 1_{n_b},$ $v^{(0)} \gets \log(\varepsilon\kappa) \mathbf 1_{m_b}$;\\
\nl $\hat u, \hat v \gets$ \textsc{Restricted~Sinkhorn}($u^{(0)},v^{(0)}$), $\theta^{(0)} \gets \texttt{stack}(\hat u, \hat v);$\\
}
\nl   $\theta \gets \text{L-BFGS-B}(\theta^{(0)}, \underline{\theta}, \bar{\theta});$\\
\nl   $\theta_u \gets (\theta_1, \ldots, \theta_{n_b})^\top, \theta_v \gets(\theta_{n_b+1}, \ldots, \theta_{n_b+m_b})^\top;$\\
\nl   {$u^{sc}_i \gets (\theta_u)_i$ if $i \in I_{\varepsilon,\kappa}$ and $u_i \gets \log(\varepsilon\kappa^{-1})$ if $i \in I^\complement_{\varepsilon,\kappa};$}\\
\nl   {$v^{sc}_j \gets (\theta_v)_j$ if $j \in J_{\varepsilon,\kappa}$ and $v_j \gets \log(\varepsilon\kappa)$ if $j \in J^\complement_{\varepsilon,\kappa};$}\\
\nl   \Return{$B(u^{\text{sc}},v^{\text{sc}})$.}
\end{algorithm}


\section{Theoretical analysis and guarantees} \label{sec:analysis_of_marginal_violations}

This section is devoted to establishing theoretical guarantees for \textsc{Screenkhorn} algorithm. We first define the screened marginals $\mu^{\text{sc}} = B(u^{\text{sc}}, v^{\text{sc}}) \mathbf 1_m$ and $\nu^{\text{sc}} = B(u^{\text{sc}}, v^{\text{sc}})^\top \mathbf 1_n.$ 
Our first theoretical result, Proposition~\ref{proposition_error_in_marginals}, gives an upper bound of the screened marginal violations with respect to $\ell_1$-norm.

\begin{proposition}
\label{proposition_error_in_marginals}
Let $(u^{\text{sc}}, v^{\text{sc}})$ be an optimal pair solution of problem~\eqref{screen-sinkhorn_second_def}.
Then one has 
{\small{
\begin{align}
\label{marginal-error-mu}
{\norm{{\mu} -{\mu}^{\text{sc}}}^2_1} = \bigO\Big(n_bc_\kappa + (n- n_b) \Big(\frac{\norm{C}_\infty}{\eta} + \frac{m_b}{\sqrt{nmc_{\mu\nu}}K_{\min}^{3/2}} &+ \frac{m-m_b}{\sqrt{nm}K_{\min}}
 + \log\Big(\frac{\sqrt{nm}}{m_bc_{\mu\nu}^{5/2}} 
\Big)\Big)\Big)
\end{align}
}}
and 
{\small{
\begin{align}
\label{marginal-error-nu}
{\norm{{\nu} -{\nu}^{\text{sc}}}^2_1} = \bigO\Big(m_bc_{\frac1\kappa} + (m- m_b) \Big(\frac{\norm{C}_\infty}{\eta} + \frac{n_b}{\sqrt{nmc_{\mu\nu}}K_{\min}^{3/2}} &+ \frac{n-n_b}{\sqrt{nm}K_{\min}}
 + \log\Big(\frac{\sqrt{nm}}{n_b c_{\mu\nu}^{5/2}}
\Big)\Big)\Big),
\end{align}
}}
where $c_z = z - \log z - 1$ for $z>0$ and $c_{\mu\nu} = \underline{\mu}\wedge \underline{\nu}$ with $\underline{\mu} = \min_{i\in I_{\varepsilon,\kappa}}\mu_i$ and $\underline{\nu} = \min_{j\in J_{\varepsilon,\kappa}}\nu_j$.

\end{proposition}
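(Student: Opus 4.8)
The plan is to establish the two bounds separately and symmetrically; I treat $\norm{\mu-\mu^{\text{sc}}}_1$ in detail, since~\eqref{marginal-error-nu} follows by exchanging the roles of $(\mu,r(K),\kappa,I_{\varepsilon,\kappa})$ and $(\nu,c(K),\kappa^{-1},J_{\varepsilon,\kappa})$. Writing $I=I_{\varepsilon,\kappa}$, $J=J_{\varepsilon,\kappa}$ and recalling $\mu^{\text{sc}}_i=e^{u^{\text{sc}}_i}(Ke^{v^{\text{sc}}})_i$ with the complementary entries of $u^{\text{sc}},v^{\text{sc}}$ frozen at $\log(\varepsilon\kappa^{-1}),\log(\varepsilon\kappa)$, the first step is the split $\norm{\mu-\mu^{\text{sc}}}_1=\sum_{i\in I}|\mu_i-\mu^{\text{sc}}_i|+\sum_{i\in I^\complement}|\mu_i-\mu^{\text{sc}}_i|$, handling the active and inactive blocks by different means and recombining through $(a+b)^2\le 2a^2+2b^2$. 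A Cauchy--Schwarz step $\big(\sum_{i\in L}a_i\big)^2\le|L|\sum_{i\in L}a_i^2$ on each block is what produces the prefactors $n_b=|I|$ and $n-n_b=|I^\complement|$ in~\eqref{marginal-error-mu}, reducing the task to bounding the two block sums $\sum_{i\in I}(\mu_i-\mu^{\text{sc}}_i)^2$ and $\sum_{i\in I^\complement}(\mu_i-\mu^{\text{sc}}_i)^2$.

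For the active block I would differentiate $\Psi_{\varepsilon,\kappa}$: a short computation gives, for $i\in I$, $\partial_{u_i}\Psi_{\varepsilon,\kappa}=e^{u_i}\big[(K_{(I,J)}e^{v_J})_i+\varepsilon\kappa(K_{(I,J^\complement)}\mathbf 1)_i\big]-\kappa\mu_i=\mu^{\text{sc}}_i-\kappa\mu_i$, where freezing $e^{v_j}=\varepsilon\kappa$ on $J^\complement$ reconstitutes the full product $(Ke^{v^{\text{sc}}})_i$. Since the only constraints of problem~\eqref{screen-sinkhorn_second_def} acting on $u$ are the lower bounds $e^{u_i}\ge\varepsilon\kappa^{-1}$ defining $\mathcal U_{\text{sc}}$, the KKT conditions yield $\mu^{\text{sc}}_i=\kappa\mu_i$ wherever this bound is slack and $\mu^{\text{sc}}_i\ge\kappa\mu_i$ wherever it is tight (coordinates where it is tight satisfy $e^{u^{\text{sc}}_i}=\varepsilon\kappa^{-1}$ and are absorbed into the inactive-type estimate below). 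On the slack coordinates $\mu_i-\mu^{\text{sc}}_i=(1-\kappa)\mu_i$, and the crucial observation is that the rescaling discrepancy between $\mu_i$ and $\kappa\mu_i$ is measured by the generalized Kullback--Leibler term $\mu_i(\kappa-\log\kappa-1)=\mu_i c_\kappa$; aggregating these through a Csisz\'ar--Pinsker inequality for unbalanced nonnegative vectors is what turns $\sum_{i\in I}(\mu_i-\mu^{\text{sc}}_i)^2$ into a bound of order $c_\kappa$, and hence the block into $n_b c_\kappa$.

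For the inactive block $e^{u^{\text{sc}}_i}=\varepsilon\kappa^{-1}$ is frozen, so $\mu^{\text{sc}}_i=\varepsilon\kappa^{-1}\big(\sum_{j\in J}K_{ij}e^{v^{\text{sc}}_j}+\varepsilon\kappa\sum_{j\in J^\complement}K_{ij}\big)$, while definition~\eqref{I_epsilon_kappa_J_epsilon_kappa} gives $\mu_i<\varepsilon^2\kappa^{-1}r_i(K)$. I would estimate both quantities with $0\le K_{ij}\le 1$, the floor $K_{\min}$, the explicit upper bounds on $e^{v^{\text{sc}}_j}$ from Proposition~\ref{prop:bounds_of_usc_and_vsc}, and then substitute $\varepsilon=(\xi_{n_b}\zeta_{m_b})^{1/4}$, $\kappa=\sqrt{\zeta_{m_b}/\xi_{n_b}}$ from~\eqref{epsilon_kappa}, controlling $\xi_{n_b},\zeta_{m_b}$ through the elementary bounds $mK_{\min}\le r_i(K)\le m$ and $nK_{\min}\le c_j(K)\le n$ together with $c_{\mu\nu}=\underline\mu\wedge\underline\nu$. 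The two parts of the sum over $J$ and $J^\complement$ generate the kernel-dependent terms $\tfrac{m_b}{\sqrt{nm c_{\mu\nu}}K_{\min}^{3/2}}$ and $\tfrac{m-m_b}{\sqrt{nm}K_{\min}}$; the logarithmic factors $\log(\mu_i/\mu^{\text{sc}}_i)$ entering the unbalanced-KL estimate of this block, with $\mu^{\text{sc}}_i$ of order $\varepsilon^2\sim c_{\mu\nu}/\sqrt{nm}$, produce $\log\!\big(\tfrac{\sqrt{nm}}{m_b c_{\mu\nu}^{5/2}}\big)$; and the identity $-\log K_{\min}=\norm{C}_\infty/\eta$ supplies the $\norm{C}_\infty/\eta$ term.

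Two points are delicate. First, $\mu$ and $\mu^{\text{sc}}$ are not probability vectors --- their masses differ both through the $\kappa$-scaling and through the screened mass --- so the passage from the squared/KL quantities to the $\ell_1$ error must use the unbalanced form of Pinsker's inequality, carrying the total-mass factors $\norm{\mu}_1,\norm{\mu^{\text{sc}}}_1$ explicitly; this is what controls the constant multiplying $c_\kappa$ and forbids the naive simplex version. Second, and this is the main obstacle, is the bookkeeping of the inactive block: converting the raw products $\varepsilon\kappa^{-1}\sum_j K_{ij}e^{v^{\text{sc}}_j}$ and the bound $\varepsilon^2\kappa^{-1}r_i(K)$ into the precise combination of $n,m,K_{\min},c_{\mu\nu}$ and $\norm{C}_\infty/\eta$ requires repeated, careful use of Proposition~\ref{prop:bounds_of_usc_and_vsc} and~\eqref{epsilon_kappa}, and keeping every power of $K_{\min}$ and $c_{\mu\nu}$ straight is where the real effort lies. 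Once both block sums are bounded, the Cauchy--Schwarz prefactors and $(a+b)^2\le 2a^2+2b^2$ assemble~\eqref{marginal-error-mu}, and the symmetric computation with $\kappa\leftrightarrow\kappa^{-1}$ gives~\eqref{marginal-error-nu}.
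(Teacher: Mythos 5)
You have assembled the right ingredients --- the blockwise optimality conditions giving $\mu^{\text{sc}}_i=\kappa\mu_i$ on $I_{\varepsilon,\kappa}$ and the frozen value $\frac{\varepsilon}{\kappa}\sum_j K_{ij}e^{v^{\text{sc}}_j}$ on $I^\complement_{\varepsilon,\kappa}$, the bounds of Proposition~\ref{prop:bounds_of_usc_and_vsc} for the inactive block, the identity $-\log K_{\min}=\norm{C}_\infty/\eta$ together with the orders of $\varepsilon,\kappa$, and an unbalanced Pinsker inequality --- but the way you chain them together does not cohere, and the discrepancy is not cosmetic. Your opening move reduces everything, via $(a+b)^2\le 2a^2+2b^2$ and Cauchy--Schwarz, to the squared sums $\sum_{i\in I}(\mu_i-\mu^{\text{sc}}_i)^2$ and $\sum_{i\in I^\complement}(\mu_i-\mu^{\text{sc}}_i)^2$. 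On the active block this gives, on the slack coordinates, $(1-\kappa)^2\sum_{i\in I}\mu_i^2$, and no Pinsker-type inequality converts a \emph{sum of squares} into a KL-type quantity; Pinsker bounds $\norm{\gamma-\beta}_1^2$, not $\sum_i(\gamma_i-\beta_i)^2$, by the divergence. Taken literally, your route therefore yields an active-block term of order $n_b(1-\kappa)^2$ rather than $n_b c_\kappa$, and $(1-\kappa)^2$ is \emph{not} $\bigO(c_\kappa)$ uniformly in $\kappa$ (for $\kappa$ large, $(1-\kappa)^2\sim\kappa^2$ while $c_\kappa\sim\kappa$), so the stated bound~\eqref{marginal-error-mu} does not follow. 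The sentence in which you invoke a ``Csisz\'ar--Pinsker inequality for unbalanced nonnegative vectors'' to rescue $c_\kappa$ is exactly the right instinct, but it is incompatible with the Cauchy--Schwarz reduction you performed just before it.

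The paper's actual assembly avoids this tension: it never decomposes the $\ell_1$ norm. It defines $\varrho(a,b)=b-a+a\log(a/b)$ and bounds the full divergence $d_\varrho(\mu,\mu^{\text{sc}})=\sum_i\varrho(\mu_i,\mu^{\text{sc}}_i)$ blockwise --- the active block contributes exactly $c_\kappa\sum_{i\in I_{\varepsilon,\kappa}}\mu_i\le n_bc_\kappa\max_i\mu_i$, and the inactive block is controlled term by term using~\eqref{bound_on_v} --- and only then applies the generalized Pinsker inequality of Lemma~\ref{lem:pinsker} \emph{once}, giving $\norm{\mu-\mu^{\text{sc}}}_1^2\le 7(\norm{\mu}_1\wedge\norm{\mu^{\text{sc}}}_1)\,d_\varrho(\mu,\mu^{\text{sc}})$. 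In particular the prefactors $n_b$ and $n-n_b$ in~\eqref{marginal-error-mu} come from bounding each block's divergence sum by its cardinality times the largest summand \emph{inside} $d_\varrho$, not from Cauchy--Schwarz at the $\ell_1$ level. Your remark that the KKT conditions only give $\mu^{\text{sc}}_i\ge\kappa\mu_i$ on coordinates of $I_{\varepsilon,\kappa}$ where the lower bound is tight is a fair (and more careful than the paper's) observation, but the rest of the argument needs to be restructured around $d_\varrho$ for the claimed bound to come out.
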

Proof of Proposition~\ref{proposition_error_in_marginals} is presented in supplementary material and it is based on first order optimality conditions for problem~\eqref{screen-sinkhorn_second_def} and on a generalization of Pinsker inequality (see Lemma~\ref{lem:pinsker} in supplementary).

Our second theoretical result, Proposition~\ref{prop:objective-error}, is an upper bound of the difference between objective values of \textsc{Screenkhorn} and dual of Sinkhorn divergence~\eqref{sinkhorn-dual}. 
\begin{proposition}
\label{prop:objective-error}
Let $(u^{\text{sc}}, v^{\text{sc}})$ be an optimal pair solution of problem~\eqref{screen-sinkhorn_second_def} and $(u^\star, v^\star)$ is the pair solution of dual of Sinkhorn divergence~\eqref{sinkhorn-dual}. Then we have 
\begin{align*}
\Psi_{\varepsilon, \kappa}(u^{\text{sc}} ,v^{\text{sc}}) -\Psi(u^\star, v^\star)
= \bigO\big(R(\norm{\mu - \mu^{\text{sc}}}_1 + \norm{\nu - \nu^{\text{sc}}}_1 + \omega_{\kappa})\big).
\end{align*}
where $R = \frac{\norm{C}_\infty}{\eta} + \log\big(\frac{(n\vee m)^2}{nmc_{\mu\nu}^{7/2}}\big)$ and $\omega_{\kappa} = |1- \kappa|\norm{\mu^{\text{sc}}}_1 + |1 - \kappa^{-1}|\norm{\nu^{\text{sc}}}_1 + |1- \kappa| + |1 - \kappa^{-1}|$.
\end{proposition}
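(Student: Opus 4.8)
The plan is to route the comparison through the full-dimensional scaled objective $\Psi_\kappa$ from~\eqref{screen-sinkhorn}. First I would undo the screening by extending $(u^{\text{sc}}, v^{\text{sc}})$ to full vectors $\widetilde u \in \R^n$, $\widetilde v\in\R^m$ that agree with $(u^{\text{sc}}, v^{\text{sc}})$ on $(I_{\varepsilon,\kappa}, J_{\varepsilon,\kappa})$ and equal the thresholds $\log(\varepsilon\kappa^{-1})$, $\log(\varepsilon\kappa)$ on the complementary sets. A direct block-by-block expansion of $\mathbf 1_n^\top B(\widetilde u,\widetilde v)\mathbf 1_m$ together with the definition of $\Xi$ yields the bookkeeping identity $\Psi_{\varepsilon,\kappa}(u^{\text{sc}}, v^{\text{sc}}) = \Psi_\kappa(\widetilde u, \widetilde v)$, and in particular $\mu^{\text{sc}} = B(\widetilde u,\widetilde v)\mathbf 1_m$ and $\nu^{\text{sc}} = B(\widetilde u,\widetilde v)^\top\mathbf 1_n$. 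I would then split the gap as $\Psi_\kappa(\widetilde u,\widetilde v) - \Psi(u^\star, v^\star) = A + B$, where $A = \Psi_\kappa(\widetilde u,\widetilde v) - \Psi(\widetilde u,\widetilde v) = -(\kappa-1)\inr{\widetilde u, \mu} - (\kappa^{-1}-1)\inr{\widetilde v, \nu}$ isolates the effect of the $(\kappa\mu,\nu/\kappa)$-rescaling and $B = \Psi(\widetilde u,\widetilde v) - \Psi(u^\star, v^\star) \ge 0$ is the suboptimality of the extended iterate for the exact dual~\eqref{sinkhorn-dual}.

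For the term $B$ I would exploit convexity of $\Psi$ and the vanishing of its gradient at the optimum. Since $\nabla_u\Psi(u,v) = B(u,v)\mathbf 1_m - \mu$ and $\nabla_v\Psi(u,v) = B(u,v)^\top\mathbf 1_n - \nu$, the gradient at $(\widetilde u,\widetilde v)$ is exactly $(\mu^{\text{sc}} - \mu, \nu^{\text{sc}} - \nu)$, while $\nabla\Psi(u^\star, v^\star) = 0$. The first-order convexity inequality then gives $0\le B \le \inr{\mu^{\text{sc}} - \mu, \widetilde u - u^\star} + \inr{\nu^{\text{sc}} - \nu, \widetilde v - v^\star}$, and Hölder's inequality bounds the right-hand side by $\norm{\mu - \mu^{\text{sc}}}_1\norm{\widetilde u - u^\star}_\infty + \norm{\nu - \nu^{\text{sc}}}_1\norm{\widetilde v - v^\star}_\infty$, which produces the marginal-violation contribution $O(R(\norm{\mu - \mu^{\text{sc}}}_1 + \norm{\nu - \nu^{\text{sc}}}_1))$ once the $\ell_\infty$-factors are controlled by $R$.

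For the term $A$ I would use the box bounds of Proposition~\ref{prop:bounds_of_usc_and_vsc}: taking logarithms of~\eqref{bound_on_u}--\eqref{bound_on_v} and using $-\log K_{\min}\le\norm{C}_\infty/\eta$ shows that $\norm{u^{\text{sc}}_{I_{\varepsilon,\kappa}}}_\infty$, $\norm{v^{\text{sc}}_{J_{\varepsilon,\kappa}}}_\infty$ and the thresholds $|\log(\varepsilon\kappa^{-1})|$, $|\log(\varepsilon\kappa)|$ are all $O(R)$. To make $\norm{\mu^{\text{sc}}}_1$ appear I would invoke the optimality (marginal) identity from the remark after Lemma~\ref{lemma_actives_sets}, namely $\mu^{\text{sc}}_i = \kappa\mu_i$ for active $i\in I_{\varepsilon,\kappa}$, so that $\sum_{i\in I_{\varepsilon,\kappa}}\widetilde u_i\mu_i = \kappa^{-1}\sum_{i\in I_{\varepsilon,\kappa}} u^{\text{sc}}_i\mu^{\text{sc}}_i$ is bounded in absolute value by $\kappa^{-1}\norm{u^{\text{sc}}}_\infty\norm{\mu^{\text{sc}}}_1 = O(R\norm{\mu^{\text{sc}}}_1)$, while the inactive coordinates contribute at most $|\log(\varepsilon\kappa^{-1})| = O(R)$. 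Together with the symmetric estimate for $\inr{\widetilde v,\nu}$ this gives $|A| = O(R(|1-\kappa|\norm{\mu^{\text{sc}}}_1 + |1-\kappa^{-1}|\norm{\nu^{\text{sc}}}_1 + |1-\kappa| + |1-\kappa^{-1}|)) = O(R\,\omega_\kappa)$. Summing the two estimates yields $|\Psi_{\varepsilon,\kappa}(u^{\text{sc}}, v^{\text{sc}}) - \Psi(u^\star, v^\star)| \le |A| + B = O(R(\norm{\mu - \mu^{\text{sc}}}_1 + \norm{\nu - \nu^{\text{sc}}}_1 + \omega_\kappa))$.

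\textbf{Main obstacle.} I expect the delicate point to be the uniform $\ell_\infty$ control of the dual variables, in particular $\norm{\widetilde u - u^\star}_\infty$ and $\norm{\widetilde v - v^\star}_\infty$ in the bound for $B$. The screened iterate $\widetilde u, \widetilde v$ is pinned inside an explicit box by Proposition~\ref{prop:bounds_of_usc_and_vsc}, but the exact optimum $(u^\star, v^\star)$ is determined only up to the shift $(u,v)\mapsto(u + t\mathbf 1_n, v - t\mathbf 1_m)$, which leaves $\Psi$ invariant because $\norm{\mu}_1 = \norm{\nu}_1 = 1$. One must therefore fix this gauge and establish a matching a priori bound $\norm{u^\star}_\infty, \norm{v^\star}_\infty = O(R)$ so that the differences are genuinely $O(R)$; with that in hand, the remaining work is the routine verification that the logarithms coming from the box bounds and the thresholds collapse into the stated $R = \norm{C}_\infty/\eta + \log((n\vee m)^2/(nmc_{\mu\nu}^{7/2}))$.
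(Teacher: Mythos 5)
Your proposal is correct and follows essentially the same route as the paper: after regrouping, your decomposition into $A+B$ coincides term by term with the paper's gradient-inequality bound (their $\inr{\kappa \, u^{\text{sc}}-u^\star,\mu^{\text{sc}}-\mu}+(1-\kappa)\inr{u^{\text{sc}},\mu^{\text{sc}}}$ is algebraically identical to your $\inr{\widetilde u-u^\star,\mu^{\text{sc}}-\mu}+(1-\kappa)\inr{\widetilde u,\mu}$), followed by H\"older's inequality and the same $\ell_\infty$ controls from Proposition~\ref{prop:bounds_of_usc_and_vsc} and $\log(1/K_{\min})=\norm{C}_\infty/\eta$. The single ingredient you flag as the main obstacle --- a gauge-fixed a priori bound $\norm{u^\star}_\infty,\norm{v^\star}_\infty=\bigO(R)$ --- is precisely what the paper imports from Lemma 3.2 of Lin et al.\ (2019), so your sketch is complete modulo that citation.
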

Proof of Proposition~\ref{prop:objective-error} is exposed in the supplementary material.
Comparing to some other analysis results of this quantity, see for instance Lemma 2 in~\cite{dvurechensky18aICML} and Lemma 3.1 in~\cite{lin2019}, our bound involves an additional term $\omega_{\kappa}$ (with $\omega_1 =0)$. {To better characterize $\omega_\kappa$, a control of the $\ell_1$-norms of the screened marginals $\mu^{\text{sc}}$ and $\nu^{\text{sc}}$ are given in Lemma 2 in the supplementary material.}

\section{Numerical experiments} \label{sec:numerical_experiments}

In this section, we present some numerical analyses of our
\textsc{Screenkhorn} algorithm and show how it behaves when
integrated into some complex machine learning pipelines.

\subsection{Setup}

We have implemented our \textsc{Screenkhorn} algorithm in Python and used the L-BFGS-B of
Scipy. Regarding the machine-learning based comparison, we have based our code
on the ones of Python Optimal Transport toolbox (POT)~\citep{flamary2017pot} and just replaced the \texttt{sinkhorn} function call with a \texttt{screenkhorn} one. We have considered the POT's default \textsc{Sinkhorn} stopping criterion parameters and for \textsc{Screenkhorn}, the L-BFGS-B algorithm is stopped when the 
largest component of the projected gradient is smaller than $10^{-6}$, when the number of iterations {or the} number of objective function evaluations reach $10^{5}$. For all applications, we have set $\eta=1$ unless otherwise specified.

\subsection{Analysing on toy problem}
\label{subsec:analysing_toy_problem}

We compare \textsc{Screenkhorn} to \textsc{Sinkhorn} as implemented in POT toolbox\footnote{\url{https://pot.readthedocs.io/en/stable/index.html}} on  a synthetic example. The dataset we use consists of source samples generated from a bi-dimensional gaussian mixture and target samples following the same distribution but with different gaussian means. We consider an unsupervised domain adaptation using optimal transport with entropic regularization.  Several settings are explored: different values of $\eta$, the regularization parameter, the allowed budget $\frac{n_b}{n} = \frac{m_b}{m}$ ranging from $0.01$ to $0.99$, different values of $n$ and $m$.
 We empirically measure  marginal violations as the norms $\norm{{\mu} -{\mu}^{\text{sc}}}_1$ and $\norm{{\nu} -{\nu}^{\text{sc}}}_1$, running time expressed as $\frac{T_{\textsc{Sinkhorn}}}{T_{\text{\textsc{Screenkhorn}}}}$ and the relative divergence difference $| \inr{C, P^\star} - \inr{C, P^{\text{sc}}}|/\inr{C, P^\star}$ between \textsc{Screenkhorn} and \textsc{Sinkhorn}, where $P^\star = \Delta(e^{u^\star}) K \Delta(e^{v^\star})$ and $P^{\text{sc}} = \Delta(e^{u^{\text{sc}}}) K \Delta(e^{v^{\text{sc}}}).$
Figure \ref{fig:margin_expe} summarizes the observed behaviors of both algorithms under these settings. We choose to only report results for $n=m=1000$ as we get similar findings for other values of $n$ and $m$. 

\begin{figure*}[t]
	\begin{center}
				~\hfill\includegraphics[width=0.24\textwidth]{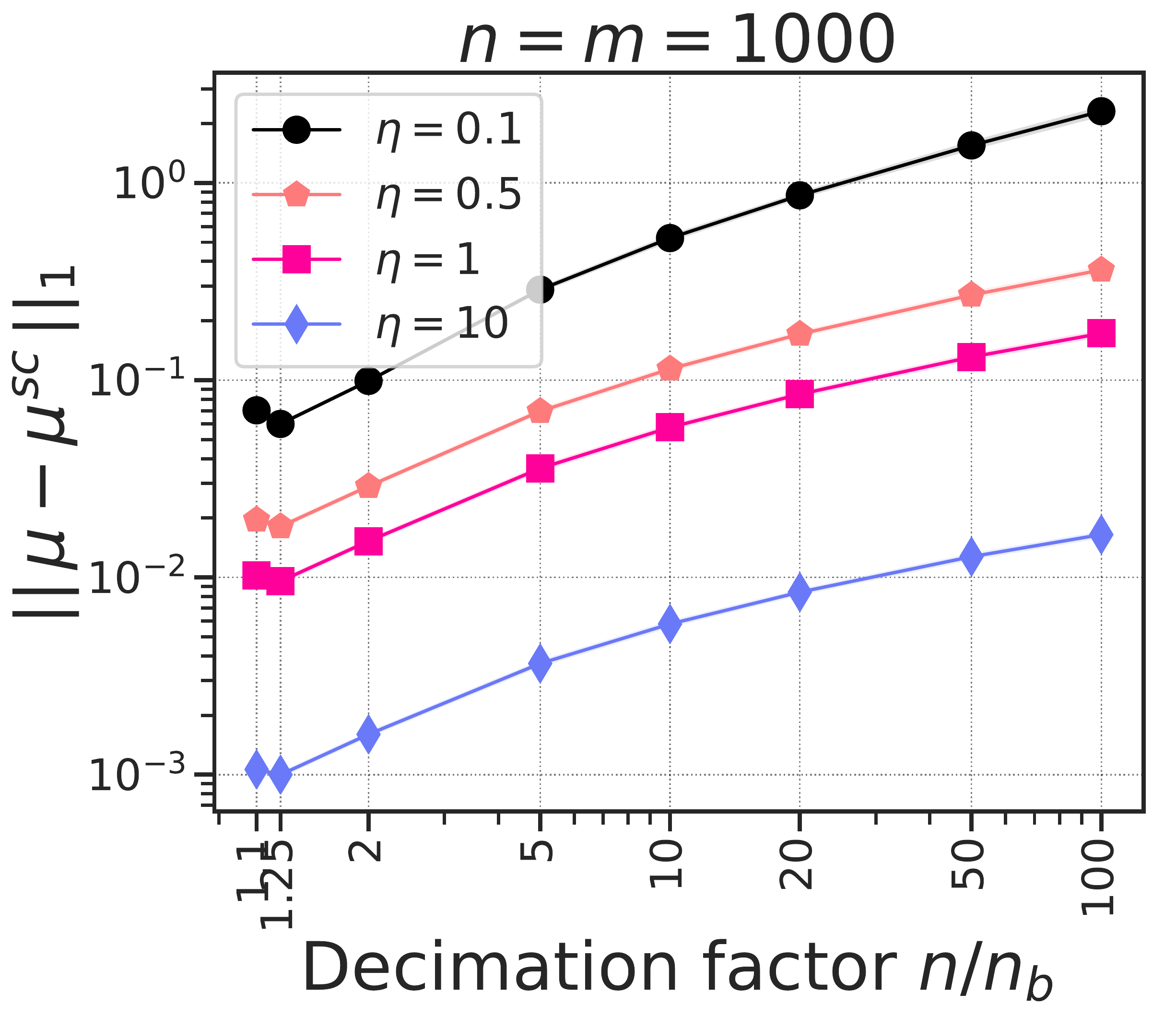}
		 \hfill
		\includegraphics[width=0.24\textwidth]{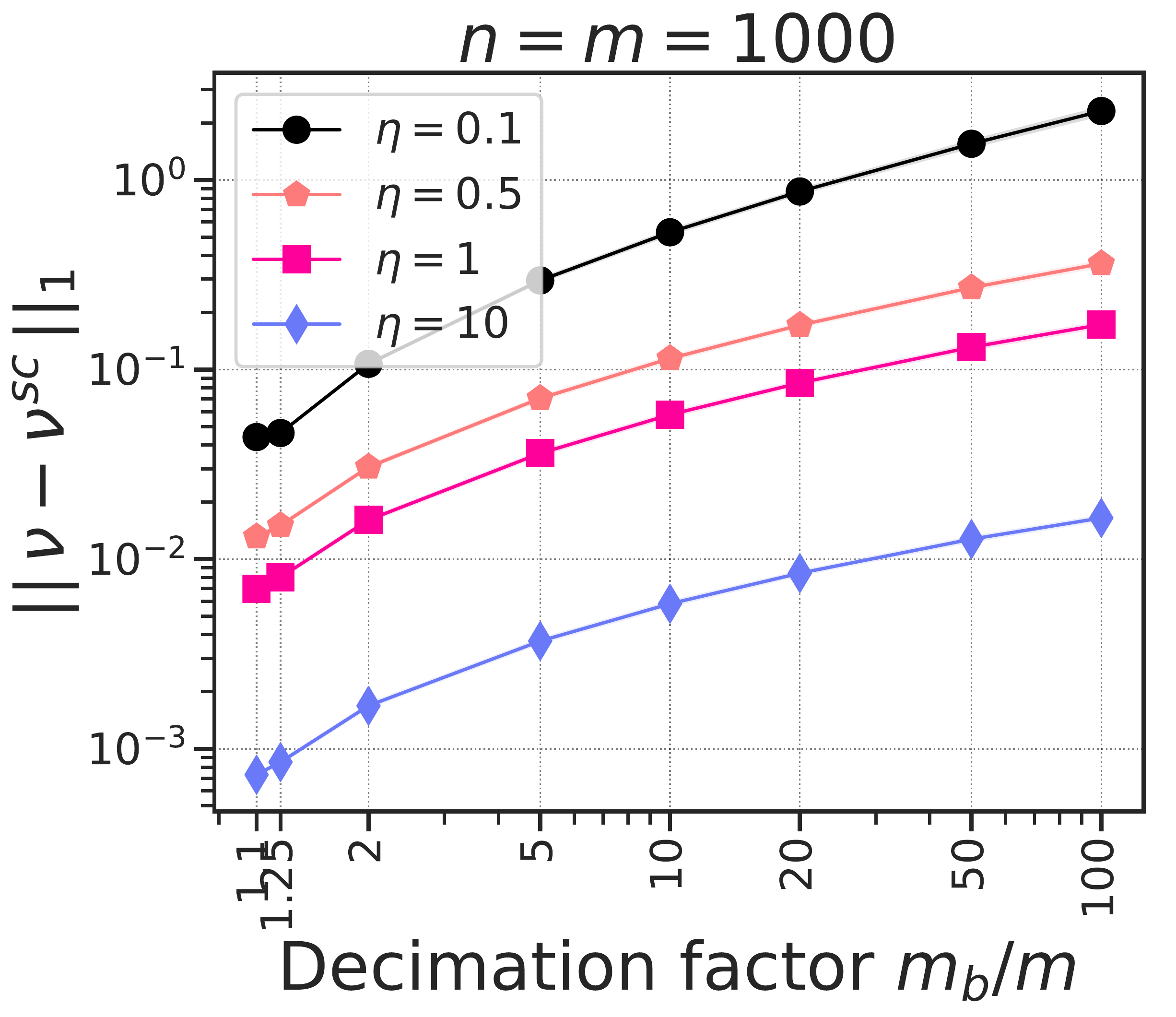} 
		\hfill
		\includegraphics[width=0.24\textwidth]{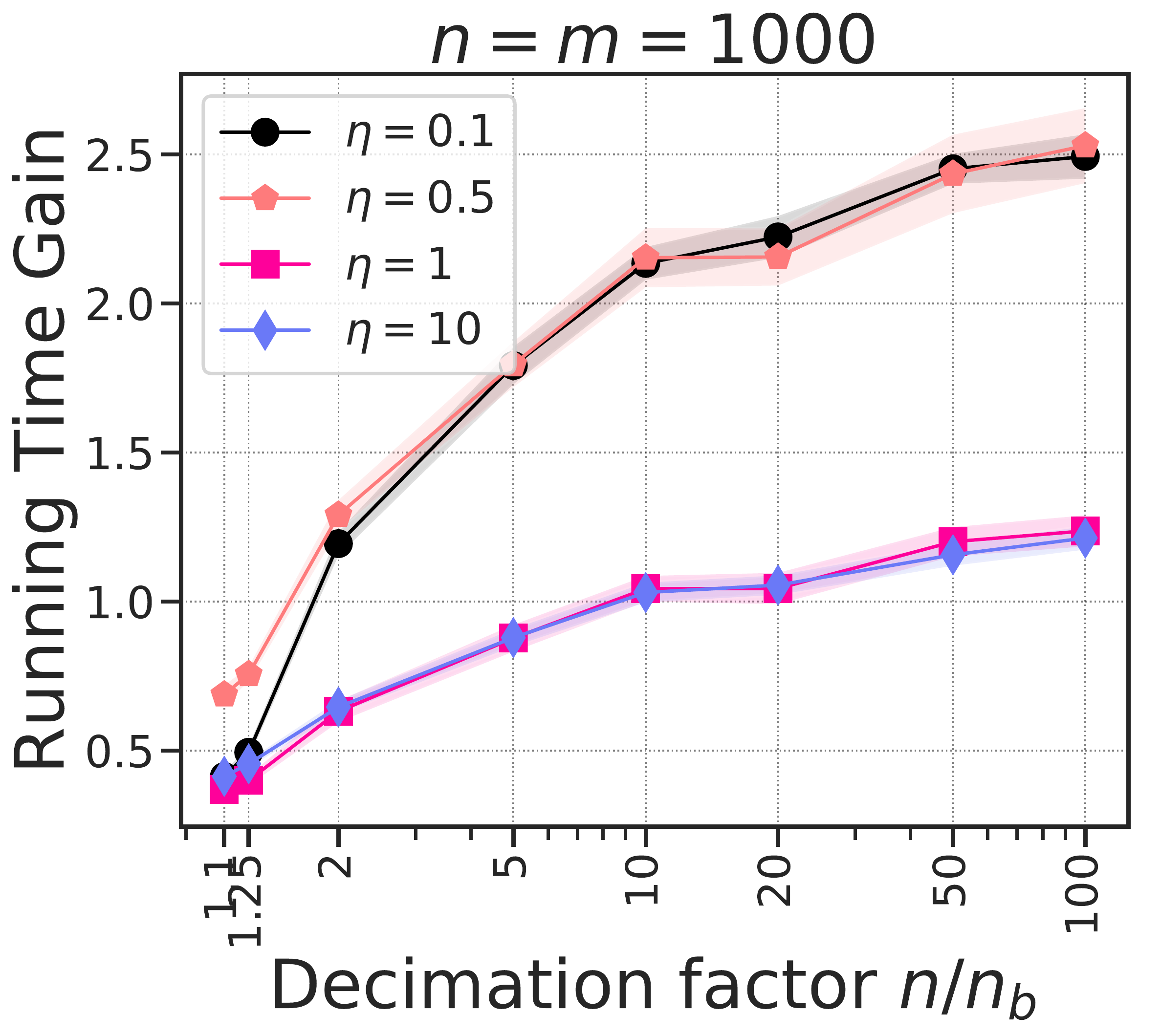}
		\hfill
		\includegraphics[width=0.24\textwidth]{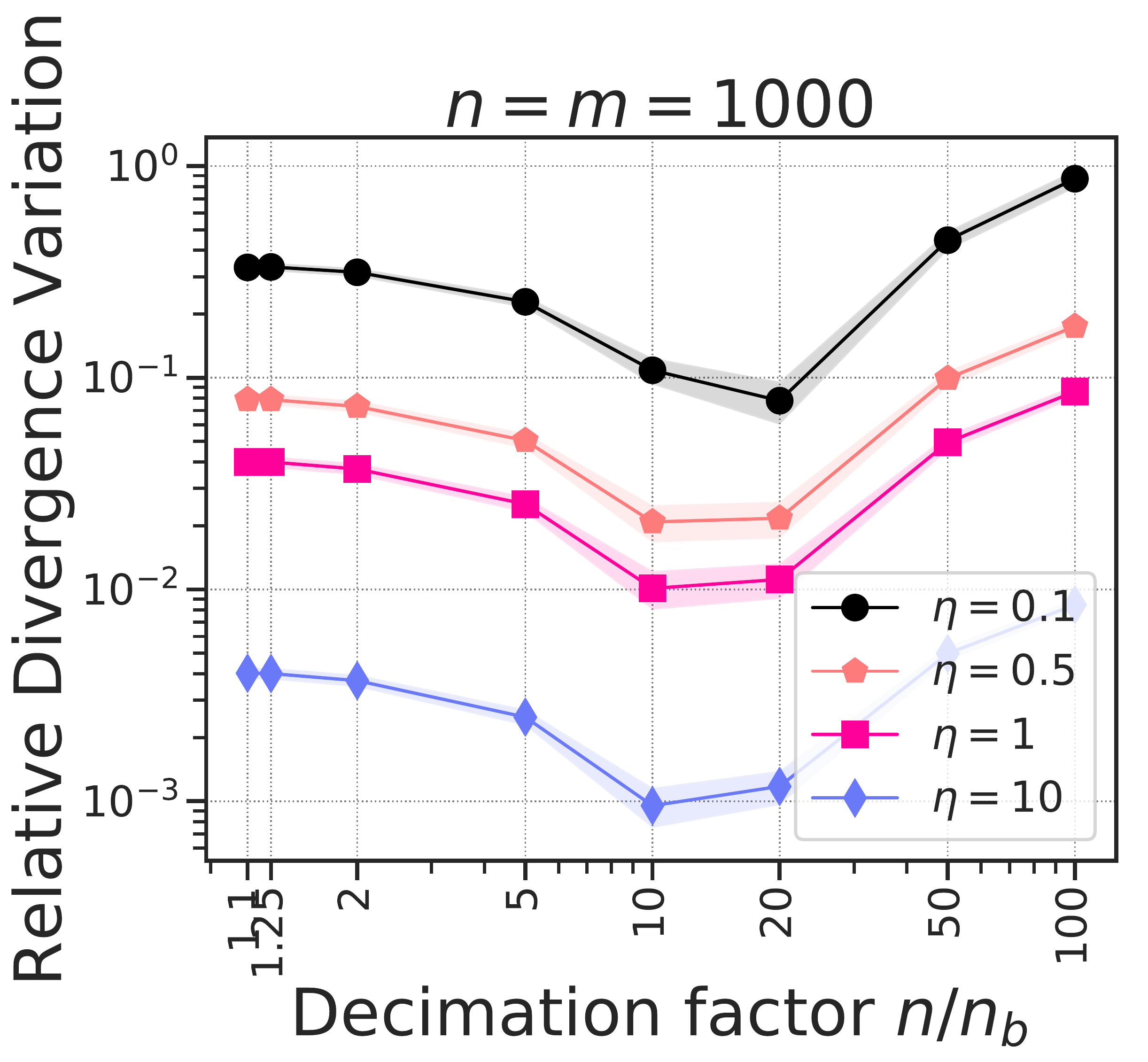}\hfill~
	\end{center}
	\caption{Empirical evaluation of \textsc{Screenkhorn} vs \textsc{Sinkhorn} for normalized cost matrix \emph{i.e.} $\norm{C}_\infty=1$. (most-lefts): marginal violations in relation with the budget of points on $n$ and $m$ .  (center-right) ratio of computation times    $\frac{T_{\textsc{Sinkhorn}}}{T_{\text{\textsc{Screenkhorn}}}}$ and, (right) relative divergence variation. The results are averaged over $30$ trials.} 
		\label{fig:margin_expe}
\end{figure*}
\textsc{Screenkhorn} provides good approximation of the marginals $\mu$ and $\nu$ for ``high'' values of the regularization parameter $\eta$ ($\eta > 1$). The approximation quality diminishes for small $\eta$. As expected $\norm{{\mu} -{\mu}^{\text{sc}}}_1$ and $\norm{{\nu} -{\nu}^{\text{sc}}}_1$ converge towards zero when increasing the budget of points. Remarkably marginal violations are almost negligible whatever the budget for high $\eta$.  According to computation gain, \textsc{Screenkhorn} is almost  2 times faster than \textsc{Sinkhorn} at high decimation factor $n/n_b$ (low budget) while the reverse holds when $n/n_b$ gets close to 1.  Computational benefit of \textsc{Screenkhorn} also depends on $\eta$ with appropriate values $\eta \leq 1$. Finally except for $\eta=0.1$ \textsc{Screenkhorn} achieves a  divergence $\inr{C, P}$ close to the one of Sinkhorn showing that our static screening test provides a
reasonable approximation of the Sinkhorn divergence. As such, we believe that  \textsc{Screenkhorn} will be practically useful in cases where modest accuracy on the divergence is sufficient. This may be the case of a loss function for a gradient descent method (see next section).

\subsection{Integrating \textsc{Screenkhorn} into machine learning pipelines}

Here, we analyse the impact of using \textsc{Screenkhorn}
instead of \textsc{Sinkhorn} in a complex machine learning pipeline. Our two applications
are a dimensionality reduction technique, denoted as Wasserstein Discriminant Analysis (WDA), based on Wasserstein distance approximated
through Sinkhorn divergence \citep{flamary2018WDA} and a domain-adaptation using optimal transport mapping \citep{courty2017optimal}, named OTDA. 

WDA aims at finding a linear projection which minimize the ratio of distance between intra-class samples and distance inter-class samples, where the distance is understood
in a Sinkhorn divergence sense. We have used a toy problem involving Gaussian classes with $2$ discriminative features and $8$ noisy features and the MNIST dataset. For the
former problem, we aim at find the best two-dimensional linear subspace in a WDA sense whereas for MNIST, we look for a subspace of dimension $20$ starting from the original
$728$ dimensions.  Quality of the retrieved subspace are evaluated using classification task based on a $1$-nearest neighbour approach.

Figure \ref{fig:wda} presents the average gain (over $30$ trials) in computational time we get as the number of examples evolve and for different decimation factors of the \textsc{Screenkhorn} problem.
Analysis of the quality of the subspace have been deported to the supplementary material (see Figure~~\ref{fig:wda_gain}), but we can remark a small loss of performance of \textsc{Screenkhorn} for the toy problem, while
for MNIST, accuracies are equivalent regardless of the decimation factor.  We can note
that the minimal gains are respectively $2$ and $4.5$ for the toy and MNIST problem
whereas the maximal gain for $4000$ samples is slightly larger than an order of magnitude. 

\begin{figure*}[t]
	\centering
			~\hfill
	\includegraphics[width=0.37\textwidth]{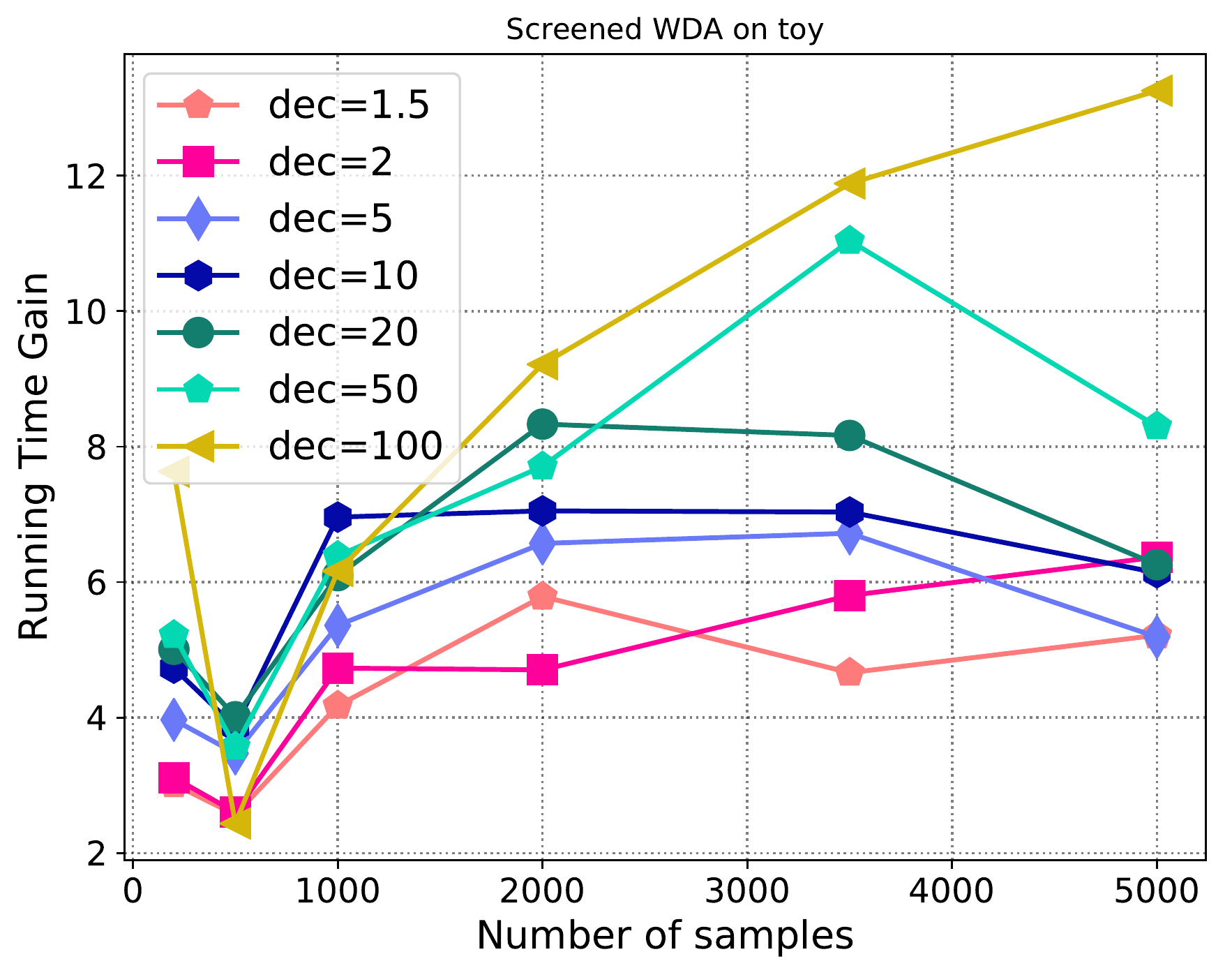}~\hfill~
	\includegraphics[width=0.37\textwidth]{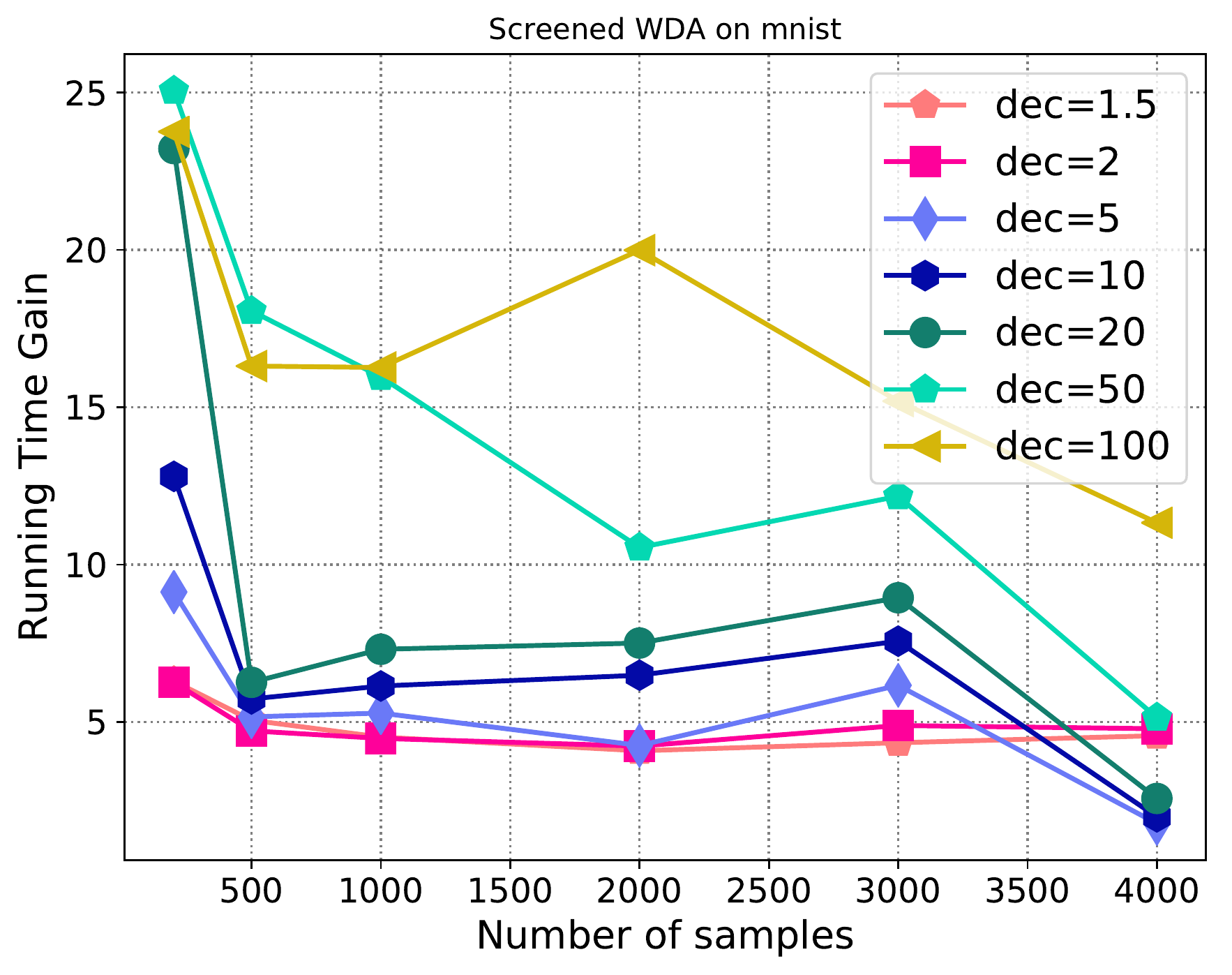}
    \hfill~
	\caption{Wasserstein Discriminant Analysis : running time gain for (left) a toy dataset and (right) MNIST as a function of the number of examples and the data decimation factor in \textsc{Screenkhorn}.}
	\label{fig:wda}
\end{figure*}
\begin{figure*}[t]
	\centering

			~\hfill\includegraphics[width=0.37\textwidth]{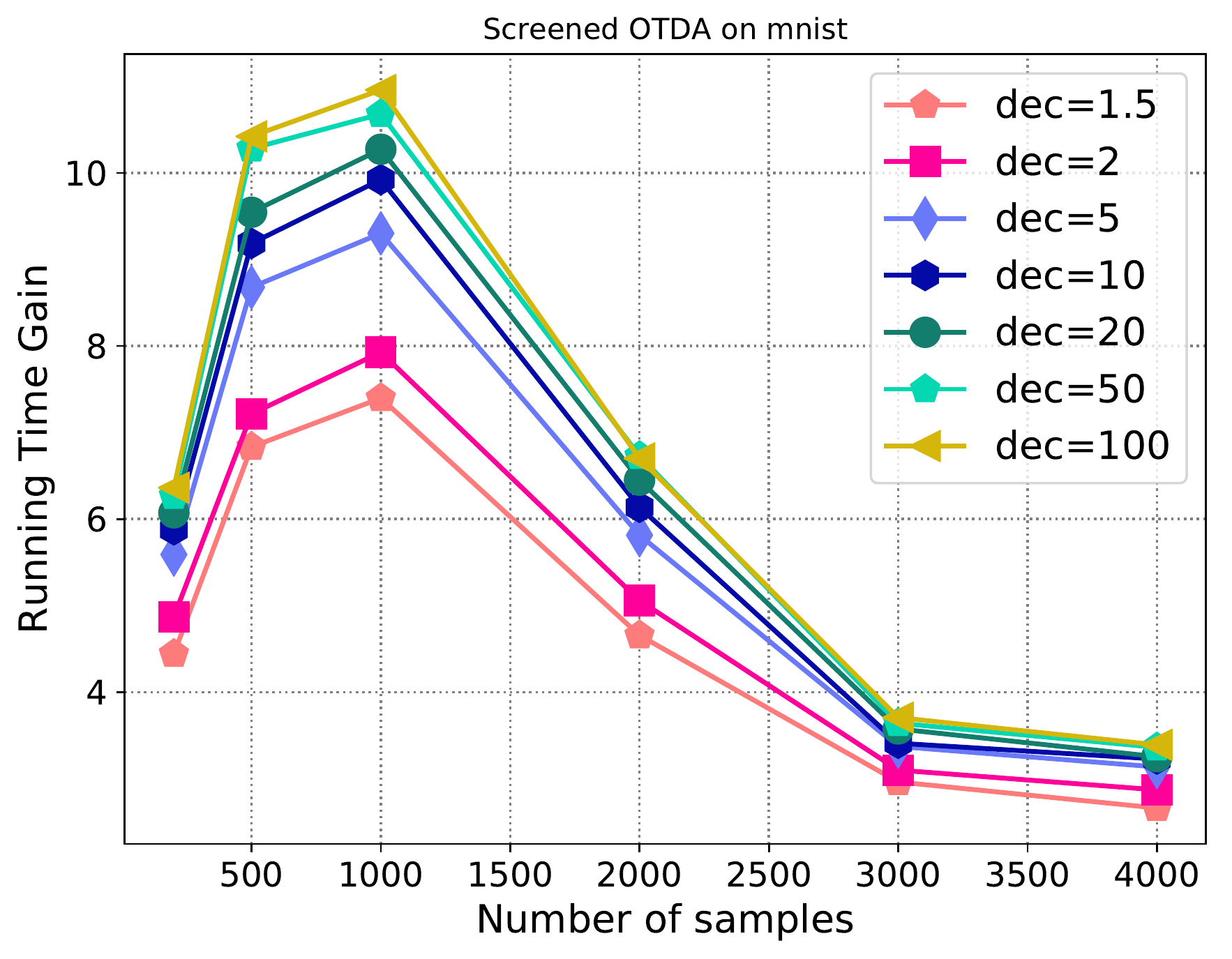}~\hfill~
	\includegraphics[width=0.37\textwidth]{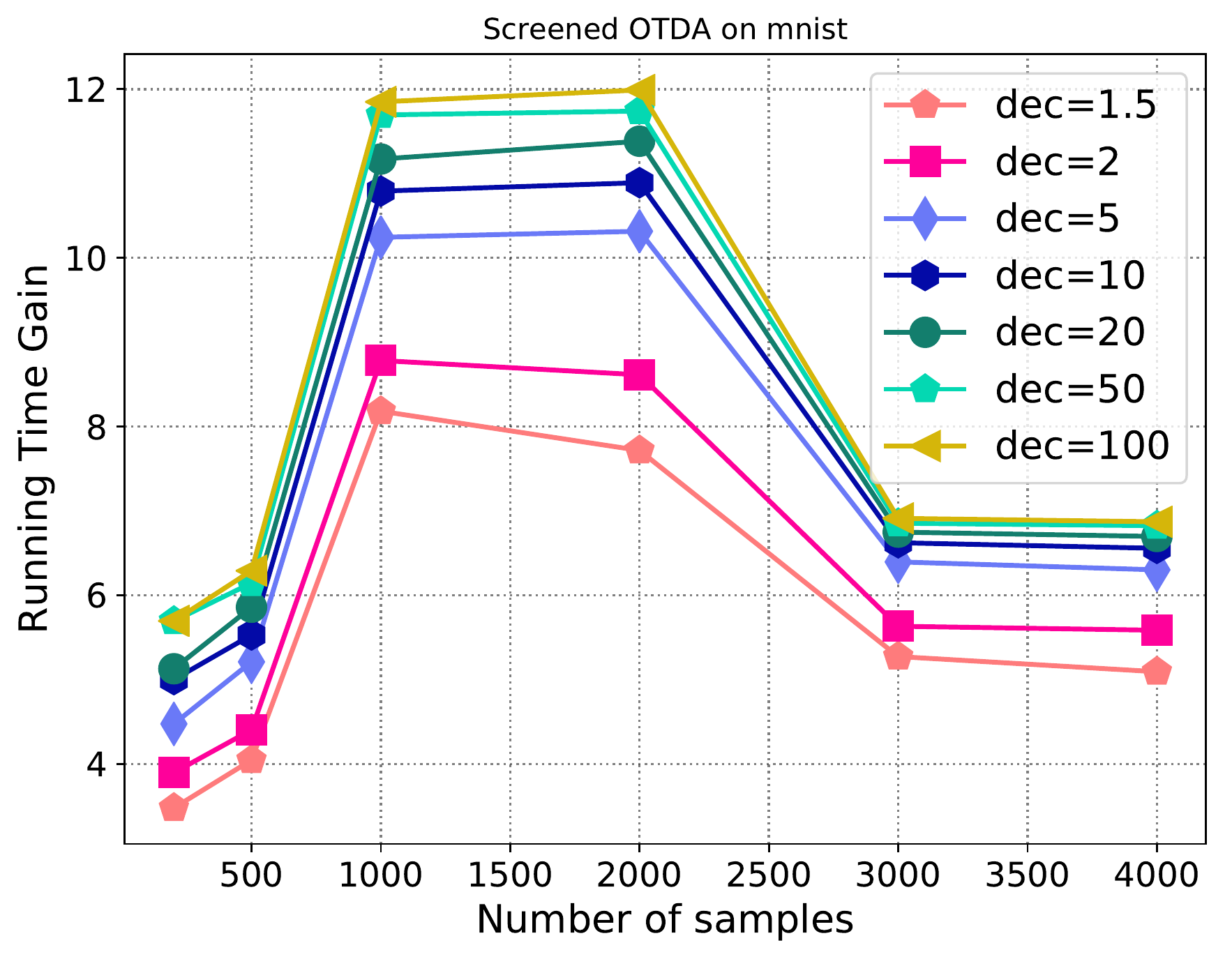}\hfill~
	\caption{OT Domain adaptation : running time gain  for MNIST as a function of the number of examples and the data decimation factor in \textsc{Screenkhorn}. Group-lasso hyperparameter values (left) $1$. (right) $10$.}
	\label{fig:otda}
\end{figure*}

For the OT based domain adaptation problem, we have considered the
OTDA with $\ell_{\frac 12,1}$ group-lasso regularizer that helps in exploiting available labels in the source domain. The problem is solved using a majorization-minimization approach 
for handling the non-convexity of the problem. Hence, at each iteration, a \textsc{Sinkhorn}/\textsc{Screenkhorn} has to be computed and the number of iteration is
sensitive to the regularizer strength.  As a domain-adaptation problem, we have
used a MNIST to USPS problem in which features have been 
{computed from the first layers}
 of a domain adversarial neural networks \citep{ganin2016domain} before full convergence of the networks (so as to leave room for OT adaptation). 
Figure \ref{fig:otda} reports the gain in running time for $2$ different values
of the group-lasso regularizer hyperparameter, while the curves of performances are
reported in the supplementary material. We can note that for all the  \textsc{Screenkhorn} with different decimation factors, the gain in computation goes from a factor of $4$ to $12$, {without any loss of the accuracy performance.}


\section{Conclusion}

The paper introduces a novel efficient approximation of the Sinkhorn divergence
based on a screening strategy. Screening some of the Sinkhorn dual variables
has been made possible by defining a novel constrained dual problem and by 
carefully analyzing its optimality conditions. From the latter, we derived some
sufficient conditions depending on the ground cost matrix, that some dual variables are smaller than a given threshold. Hence, we need just to solve a restricted
dual Sinkhorn problem using an off-the-shelf L-BFGS-B algorithm. We also provide
some theoretical guarantees of the quality of the approximation with respect to
the number of variables that have been screened. Numerical experiments show 
the behaviour of our \textsc{Screenkhorn} algorithm and computational time gain it can
achieve when integrated in some complex machine learning pipelines.

\subsubsection*{Acknowledgments}

This work was supported by grants from the Normandie Projet GRR-DAISI, European funding FEDER DAISI and OATMIL ANR-17-CE23-0012 Project of the French National Research Agency (ANR).

\small

\bibliographystyle{chicago}

\newpage
\section{Supplementary material}

\subsection{Proof of Lemma~\ref{lemma_actives_sets}}

Since the objective function $\Psi_{\kappa}$ is convex with respect to $(u,v)$, the set of optima of problem~\eqref{screen-sinkhorn} is non empty.
Introducing two dual variables $\lambda \in \R^n_{+}$ and $\beta \in \R^m_{+}$ for each constraint, the Lagrangian of problem~\eqref{screen-sinkhorn} reads as 
\begin{equation*}
  \mathscr{L}(u,v, \lambda, \beta) = \frac \varepsilon\kappa\inr{\lambda, \mathbf{1}_n} + \varepsilon\kappa\inr{\beta, \mathbf{1}_m} + \mathbf{1}_n^\top B(u,v) \mathbf{1}_m - \inr{\kappa u, \mu} - \inr{\frac v\kappa, \nu} -\inr{\lambda,e^{u}} - \inr{\beta,e^{v}}
\end{equation*}
First order conditions then yield that the Lagrangian multiplicators solutions $\lambda^{*}$ and $\beta^{*}$ satisfy 
\begin{align*}
  &\nabla_u\mathscr{L}(u^{*},v^{*}, \lambda^{*}, \beta^{*})=  e^{u^{*}} \odot(Ke^{v^{*}} - \lambda^{*}) - \kappa\mu = \mathbf 0_n,\\
  & \text{ and } \nabla_v\mathscr{L}(u^{*},v^{*}, \lambda^{*}, \beta^{*})=  e^{v^{*}} \odot(K^\top e^{u^{*}} - \beta) - \frac \nu\kappa = \mathbf 0_m
\end{align*}
which leads to 
\begin{align*}
  &\lambda^{*} = K e^{v^{*}} - \kappa\mu \oslash e^{u^{*}} \text{ and }
  \beta^{*} = K^\top e^{u^{*}} - \nu \oslash \kappa e^{v^{*}}
\end{align*}

For all $i=1, \ldots, n$ we have that $e^{u^{*}_i} \geq \frac\varepsilon\kappa$. Further, the condition on the dual variable $\lambda^{*}_i > 0$  ensures that $e^{u^{*}_i} = \frac\varepsilon\kappa$ and hence $i \in I^\complement_{\varepsilon,\kappa}$. We have that $\lambda^{*}_i > 0$ is equivalent to $e^{u^{*}_i}r_i(K) e^{v^{*}_j} >  \kappa{\mu_i}$ which  is satisfied when $\varepsilon^2r_i(K) >  \kappa{\mu_i}.$  
In a symmetric way we can prove the same statement for $e^{v^{*}_j}$.

\subsection{Proof of Proposition~\ref{prop:bounds_of_usc_and_vsc}}

We prove only the first statement~\eqref{bound_on_u} and similarly we can prove the second one~\eqref{bound_on_v}.
For all $i\in I_{\varepsilon,\kappa}$, we have $e^{u^{\text{sc}}_i} > \frac \varepsilon\kappa$ or $e^{u^{\text{sc}}_i} = \frac \varepsilon\kappa$. In one hand, if $e^{u^{\text{sc}}_i} > \frac \varepsilon\kappa$ then according to the optimality conditions $\lambda^{\text{sc}}_i = 0,$ which implies $e^{u^{\text{sc}}_i} \sum_{j=1}^m K_{ij} e^{v^{\text{sc}}_j} = \kappa\mu_i$.
In another hand, we have 
\begin{align*}
e^{u^{\text{sc}}_i} \min_{i,j}K_{ij} \sum_{j=1}^m e^{v^{\text{sc}}_j} \leq e^{u^{\text{sc}}_i} \sum_{j=1}^m K_{ij} e^{v^{\text{sc}}_j} = \kappa\mu_i.
\end{align*}
We further observe that $\sum_{j=1}^m e^{v^{\text{sc}}_j} = \sum_{j \in J_{\varepsilon,\kappa}} e^{v^{\text{sc}}_j} + \sum_{j \in J^\complement_{\varepsilon,\kappa}} e^{v^{\text{sc}}_j} \geq \varepsilon\kappa |J_{\varepsilon,\kappa}| + \varepsilon\kappa |J^\complement_{\varepsilon,\kappa}|=\varepsilon\kappa m.$ Then
\begin{equation*}
\max_{i\in I_{\varepsilon,\kappa}} e^{u^{\text{sc}}_i} \leq \frac{\max_{i\in I_{\varepsilon,\kappa}}\mu_i}{m\varepsilon K_{\min}}.
\end{equation*}
Analogously, one can obtain for all $j\in J_{\varepsilon,\kappa}$
\begin{equation}
\label{upper_bound_v_potential}
\max_{j\in J_{\varepsilon,\kappa}}e^{v^{\text{sc}}_j} \leq  \frac{\max_{j \in J_{\varepsilon,\kappa}} \nu_j}{n\varepsilon K_{\min}}.
\end{equation}

Now, since $K_{ij} \leq 1$, we have 
\begin{align*}
e^{u^{\text{sc}}_i} \sum_{j=1}^m e^{v^{\text{sc}}_j} \geq e^{u^{\text{sc}}_i} \sum_{j=1}^m K_{ij}e^{v^{\text{sc}}_j} = \kappa\mu_i.
\end{align*}
Using~\eqref{upper_bound_v_potential}, we get 
\begin{align*}
\sum_{j=1}^m e^{v^{\text{sc}}_j} &= \sum_{j \in J_{\varepsilon,\kappa}} e^{v^{\text{sc}}_j} + \sum_{j \in J^\complement_{\varepsilon,\kappa}} e^{v^{\text{sc}}_j}
\leq \varepsilon\kappa |J^\complement_{\varepsilon,\kappa}| + \frac{\max_{j\in J_{\varepsilon,\kappa}} \nu_j}{n\varepsilon K_{\min}} |J_{\varepsilon,\kappa}|.
\end{align*}
Therefore,
\begin{align*}
\min_{i \in I_{\varepsilon,\kappa}} e^{u^{\text{sc}}_i}  \geq \frac \varepsilon\kappa \vee \frac{\kappa\min_{I_{\varepsilon,\kappa}}\mu_i}{\varepsilon\kappa (m-m_b) + \frac{\max_{j\in J_{\varepsilon,\kappa}} \nu_j}{n\varepsilon K_{\min}} m_b}.
\end{align*}

\subsection{Proof of Proposition~\ref{proposition_error_in_marginals}}

We define the distance function $\varrho: \R_+ \times \R_+ \mapsto [0, \infty]$ by $\varrho(a,b) = b - a + a \log(\frac ab).$
While $\varrho$ is not a metric, it is easy to see that $\varrho$ is not nonnegative and satisfies $\varrho(a,b) =0$ iff $a=b$.
The violations are computed through the following function: 
\begin{equation*}
	d_{\varrho}(\gamma,\beta) = \sum_{i=1}^n \varrho(\gamma_i,\beta_i), \text{ for } \gamma, \beta \in \R^n_+.
\end{equation*}
Note that if $\gamma,\beta$ are two vectors of positive entries, $d_{\varrho}(\gamma,\beta)$ will return some measurement on how far they are from each other. The next Lemma is from~\cite{khalilabid2018} (see Lemma 7 herein).
\begin{lemma}
\label{lem:pinsker}
For any $\gamma, \beta \in \R^n_+$, the following generalized Pinsker inequality holds 
\begin{align*}
\norm{\gamma - \beta}_1 \leq \sqrt{7 (\norm{\gamma}_1\wedge \norm{\beta}_1)d_{\varrho}(\gamma,\beta)}.
\end{align*}
\end{lemma}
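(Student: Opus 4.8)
To establish Lemma~\ref{lem:pinsker}, the first thing I would note is that $\varrho(a,b)=b-a+a\log(a/b)$ is the Bregman divergence generated by the negative entropy $t\mapsto t\log t-t$, so $d_{\varrho}$ is a generalized (unnormalized) Kullback--Leibler divergence and the claim is an unnormalized version of Pinsker's inequality. Since the statement is quoted verbatim from \citep[Lemma~7]{khalilabid2018}, the most economical route is simply to invoke it; to reprove it from scratch, the plan is to mimic the classical proof for probability vectors, combining a one-dimensional ``scalar Pinsker'' bound with a Cauchy--Schwarz argument, the only genuinely new ingredient being the bookkeeping of the total masses $\norm{\gamma}_1$ and $\norm{\beta}_1$.

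First I would reduce everything to a single coordinate. The key scalar inequality is that, for all $a,b\ge 0$, one has $\varrho(a,b)\ge \tfrac{3(a-b)^2}{2a+4b}$; after dividing by $b$ and writing $x=a/b$ this is equivalent to $x\log x-x+1\ge \tfrac{3(x-1)^2}{2x+4}$ for $x\ge 0$. I would prove this last inequality by elementary calculus: setting $\phi(x)=(2x+4)(x\log x-x+1)-3(x-1)^2$, a direct computation gives $\phi(1)=\phi'(1)=0$ and $\phi''(x)=4(\log x+1/x-1)$, and since $\log x+1/x-1\ge 0$ for all $x>0$ (its derivative $(x-1)/x^2$ vanishes only at the minimizer $x=1$), $\phi$ is convex with global minimum value $\phi(1)=0$, hence $\phi\ge 0$. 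The degenerate cases $a=0$ or $b=0$ are read off directly from the definition of $\varrho$.

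Next I would sum over coordinates. With the weights $w_i=2\gamma_i+4\beta_i$, Cauchy--Schwarz gives
\[ \norm{\gamma-\beta}_1^2=\Big(\sum_{i=1}^{n}\frac{|\gamma_i-\beta_i|}{\sqrt{w_i}}\sqrt{w_i}\Big)^2\le\Big(\sum_{i=1}^{n}\frac{(\gamma_i-\beta_i)^2}{w_i}\Big)\Big(\sum_{i=1}^{n}w_i\Big), \]
where the scalar inequality bounds the first factor by $\tfrac13 d_{\varrho}(\gamma,\beta)$ and the second factor equals $2\norm{\gamma}_1+4\norm{\beta}_1$. It then remains to control this mass factor by a constant times the mass appearing in the statement and to absorb the numerical constants into the claimed factor $7$. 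I expect the scalar inequality of the previous paragraph to be the real technical heart of the argument, with the summation being routine; the delicate point, where the asymmetry of $\varrho$ has to be exploited carefully (e.g.\ by symmetrizing the weights $w_i$ or applying the scalar bound with the roles of $a$ and $b$ partially exchanged), is precisely the reconciliation of the mass prefactor with the form stated in \citep{khalilabid2018}. In the regime relevant to our application, where $\gamma$ and $\beta$ are close to probability vectors so that $\norm{\gamma}_1\approx\norm{\beta}_1$, this last bookkeeping step is immediate.
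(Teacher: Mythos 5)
First, note that the paper does not actually prove Lemma~\ref{lem:pinsker}: its entire ``proof'' is the sentence attributing the statement to Lemma~7 of \cite{khalilabid2018}. So your primary route --- invoking the citation --- coincides exactly with what the paper does, and your from-scratch sketch is strictly more than the paper offers. That sketch is sound up to and including the Cauchy--Schwarz step: the scalar bound $\varrho(a,b)\ge \frac{3(a-b)^2}{2a+4b}$ is correct (your computation $\phi''(x)=4(\log x+1/x-1)\ge 0$ with $\phi(1)=\phi'(1)=0$ checks out, as do the degenerate cases), and summing with the weights $w_i=2\gamma_i+4\beta_i$ legitimately yields
\begin{equation*}
\norm{\gamma-\beta}_1^2 \;\le\; \frac{2\norm{\gamma}_1+4\norm{\beta}_1}{3}\, d_{\varrho}(\gamma,\beta).
\end{equation*}

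The step you defer as ``bookkeeping'' --- replacing $\tfrac{1}{3}(2\norm{\gamma}_1+4\norm{\beta}_1)$ by $7(\norm{\gamma}_1\wedge\norm{\beta}_1)$ --- is, however, a genuine gap, and it cannot be closed: no constant $C$ satisfies $2\norm{\gamma}_1+4\norm{\beta}_1\le C(\norm{\gamma}_1\wedge\norm{\beta}_1)$ uniformly, since the left side is driven by the larger of the two masses. Worse, the inequality as stated is false without a further hypothesis: for $n=1$, $\gamma=(t)$, $\beta=(1)$ one has $\norm{\gamma-\beta}_1^2=(t-1)^2\sim t^2$ while $7(\norm{\gamma}_1\wedge\norm{\beta}_1)\,d_{\varrho}(\gamma,\beta)=7(t\log t-t+1)\sim 7t\log t$; already at $t=15$ the left side is $196$ and the right side is about $186$. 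So the obstruction you meet is intrinsic to the statement with $\wedge$, not to your method, and the closing appeal to the regime $\norm{\gamma}_1\approx\norm{\beta}_1$ is a hypothesis the lemma does not grant. What your argument does establish is the displayed bound above (equivalently, the stated form with $\wedge$ replaced by $\vee$ and constant $2$), and by bounding the larger mass by the smaller one plus $\norm{\gamma-\beta}_1$ and solving the resulting quadratic, one gets $\norm{\gamma-\beta}_1\le\sqrt{2(\norm{\gamma}_1\wedge\norm{\beta}_1)\,d_{\varrho}(\gamma,\beta)}+\tfrac{4}{3}d_{\varrho}(\gamma,\beta)$ --- the shape a correct ``$\wedge$'' version must take. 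Either of these substitutes would serve the paper's purposes, since $\mu,\nu$ are probability vectors and $\norm{\mu^{\text{sc}}}_1,\norm{\nu^{\text{sc}}}_1$ are separately controlled in Lemma~\ref{lemma_bounds_on_marginals}; but as a proof of the lemma as literally stated, the attempt is incomplete at precisely the point you flagged.
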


The optimality conditions for $({u}^{\text{sc}}, {v}^{\text{sc}})$ entails 
\begin{align}
\label{i-th-marginal-mu} 
{\mu}^{\text{sc}}_i  &= 
\begin{cases}
e^{u^{\text{sc}}_i} \sum_{j=1}^m K_{ij} e^{v^{\text{sc}}_j}, \text{ if  }i \in I_{\varepsilon,\kappa},\\
\frac \varepsilon\kappa\sum_{j=1}^m K_{ij} e^{v^{\text{sc}}_j}, \text{ if  }i \in I^\complement_{\varepsilon,\kappa}
\end{cases}
=\begin{cases}
\kappa \mu_i, \text{ if  }i \in I_{\varepsilon,\kappa},\\
\frac \varepsilon\kappa\sum_{j=1}^m K_{ij} e^{v^{\text{sc}}_j}, \text{ if  }i \in I^\complement_{\varepsilon,\kappa},
\end{cases}
\end{align}
and 
\begin{align}
\label{i-th-marginal-nu}
{\nu}^{\text{sc}}_j  &= 
\begin{cases}
e^{v^{\text{sc}}_j} \sum_{i=1}^n K_{ij} e^{u^{\text{sc}}_i}, \text{ if  }j \in J_{\varepsilon,\kappa},\\
\varepsilon\kappa\sum_{i=1}^n K_{ij} e^{u^{\text{sc}}_i}, \text{ if  }j \in J^\complement_{\varepsilon,\kappa}
\end{cases}
=\begin{cases}
\frac{\nu_j}{\kappa}, \text{ if  }j \in J_{\varepsilon,\kappa},\\
\varepsilon\kappa\sum_{i=1}^n K_{ij} e^{u^{\text{sc}}_i}, \text{ if  }j \in J^\complement_{\varepsilon,\kappa}.
\end{cases}
\end{align}

By~\eqref{i-th-marginal-mu}, we have
\begin{align*}
d_\varrho({\mu} ,{\mu}^{\text{sc}}) &= \sum_{i=1}^n  {\mu}^{\text{sc}}_i - {\mu}_i + {\mu}_i  \log\Big(\frac{{\mu}_i}{{\mu}^{\text{sc}}_i }\Big)\\
&= \sum_{i\in I_{\varepsilon,\kappa}} (\kappa-1)\mu_i - \mu_i\log(\kappa) + \sum_{i\in I^\complement_{\varepsilon,\kappa}}\frac \varepsilon\kappa\sum_{j=1}^m K_{ij} e^{v^{\text{sc}}_j} - \mu_i + \mu_i \log\Big(\frac{\mu_i}{\frac \varepsilon\kappa\sum_{j=1}^m K_{ij} e^{v^{\text{sc}}_j}}\Big)\\
&= \sum_{i\in I_{\varepsilon,\kappa}} (\kappa-\log(\kappa)-1)\mu_i  + \sum_{i\in I^\complement_{\varepsilon,\kappa}}\frac \varepsilon\kappa\sum_{j=1}^m K_{ij} e^{v^{\text{sc}}_j} - \mu_i + \mu_i \log\Big(\frac{\mu_i}{\frac \varepsilon\kappa\sum_{j=1}^m K_{ij} e^{v^{\text{sc}}_j}}\Big).
\end{align*}
Now by~\eqref{bound_on_v}, we have in one hand 
\begin{align*}
\sum_{i\in I^\complement_{\varepsilon,\kappa}}\frac \varepsilon\kappa\sum_{j=1}^m K_{ij} e^{v^{\text{sc}}_j}&= \sum_{i\in I^\complement_{\varepsilon,\kappa}}\frac \varepsilon\kappa \Big(\sum_{j\in J_{\varepsilon,\kappa}}K_{ij} e^{v^{\text{sc}}_j} + \varepsilon \kappa\sum_{j\in J^\complement_{\varepsilon,\kappa}}K_{ij}\Big)\\
&\leq \sum_{i\in I^\complement_{\varepsilon,\kappa}}\frac \varepsilon\kappa \Big(m_b \max_{i,j}K_{ij}\frac{\max_{j \in J_{\varepsilon,\kappa}} \nu_j}{n\varepsilon K_{\min}} + (m - m_b)\varepsilon\kappa\max_{i,j}K_{ij}\Big) \\
&\leq (n-n_b)\Big(\frac{m_b\max_{j} \nu_j}{n\kappa K_{\min}} + (m- m_b) \varepsilon^2\Big).
\end{align*}
On the other hand, we get
\begin{align*}
\frac \varepsilon\kappa\sum_{j=1}^m K_{ij} e^{v^{\text{sc}}_j}&=\frac \varepsilon\kappa \Big(\sum_{j\in J_{\varepsilon,\kappa}}K_{ij} e^{v^{\text{sc}}_j} + \varepsilon \kappa\sum_{j\in J^\complement_{\varepsilon,\kappa}}K_{ij}\Big)\\
&\geq m_bK_{\min} \frac{m\varepsilon^2K_{\min}\min_{j \in J_{\varepsilon,\kappa}}\nu_j}{\kappa((n-n_b)m\varepsilon^2K_{\min} + m\varepsilon^2K_{\min} + n_b\kappa\max_{i\in I_{\varepsilon,\kappa}}\mu_i)}\\
&\qquad +\varepsilon^2 (m- m_b) K_{\min}\\
&\geq \frac{mm_b\varepsilon^2(K_{\min})^2\min_{j \in J_{\varepsilon,\kappa}}\nu_j}{\kappa((n-n_b)m\varepsilon^2K_{\min}+ m\varepsilon^2K_{\min} + n_b\kappa\max_{i\in I_{\varepsilon,\kappa}}\mu_i)}\\
&\qquad +\varepsilon^2 (m- m_b) K_{\min}\\
&\geq \frac{mm_b\varepsilon^2K_{\min}^2\min_{j \in J_{\varepsilon,\kappa}}\nu_j}{\kappa((n-n_b)m\varepsilon^2K_{\min}+ m\varepsilon^2K_{\min} + n_b\kappa\max_{i\in I_{\varepsilon,\kappa}}\mu_i)}.
\end{align*}
Then 
\begin{align*}
\frac{1}{\frac \varepsilon\kappa\sum_{j=1}^m K_{ij} e^{v^{\text{sc}}_j}} &\leq \frac{\kappa((n-n_b)m\varepsilon^2K_{\min}+ m\varepsilon^2K_{\min} + n_b\kappa\max_{i\in I_{\varepsilon,\kappa}}\mu_i)}{mm_b\varepsilon^2 K_{\min}^2\min_{j \in J_{\varepsilon,\kappa}}\nu_j}\\
&\leq \frac{\kappa(n-n_b+ 1)}{m_bK_{\min}\min_{j \in J_{\varepsilon,\kappa}}\nu_j} + \frac{n_b\kappa^2\max_{i\in I_{\varepsilon,\kappa}}\mu_i}{mm_b\varepsilon^2K_{\min}^2\min_{j \in J_{\varepsilon,\kappa}}\nu_j}.
\end{align*}
It entails 
\begin{align*}
&\sum_{i\in I^\complement_{\varepsilon,\kappa}}\frac \varepsilon\kappa\sum_{j=1}^m K_{ij} e^{v^{\text{sc}}_j} - \mu_i + \mu_i \log\Big(\frac{\mu_i}{\frac \varepsilon\kappa\sum_{j=1}^m K_{ij} e^{v^{\text{sc}}_j}}\Big)\\
&\qquad \leq (n-n_b)\bigg(\frac{m_b}{n\kappa K_{\min}} + (m- m_b) \varepsilon^2 - \min_{i}\mu_i\\
&\qquad \qquad + \max_{i}\mu_i\log\Big(\frac{\kappa(n-n_b+ 1)\max_{i}\mu_i}{m_bK_{\min}\min_{j \in J_{\varepsilon,\kappa}}\nu_j} + \frac{n_b\kappa^2(\max_{i}\mu_i)^2}{mm_b\varepsilon^2 K_{\min}^2\min_{j \in J_{\varepsilon,\kappa}}\nu_j}\Big)
\bigg).
\end{align*}
Therefore
\begin{align*}
d_\varrho({\mu},{\mu}^{\text{sc}}) &\leq n_b c_{\kappa}\max_{i} \mu_i + (n-n_b)\bigg(\frac{m_b\max_{j}\nu_j}{n\kappa K_{\min}} + (m- m_b) \varepsilon^2 - \min_{i}\mu_i\\
&\qquad + \max_{i} \mu_i\log\Big(\frac{\kappa(n-n_b+ 1)\max_{i} \mu_i}{m_bK_{\min}\min_{j \in J_{\varepsilon,\kappa}}\nu_j} + \frac{n_b\kappa^2(\max_{i} \mu_i)^2}{mm_b\varepsilon^2 K_{\min}^2\min_{j \in J_{\varepsilon,\kappa}}\nu_j}\Big).
\end{align*}
Finally, by Lemma~\ref{lem:pinsker} we obtain
\begin{align*}
\norm{{\mu} -{\mu}^{\text{sc}}}^2_1 \leq & n_bc_{\kappa}\max_{i} \mu_i + 7(n-n_b)\bigg(\frac{m_b\max_{j}\nu_j}{n\kappa K_{\min}} + (m- m_b) \varepsilon^2 - \min_{i}\mu_i\\
&+ \max_{i} \mu_i\log\Big(\frac{\kappa(n-n_b+ 1)\max_{i} \mu_i}{m_bK_{\min}\min_{j \in J_{\varepsilon,\kappa}}\nu_j} + \frac{n_b\kappa^2(\max_{i} \mu_i)^2}{mm_b\varepsilon^2K_{\min}^2\min_{j \in J_{\varepsilon,\kappa}}\nu_j}\Big).
\end{align*}
Following the same lines as above, we also have
\begin{align*}
\norm{{\nu} -{\nu}^{\text{sc}}}^2_1 \leq & m_bc_{\frac 1\kappa}\max_{i} \mu_i + 7(m-m_b)\bigg(\frac{n_b\kappa\max_{i}\mu_i}{mK_{\min}} + (n- n_b) \varepsilon^2 - \min_{j}\nu_j\\
&+ \max_{j} \nu_j\log\Big(\frac{(m-m_b+ 1)\max_{j} \nu_j}{n_b\kappa K_{\min}\min_{i \in I_{\varepsilon,\kappa}}\mu_i} + \frac{m_b(\max_{j} \nu_j)^2}{nn_b\varepsilon^2\kappa^2K_{\min}^2\min_{i \in I_{\varepsilon,\kappa}}\mu_i}\Big).\end{align*}
To get the closed forms~\eqref{marginal-error-mu} and~\eqref{marginal-error-nu}, we used the following facts:
\begin{remark}
\label{rem:orders_of_epsilonappa}
 We have $\log(1/K_{\min}^r) = r\norm{C}_\infty /\eta,$ for every $r \in \mathbb{N}$. Using~\eqref{epsilon_kappa}, we further derive: 
$\varepsilon = \bigO((mnK_{\min}^2)^{-1/4})$, $\kappa= \bigO(\sqrt{m/(nc_{\mu\nu}K_{\min})}),$ $\kappa^{-1} = \bigO(\sqrt{n/(mK_{\min}c_{\mu\nu}})$, $(\kappa/\varepsilon)^2 = \bigO( m^{3/2} / \sqrt{nK_{\min}}(c_{\mu\nu})^{3/2})$, and $(\varepsilon\kappa)^{-2} = \bigO(n^{3/2}/\sqrt{mK_{\min}}c^{3/2}_{\mu\nu}).$
\end{remark}

\subsection{Proof of Proposition~\ref{prop:objective-error}}

We first define $\widetilde{K}$ a rearrangement of $K$ with respect to the active sets $I_{\varepsilon,\kappa}$ and $J_{\varepsilon,\kappa}$as follows:
\begin{equation*}
\widetilde{K} = 
\begin{bmatrix}
K_{(I_{\varepsilon,\kappa}, J_{\varepsilon,\kappa})} & K_{(I_{\varepsilon,\kappa}, J^\complement_{\varepsilon,\kappa})}\\
K_{(I^\complement_{\varepsilon,\kappa}, J_{\varepsilon,\kappa})} &K_{(I^\complement_{\varepsilon,\kappa}, J^\complement_{\varepsilon,\kappa})}
\end{bmatrix}.
\end{equation*}
Setting $\dt{\mu} = (\mu_{I_{\varepsilon,\kappa}}^\top, \mu_{I^\complement_{\varepsilon,\kappa}}^\top)^\top$, $\dt{\nu} = (\nu_{J_{\varepsilon,\kappa}}^\top, \nu_{J^\complement_{\varepsilon,\kappa}}^\top)^\top$ and for each vectors $u \in \R^n$ and $v\in \R^m$ we set $\dt{u} =(u_{I_{\varepsilon,\kappa}}^\top, u_{I^\complement_{\varepsilon,\kappa}}^\top)^\top  \text{ and } \dt{v} =(v_{J_{\varepsilon,\kappa}}^\top, v_{J^\complement_{\varepsilon,\kappa}}^\top)^\top.$
We then have 
\begin{equation*}
\Psi_{\varepsilon, \kappa} (u,v) = \mathbf 1_n^\top \widetilde{B}(\dt{u}, \dt{v}) \mathbf 1_m - \kappa \dt{\mu}^\top \dt{u} - \kappa^{-1} \dt{\nu}^\top \dt{v},
\end{equation*}
and 
\begin{equation*}
\Psi_{} (u,v) = \mathbf 1_n^\top \widetilde{B}(\dt{u}, \dt{v}) \mathbf 1_m - \dt{\mu}^\top \dt{u} - \dt{\nu}^\top \dt{v},
\end{equation*}
where
\begin{equation*}
  \widetilde{B}(\dt{u}, \dt{v}) = \Delta(e^{\dt{u}})\widetilde{K} \Delta(e^{\dt{v}}).
\end{equation*}
Let us consider the convex function
\begin{equation*}
	(\hat u, \hat v) \mapsto \inr{\mathbf 1_n, \widetilde{B}(\dt{\hat u}^{\text{}} ,\dt{\hat v}^{\text{}})\mathbf 1_m} - \inr{\kappa\dt{\hat u}, \widetilde{B}(\dt{u}^{\text{sc}} ,\dt{v}^{\text{sc}})\mathbf 1_m} - \inr{\kappa^{-1}\dt{\hat v}, \widetilde{B}(\dt{u}^{\text{sc}} ,\dt{v}^{\text{sc}})^\top\mathbf 1_n}.
\end{equation*}
Gradient inequality of any convex function g at point $x_o$ reads as $g(x_o) \geq g(x) + \inr{\nabla g(x), x_o - x}, \text{ for all } x \in \textbf{dom}(g).$
Applying the latter fact to the above function at point ($u^{\star}, v^{\star})$ we obtain
\begin{align*}
\inr{\mathbf 1_n, \widetilde{B}(\dt{u}^{\text{sc}} ,\dt{v}^{\text{sc}})\mathbf 1_m} &- \inr{\kappa\dt{u}^{\text{sc}}, \widetilde{B}(\dt{u}^{\text{sc}} ,\dt{v}^{\text{sc}})\mathbf 1_m} - \inr{\kappa^{-1}\dt{v}^{\text{sc}}, \widetilde{B}(\dt{u}^{\text{sc}} ,\dt{v}^{\text{sc}})^\top\mathbf 1_n}\\
& - \big(\inr{\mathbf 1_n, \widetilde{B}(\dt{u}^{\star} ,\dt{v}^{\star})\mathbf 1_m}  
- \inr{\kappa\dt{u}^{\star{}}, \widetilde{B}(\dt{u}^{\text{sc}} ,\dt{v}^{\text{sc}})\mathbf 1_m} 
- \inr{\kappa^{-1}\dt{v}^\star, \widetilde{B}(\dt{u}^{\text{sc}} ,\dt{v}^{\text{sc}})^\top\mathbf 1_n} \big)\\
&\qquad \qquad  \leq \inr{\dt{u}^{\text{sc}} - \dt{u}^\star, (1-\kappa) \widetilde{B}(\dt{u}^{\text{sc}} ,\dt{v}^{\text{sc}})\mathbf 1_m} 
+ \inr{\dt{v}^{\text{sc}} - \dt{v}^\star, (1-\kappa^{-1}) \widetilde{B}(\dt{u}^{\text{sc}} ,\dt{v}^{\text{sc}})^\top\mathbf 1_n}.
\end{align*}
Moreover,
\begin{align*}
\Psi_{\varepsilon, \kappa} (u^{\text{sc}} ,v^{\text{sc}}) -\Psi(u^\star, v^\star)
&= \inr{\mathbf 1_n, \widetilde{B}(\dt{u}^{\text{sc}} ,\dt{v}^{\text{sc}})\mathbf 1_m} - \inr{\kappa\dt{u}^{\text{sc}}, \widetilde{B}(\dt{u}^{\text{sc}} ,\dt{v}^{\text{sc}})\mathbf 1_m} - \inr{\kappa^{-1}\dt{v}^{\text{sc}}, \widetilde{B}(\dt{u}^{\text{sc}}, \dt{v}^{\text{sc}})\mathbf 1_n^\top}\\
&\qquad - \big( 
\inr{\mathbf 1_n, \widetilde{B}(\dt{u}^{\star} ,\dt{v}^{\star})\mathbf 1_m}  
- \inr{\dt{u}^{\star{}}, \widetilde{B}(\dt{u}^{\text{sc}} ,\dt{v}^{\text{sc}})\mathbf 1_m} 
- \inr{\dt{v}^\star, \widetilde{B}(\dt{u}^{\text{sc}} ,\dt{v}^{\text{sc}})^\top\mathbf 1_n} \big)\\
&\qquad + \inr{\kappa \dt{u}^{\text{sc}} - \dt{u}^{\star}, \widetilde{B}(\dt{u}^{\text{sc}} ,\dt{v}^{\text{sc}})\mathbf 1_m - \dt{\mu}} + \inr{\kappa^{-1}\dt{v}^{\text{sc}} - \dt{v}^{\star}, \widetilde{B}(\dt{u}^{\text{sc}} ,\dt{v}^{\text{sc}})^\top\mathbf 1_n - \dt{\nu}}.
\end{align*}
Hence,
\begin{align*}
\Psi_{\varepsilon, \kappa} (u^{\text{sc}} ,v^{\text{sc}}) -\Psi(u^\star, v^\star)
+& \big( \inr{\mathbf 1_n, \widetilde{B}(\dt{u}^{\star} ,\dt{v}^{\star})\mathbf 1_m} \\
&- \inr{\dt{u}^{\star{}}, \widetilde{B}(\dt{u}^{\text{sc}} ,\dt{v}^{\text{sc}})\mathbf 1_m} 
- \inr{\dt{v}^\star, \widetilde{B}(\dt{u}^{\text{sc}} ,\dt{v}^{\text{sc}})^\top\mathbf 1_n} \big)\\
& \qquad -\inr{\kappa \dt{u}^{\text{sc}} - \dt{u}^{\star}, \widetilde{B}(\dt{u}^{\text{sc}} ,\dt{v}^{\text{sc}})\mathbf 1_m - \dt{\mu}} -\inr{\kappa^{-1}\dt{v}^{\text{sc}} - \dt{v}^{\star}, \widetilde{B}(\dt{u}^{\text{sc}} ,\dt{v}^{\text{sc}})^\top\mathbf 1_n - \dt{\nu}}\\
& \leq \inr{\dt{u}^{\text{sc}} - \dt{u}^\star, (1-\kappa) \widetilde{B}(\dt{u}^{\text{sc}} ,\dt{v}^{\text{sc}})\mathbf 1_m} 
+ \inr{\dt{v}^{\text{sc}} - \dt{v}^\star, (1-\kappa^{-1}) \widetilde{B}(\dt{u}^{\text{sc}} ,\dt{v}^{\text{sc}})^\top\mathbf 1_n}\\
&\qquad + \big(\inr{\mathbf 1_n, \widetilde{B}(\dt{u}^{\star} ,\dt{v}^{\star})\mathbf 1_m}  
- \inr{\kappa\dt{u}^{\star{}}, \widetilde{B}(\dt{u}^{\text{sc}} ,\dt{v}^{\text{sc}})\mathbf 1_m} 
- \inr{\kappa^{-1}\dt{v}^\star, \widetilde{B}(\dt{u}^{\text{sc}} ,\dt{v}^{\text{sc}})^\top\mathbf 1_n} \big).
\end{align*}
Then,
\begin{align*}
\Psi_{\varepsilon, \kappa} (u^{\text{sc}} ,v^{\text{sc}}) -\Psi(u^\star, v^\star)
&\leq \inr{\dt{u}^{\text{sc}} - \dt{u}^\star, (1-\kappa) \widetilde{B}(\dt{u}^{\text{sc}} ,\dt{v}^{\text{sc}})\mathbf 1_m} 
+ \inr{\dt{v}^{\text{sc}} - \dt{v}^\star, (1-\kappa^{-1}) \widetilde{B}(\dt{u}^{\text{sc}} ,\dt{v}^{\text{sc}})^\top\mathbf 1_n}\\
&\qquad + \big(\inr{\mathbf 1_n, \widetilde{B}(\dt{u}^{\star} ,\dt{v}^{\star})\mathbf 1_m}  
- \inr{\kappa\dt{u}^{\star{}}, \widetilde{B}(\dt{u}^{\text{sc}} ,\dt{v}^{\text{sc}})\mathbf 1_m} 
- \inr{\kappa^{-1}\dt{v}^\star, \widetilde{B}(\dt{u}^{\text{sc}} ,\dt{v}^{\text{sc}})^\top\mathbf 1_n} \big)
\\
&\qquad + \inr{\kappa \dt{u}^{\text{sc}} - \dt{u}^{\star}, \widetilde{B}(\dt{u}^{\text{sc}} ,\dt{v}^{\text{sc}})\mathbf 1_m - \dt{\mu}} + \inr{\kappa^{-1}\dt{v}^{\text{sc}} - \dt{v}^{\star}, \widetilde{B}(\dt{u}^{\text{sc}} ,\dt{v}^{\text{sc}})^\top\mathbf 1_n - \dt{\nu}}\\
&\qquad - \big(\inr{\mathbf 1_n, \widetilde{B}(\dt{u}^{\star} ,\dt{v}^{\star})\mathbf 1_m}  
- \inr{\dt{u}^{\star{}}, \widetilde{B}(\dt{u}^{\text{sc}} ,\dt{v}^{\text{sc}})\mathbf 1_m} 
- \inr{\dt{v}^\star, \widetilde{B}(\dt{u}^{\text{sc}} ,\dt{v}^{\text{sc}})^\top \mathbf 1_n} \big),
\end{align*}
which yields 
\begin{align*}
\Psi_{\varepsilon, \kappa} (u^{\text{sc}} ,v^{\text{sc}}) -\Psi(u^\star, v^\star)
&\leq  \inr{\kappa \dt{u}^{\text{sc}} - \dt{u}^{\star}, \widetilde{B}(\dt{u}^{\text{sc}} ,\dt{v}^{\text{sc}})\mathbf 1_m - \dt{\mu}} +\inr{\kappa^{-1}\dt{v}^{\text{sc}} - \dt{v}^{\star}, \widetilde{B}(\dt{u}^{\text{sc}} ,\dt{v}^{\text{sc}})^\top\mathbf 1_n - \dt{\nu}}\\
&\qquad + (1- \kappa)\inr{\dt{u}^{\text{sc}} ,\widetilde{B}(\dt{u}^{\text{sc}} ,\dt{v}^{\text{sc}})\mathbf 1_m} + (1- \kappa^{-1})\inr{\dt{v}^{\text{sc}} ,\widetilde{B}(\dt{u}^{\text{sc}} ,\dt{v}^{\text{sc}})^\top\mathbf 1_n}.
\end{align*}
Applying Holder's inequality gives 
\begin{align*}
\Psi_{\varepsilon, \kappa}(u^{\text{sc}} ,v^{\text{sc}}) -\Psi(u^\star, v^\star)
&\leq \norm{\kappa \dt{u}^{\text{sc}} - \dt{u}^\star}_\infty \norm{\widetilde{B}(\dt{u}^{\text{sc}} ,\dt{v}^{\text{sc}})\mathbf 1_m - \dt{\mu}}_1 + \norm{\kappa^{-1}\dt{v}^{\text{sc}} - \dt{v}^\star}_\infty \norm{\widetilde{B}(\dt{u}^{\text{sc}} ,\dt{v}^{\text{sc}})^\top\mathbf 1_n - \dt{\nu}}_1\\
&\qquad + |1- \kappa|\inr{\dt{u}^{\text{sc}} ,\widetilde{B}(\dt{u}^{\text{sc}} ,\dt{v}^{\text{sc}})\mathbf 1_m} + |1- \kappa^{-1}|\inr{\dt{v}^{\text{sc}} ,\widetilde{B}(\dt{u}^{\text{sc}} ,\dt{v}^{\text{sc}})^\top\mathbf 1_n}\\
&\leq \big(\norm{\dt{u}^{\text{sc}} - \dt{u}^\star}_\infty + |1 - \kappa| \norm{\dt{u}^{\text{sc}}}_\infty\big)\norm{\widetilde{B}(\dt{u}^{\text{sc}} ,\dt{v}^{\text{sc}})\mathbf 1_m - \dt{\mu}}_1\\
&\qquad + \big(\norm{\dt{v}^{\text{sc}} - \dt{v}^\star}_\infty + |1 - \kappa^{-1}| \norm{\dt{v}^{\text{sc}}}_\infty\big) \norm{\widetilde{B}(\dt{u}^{\text{sc}} ,\dt{v}^{\text{sc}})^\top\mathbf 1_n - \dt{\nu}}_1\\
&\qquad + |1- \kappa|\inr{\dt{u}^{\text{sc}} ,\widetilde{B}(\dt{u}^{\text{sc}} ,\dt{v}^{\text{sc}})\mathbf 1_m} + |1- \kappa^{-1}| \inr{\dt{v}^{\text{sc}} ,\widetilde{B}(\dt{u}^{\text{sc}} ,\dt{v}^{\text{sc}})^\top\mathbf 1_n}
\end{align*}
where, in the last inequality, we use the facts that $\norm{\kappa\dt{u}^{\text{sc}} - \dt{u}^\star}_\infty \leq \norm{\dt{u}^{\text{sc}} - \dt{u}^\star}_\infty + |1 - \kappa| \norm{\dt{u}^{\text{sc}}}_\infty$ and $\norm{\kappa^{-1}\dt{v}^{\text{sc}} - \dt{v}^\star}_\infty \leq \norm{\dt{v}^{\text{sc}} - \dt{v}^\star}_\infty + |1 - \kappa^{-1}| \norm{\dt{v}^{\text{sc}}}_\infty.$
Moreover, note that 
\begin{align*}
\begin{cases}
\norm{\dt{u}^{\text{sc}} - \dt{u}^\star}_\infty = \norm{{u}^{\text{sc}} - {u}^\star}_\infty,\\
\norm{\dt{v}^{\text{sc}} - \dt{v}^\star}_\infty = \norm{{v}^{\text{sc}} - {v}^\star}_\infty,
\end{cases}
\text{ and }
\begin{cases}
\norm{\widetilde{B}(\dt{u}^{\text{sc}} ,\dt{v}^{\text{sc}})\mathbf 1_m - \dt{\mu}}_1 = \norm{{B}({u}^{\text{sc}} ,{v}^{\text{sc}})\mathbf 1_m - {\mu}}_1 = \norm{\mu^{\text{sc}} - \mu}_1,\\
\norm{\widetilde{B}(\dt{u}^{\text{sc}} ,\dt{v}^{\text{sc}})^\top\mathbf 1_n - \dt{\nu}}_1 = \norm{{B}({u}^{\text{sc}} ,{v}^{\text{sc}})^\top\mathbf 1_n - {\nu}}_1 = \norm{\nu^{\text{sc}} - \nu}_1.
\end{cases}
\end{align*}
Then
\begin{align}
\Psi_{\varepsilon, \kappa}(u^{\text{sc}} ,v^{\text{sc}}) -\Psi(u^\star, v^\star) &
\leq \big(\norm{{u}^{\text{sc}} - {u}^\star}_\infty + |1- \kappa| \norm{{u}^{\text{sc}}}_\infty\big)\norm{\mu^{\text{sc}} - \mu}_1\nonumber\\
&\qquad + \big(\norm{{v}^{\text{sc}} - {v}^\star}_\infty + |1-\kappa^{-1}| \norm{{v}^{\text{sc}}}_\infty\big) \norm{\nu^{\text{sc}} - \nu}_1\nonumber\\
&\qquad + |1- \kappa|\inr{{u}^{\text{sc}} ,{\mu}^{\text{sc}}} + |1-\kappa^{-1}|\inr{{v}^{\text{sc}} ,{\nu}^{\text{sc}}}\nonumber\\
&\leq \big(\norm{{u}^{\text{sc}} - {u}^\star}_\infty + |1- \kappa| \norm{{u}^{\text{sc}}}_\infty\big)\norm{\mu^{\text{sc}} - \mu}_1\nonumber\\
&\qquad + \big(\norm{{v}^{\text{sc}} - {v}^\star}_\infty + |1-\kappa^{-1}| \norm{{v}^{\text{sc}}}_\infty\big) \norm{\nu^{\text{sc}} - \nu}_1 \label{priori_bounds_induction}\\
&\qquad + |1- \kappa|\norm{{u}^{\text{sc}}}_\infty\norm{{\mu}^{\text{sc}}}_1 + |1- \kappa^{-1}|\norm{{v}^{\text{sc}}}_\infty\norm{{\nu}^{\text{sc}}}_1.\nonumber
\end{align}
Next, we bound the two terms $\norm{{u}^{\textrm{sc}} - {u}^\star}_\infty $ and $\norm{{v}^{\text{sc}} - {v}^\star}_\infty.$
If $r \in I^\complement_{\varepsilon, \kappa}$, then we have 
\begin{align*}
|({u}^{\text{sc}})_{r} - {u}^\star_r| 
&= \bigg|\log\bigg(\frac{\sum_{j=1}^m K_{rj}e^{v^\star_j}}{\sum_{j=1}^m \frac{\kappa \mu_r}{m\varepsilon}}\bigg) \bigg|\\
& \overset{(\star)}{\leq} \bigg|\log\bigg(\max_{1\leq i \leq m} \frac{K_{rj}e^{v^\star_j}}{\frac{\kappa \mu_r}{m\varepsilon}}\bigg) \bigg|\\
& \leq \big|\max_{1\leq j \leq m} (v_j^\star - \log(\frac{\kappa \mu_r}{m\varepsilon})\big|\\
& \leq \norm{v^\star - \log(\frac{\kappa \mu_r}{m\varepsilon})}_\infty \\
& \leq \norm{v^\star - {v}^{\text{sc}}}_\infty + \log(\frac{m\varepsilon^2}{c_{\mu\nu}}).
\end{align*}
where the inequality $(\star)$ comes from the fact that $\frac{\sum_{j=1}^na_j}{\sum_{j=1}^nb_{j}} \leq \max_{1\leq j \leq n} \frac{a_j}{b_j}, \forall a_j, b_j>0.$
Now, if $r \in I_{\varepsilon, \kappa}$, we get 
\begin{align*}
|{u}^{\text{sc}}_{r} - {u}^\star_r|
= \bigg|\log\bigg(\frac{\kappa\sum_{j=1}^m K_{rj}e^{v^\star_j}}{\sum_{j=1}^m K_{rj}e^{(v^{\text{sc}})_j}}\bigg)\bigg|
\leq \bigg|\log\bigg(\frac{\sum_{j=1}^m K_{rj}e^{v^\star_j}}{\sum_{j=1}^m K_{rj}e^{(v^{\text{sc}})_j}}\bigg)\bigg|
\overset{(\star)}{\leq}\norm{v^{\text{sc}} - v^\star}_\infty.
\end{align*}
If $s \in J^\complement_{\varepsilon, \kappa}$ then 
\begin{align*}
|{v}^{\text{sc}}_{s} - {v}^\star_s| &= \bigg| \log(\varepsilon\kappa) - \log(\frac{\nu_s}{\sum_{i=1}^n K_{is}e^{u^\star_i}})\bigg|\\
& \leq \bigg|\log\bigg(\max_{1\leq i \leq n} \frac{K_{is}e^{u^\star_i}}{\frac{\nu_s}{n\kappa\varepsilon}}\bigg) \bigg|\\
& \overset{(\star)}{\leq} \big|\max_{1\leq i \leq n} (u_i^\star - \log(\frac{\nu_s}{n\kappa\varepsilon})\big|\\
& \leq \norm{u^\star - \log(\frac{\nu_s}{n\kappa\varepsilon})}_\infty \\
& \leq \norm{u^\star - u^{\text{sc}}}_\infty +  \log(\frac{n\varepsilon^2}{c_{\mu\nu}}).
\end{align*}
If $s \in J_{\varepsilon, \kappa}$ then 
\begin{align*}
|{v}^{\text{sc}}_{s} - {v}^\star_s| 
= \bigg|\log\bigg(\frac{\kappa\sum_{i=1}^m K_{ri}e^{u^\star_i}}{\sum_{i=1}^m K_{ri}e^{(u^{\text{sc}})_i}}\bigg)\bigg|
 \leq \bigg|\log\bigg(\frac{\kappa\sum_{i=1}^m K_{ri}e^{v^\star_j}}{\sum_{i=1}^m K_{ri}e^{(u^{\text{sc}})_i}}\bigg)\bigg|
\overset{(\star)}{\leq} \norm{u^{\text{sc}} - u^\star}_\infty.
\end{align*}
Therefore, we obtain the followoing bound:
\begin{align}
\label{induction_uppers_bounds}
\max\{\norm{u^\star - u^{\text{sc}}}_\infty, \norm{v^\star - v^{\text{sc}}}_\infty\} &\leq \max\Big\{\norm{u^\star}_\infty + \norm{u^{\text{sc}}}_\infty +  \log(\frac{n\varepsilon^2}{c_{\mu\nu}}), \norm{v^\star}_\infty + \norm{v^{\text{sc}}}_\infty + \log(\frac{m\varepsilon^2}{c_{\mu\nu}})\Big\}\nonumber\\
&\leq 2\Big(\norm{u^\star}_\infty + \norm{v^\star}_\infty +  \norm{u^{\text{sc}}}_\infty +  \norm{v^{\text{sc}}}_\infty + \log\big(\frac{(n\vee m)\varepsilon^2}{c_{\mu\nu}}\big)\Big).
\end{align}
Now, Lemma 3.2 in~\cite{lin2019} provides an upper bound for the $\ell_\infty$ of the optimal solution pair $(u^\star, v^\star)$ of problem~\eqref{sinkhorn-dual} as follows:
$\norm{u^\star}_\infty \leq A$ and $\norm{v^\star}_\infty\leq A,$ where 
\begin{align}
\label{R_constant}
A = \frac{\norm{C}_\infty}{\eta} + \log\big(\frac{n\vee m}{c^2_{\mu\nu}}\big).
\end{align}
Plugging~\eqref{induction_uppers_bounds} and~\eqref{R_constant} in~\eqref{priori_bounds_induction}, we obtain
\begin{align}
\Psi_{\varepsilon, \kappa}(u^{\text{sc}} ,v^{\text{sc}}) -\Psi(u^\star, v^\star) 
&\leq 2\big(A + \norm{u^{\text{sc}}}_\infty +  \norm{v^{\text{sc}}}_\infty + \log\big(\frac{(n\vee m)\varepsilon^2}{c_{\mu\nu}})\big) \big(\norm{\mu^{\text{sc}} - \mu}_1+ \norm{\nu^{\text{sc}} - \nu}_1\big)\nonumber\\
&\qquad + |1- \kappa|\big(\norm{{u}^{\text{sc}}}_\infty\norm{{\mu}^{\text{sc}}}_1 +\norm{\mu^{\text{sc}} - \mu}_1\big)\label{post_bounds_induction_2}\\
&\qquad  + |1-\kappa^{-1}|\big(\norm{{v}^{\text{sc}}}_\infty\norm{{\nu}^{\text{sc}}}_1+\norm{\nu^{\text{sc}} - \nu}_1\big)\nonumber.
\end{align}
By Proposition~\ref{prop:bounds_of_usc_and_vsc}, we have 
\begin{align*} 
\norm{u^{\text{sc}}}_\infty \leq \log\big(\frac \varepsilon\kappa\vee \frac{1}{m\varepsilon K_{\min}}\big) \text{ and } \norm{v^{\text{sc}}}_\infty \leq \log\big(\varepsilon\kappa \vee \frac{1}{n\varepsilon K_{\min} }\big)
\end{align*}
and hence by Remark~\ref{rem:orders_of_epsilonappa}, 
\begin{equation*}
\norm{u^{\text{sc}}}_\infty = \bigO\big(\log({n^{1/4}}/{(mK_{\min})^{3/4}c_{\mu\nu}^{1/4}})\big) \text{ and }\norm{u^{\text{sc}}}_\infty =\bigO\big(\log({m^{1/4}}/{(nK_{\min})^{3/4}c_{\mu\nu}^{1/4}})\big).
\end{equation*}
Acknowledging that $\log(1/ K_{\min}^2) = 2\norm{C}_\infty/\eta$, we have 
\begin{align*}
A + \norm{u^{\text{sc}}}_\infty  + \norm{v^{\text{sc}}}_\infty + \log\big(\frac{(n\vee m)\varepsilon^2}{c_{\mu\nu}})\big) = \bigO\Big(\frac{\norm{C}_\infty}{\eta} + \log\Big(\frac{(n\vee m)^2}{nmc_{\mu\nu}^{7/2}}\Big)\Big).
\end{align*}
Letting $\Omega_{\kappa} := |1- \kappa|\big(\norm{{u}^{\text{sc}}}_\infty\norm{{\mu}^{\text{sc}}}_1 +\norm{\mu^{\text{sc}} - \mu}_1\big) + |1-\kappa^{-1}| \big(\norm{{v}^{\text{sc}}}_\infty\norm{{\nu}^{\text{sc}}}_1+\norm{\nu^{\text{sc}} - \nu}_1\big).$
We have that 
\begin{align*}
\Omega_{\kappa} &=\bigO\Big(\Big(\frac{\norm{C}_\infty}{\eta} + \log\big(\frac{1}{(nm)^{3/4} c_{\mu\nu}^{1/2}}\big)\Big)\Big(|1- \kappa|(\norm{{\mu}^{\text{sc}}}_1 +\norm{\mu^{\text{sc}} - \mu}_1\big) + |1-\kappa^{-1}|(\norm{{\nu}^{\text{sc}}}_1+\norm{\nu^{\text{sc}} - \nu}_1)\Big)\Big)\\
&= \bigO\Big(\Big(\frac{\norm{C}_\infty}{\eta} + \log\Big(\frac{(n\vee m)^2}{nmc_{\mu\nu}^{7/2}}\Big)\Big)\big(|1- \kappa|\norm{\mu^{\text{sc}}}_1 + |1-\kappa^{-1}|\norm{\nu^{\text{sc}}}_1 + |1- \kappa| + |1 - \kappa^{-1}|\big)\Big).
\end{align*}
Hence, we arrive at 
\begin{align*}
\Psi_{\varepsilon, \kappa}(u^{\text{sc}} ,v^{\text{sc}}) -\Psi(u^\star, v^\star) 
& = \bigO\big(R(\norm{\mu - {\mu}^{\text{sc}}}_1 + \norm{\nu - \nu^{\text{sc}}}_1 + \omega_{\kappa})\big).
\end{align*}
$\hfill\square$

To more characterize $\omega_\kappa$, the following 
lemma expresses an upper bound with respect to $\ell_1$-norm of $\mu^{\text{sc}}$and $\nu^{\text{sc}}$. 

\begin{lemma}
\label{lemma_bounds_on_marginals}
Let $(u^{\text{sc}}, v^{\text{sc}})$ be an optimal solution of problem~\eqref{screen-sinkhorn_second_def}.
Then one has 
\begin{align}
\label{l1-norm-mu-sc}
\norm{\mu^{\text{sc}}}_1 = \bigO\Big(\frac{n_b\sqrt{m}}{\sqrt{nK_{\min}c_{\mu\nu}}} + (n-n_b)\Big(\frac{m_b}{\sqrt{nmc_{\mu\nu}}K_{\min}^{3/2}} + \frac{m-m_b}{\sqrt{nm}K_{\min}}\Big)\Big),
\end{align}
and
\begin{align}
\label{l1-norm-nu-sc}
\norm{\nu^{\text{sc}}}_1 = \bigO\Big(\frac{m_b\sqrt{n}}{\sqrt{mK_{\min}c_{\mu\nu}}} + (m-m_b)\Big(\frac{n_b}{\sqrt{nmc_{\mu\nu}}K_{\min}^{3/2}} + \frac{n-n_b}{\sqrt{nm}K_{\min}}\Big)\Big).
\end{align}
\end{lemma}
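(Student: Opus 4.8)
The plan is to read $\|\mu^{\text{sc}}\|_1$ directly off the optimality characterization of the screened marginals already established in the proof of Proposition~\ref{proposition_error_in_marginals}, and then feed the resulting elementary bounds through the order-of-magnitude estimates of Remark~\ref{rem:orders_of_epsilonappa}. Concretely, equation~\eqref{i-th-marginal-mu} gives $\mu^{\text{sc}}_i = \kappa\mu_i$ for $i \in I_{\varepsilon,\kappa}$ and $\mu^{\text{sc}}_i = \frac{\varepsilon}{\kappa}\sum_{j=1}^m K_{ij}e^{v^{\text{sc}}_j}$ for $i \in I^\complement_{\varepsilon,\kappa}$. Since every entry is nonnegative, $\|\mu^{\text{sc}}\|_1$ is simply the sum of these two groups of terms, so no absolute values intervene and the whole computation reduces to bounding each group separately.

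First I would handle the active block. Summing $\kappa\mu_i$ over the $n_b$ indices of $I_{\varepsilon,\kappa}$ and bounding crudely by $\kappa\, n_b \max_i \mu_i$ yields a contribution of order $n_b\kappa$. Invoking the estimate $\kappa = \bigO(\sqrt{m/(n\, c_{\mu\nu} K_{\min})})$ from Remark~\ref{rem:orders_of_epsilonappa} and treating $\max_i\mu_i$ as $\bigO(1)$, this becomes the first term $n_b\sqrt{m}/\sqrt{n K_{\min} c_{\mu\nu}}$ of~\eqref{l1-norm-mu-sc}.

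Next I would bound the inactive block. For each fixed $i \in I^\complement_{\varepsilon,\kappa}$ I split $\sum_{j} K_{ij}e^{v^{\text{sc}}_j}$ over $j \in J_{\varepsilon,\kappa}$ and $j \in J^\complement_{\varepsilon,\kappa}$, use $K_{ij}\leq 1$, replace $e^{v^{\text{sc}}_j}$ by its screening value $\varepsilon\kappa$ on $J^\complement_{\varepsilon,\kappa}$ and by the upper bound~\eqref{upper_bound_v_potential} on $J_{\varepsilon,\kappa}$. This is exactly the estimate already carried out inside the proof of Proposition~\ref{proposition_error_in_marginals}, giving
\[
\sum_{i \in I^\complement_{\varepsilon,\kappa}} \frac{\varepsilon}{\kappa}\sum_{j=1}^m K_{ij}e^{v^{\text{sc}}_j} \leq (n - n_b)\left(\frac{m_b \max_j \nu_j}{n\kappa K_{\min}} + (m - m_b)\varepsilon^2\right),
\]
after the $\varepsilon/\kappa$ prefactor cancels against the $n^{-1}\varepsilon^{-1}$ of the screening upper bound and against the $\varepsilon\kappa$ screening value. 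Substituting $\kappa^{-1} = \bigO(\sqrt{n/(m K_{\min} c_{\mu\nu})})$ and $\varepsilon^2 = \bigO(1/(\sqrt{nm}\,K_{\min}))$ from Remark~\ref{rem:orders_of_epsilonappa} turns the two summands into $m_b/(\sqrt{nm c_{\mu\nu}}\,K_{\min}^{3/2})$ and $(m-m_b)/(\sqrt{nm}\,K_{\min})$, which are precisely the remaining two terms of~\eqref{l1-norm-mu-sc}. Adding the two blocks gives the claim, and~\eqref{l1-norm-nu-sc} follows by the mirror-image argument, starting from~\eqref{i-th-marginal-nu} and from the companion upper bound on $e^{u^{\text{sc}}_i}$ in Proposition~\ref{prop:bounds_of_usc_and_vsc}, with the roles of $(n,\mu,I)$ and $(m,\nu,J)$ exchanged and $\kappa$ replaced by $\kappa^{-1}$.

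I do not expect a genuine obstacle here: positivity of all marginal entries removes any cancellation issue, and the inactive-block estimate is available essentially verbatim from Proposition~\ref{proposition_error_in_marginals}. The only delicate point is the final conversion through Remark~\ref{rem:orders_of_epsilonappa}, where one must track the competing powers of $n$, $m$, $K_{\min}$ and $c_{\mu\nu}$ carefully so that the three displayed terms emerge with the exact exponents stated; a careless cancellation of the $\varepsilon/\kappa$ prefactors is the most likely source of error.
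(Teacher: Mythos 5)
Your proposal is correct and follows essentially the same route as the paper: split $\|\mu^{\text{sc}}\|_1$ over $I_{\varepsilon,\kappa}$ and its complement via the optimality conditions~\eqref{i-th-marginal-mu}, bound the inactive block with~\eqref{bound_on_v} exactly as in the proof of Proposition~\ref{proposition_error_in_marginals}, and convert through Remark~\ref{rem:orders_of_epsilonappa}. The cancellations of the $\varepsilon/\kappa$ prefactor and the final exponents all check out against the stated bounds.
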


\begin{proof}
Using inequality~\eqref{bound_on_v}, we obtain 
\begin{align*}
\norm{\mu^{\text{sc}}}_1 &= \sum_{i \in I_{\varepsilon,\kappa}} \mu^{\text{sc}}_i +  \sum_{i \in I^\complement_{\varepsilon,\kappa}}\mu^{\text{sc}}_i\\
& \overset{\eqref{i-th-marginal-mu}}{=} \kappa \norm{\mu_{I_{\varepsilon,\kappa}}^{\text{sc}}}_1 + \frac \varepsilon\kappa \sum_{i \in I^\complement}\Big( \sum_{j \in J_{\varepsilon,\kappa}} K_{ij} e^{v^{\text{sc}}_j} + \varepsilon\kappa \sum_{j\in J^\complement_{\varepsilon,\kappa}}K_{ij}\Big)\\
& \overset{\eqref{bound_on_v}}{\leq} \kappa \norm{\mu_{I_{\varepsilon,\kappa}}^{\text{sc}}}_1 + (n-n_b) \Big(\frac{m_b\max_{j \in J_{\varepsilon,\kappa}} \nu_j}{n\kappa K_{\min}} + (m-m_b)\varepsilon^2 \Big).
\end{align*}
Using Remark~\ref{rem:orders_of_epsilonappa}, we get the desired closed form in~\eqref{l1-norm-mu-sc}.
Similarly, we can prove the same statement for $\norm{\nu^{\text{sc}}}_1$.
\end{proof}

{

\subsection{Additional experimental results}

\paragraph{Experimental setup.}

	All computations have been run on each single core of an Intel Xeon E5-2630 processor clocked at
	2.4 GHz in a Linux machine with 144 Gb of memory.

\paragraph{On the use of a constrained L-BFGS-B solver.}

It is worth to note that standard Sinkhorn's alternating minimization cannot be applied for the constrained screened dual problem~\eqref{screen-sinkhorn_second_def}. This appears more clearly while writing its optimality conditions (see Equations~\eqref{i-th-marginal-mu} and~\eqref{i-th-marginal-nu} ).
We resort to a L-BFGS-B algorithm to solve the constrained convex optimization problem on the screened variables~\eqref{screen-sinkhorn_second_def}, but any other efficient solver (e.g., proximal based method or Newton method) could be used. 
The choice of the starting point for the L-BFGS-B algorithm is given by the solution of the \textsc{Restricted Sinkhorn} method (see Algorithm~\ref{restricted_sinkhorn}), which is a Sinkhorn-like algorithm applied to
the active dual variables. While simple and efficient the solution of this 
\textsc{restricted Sinkhorn} algorithm does not satisfy the lower bound constraints of Problem \eqref{screen-sinkhorn_second_def}.
We further note that, as for the \textsc{Sinkhorn} algorithm, our \textsc{Screenkhorn} algorithm can be accelerated using a GPU implementation\footnote{\url{https://github.com/nepluno/lbfgsb-gpu}} of the L-BFGS-B algorithm~\citep{fei2014}.

\LinesNotNumbered
\begin{algorithm}[htbp]
\SetNlSty{textbf}{}{.}
\DontPrintSemicolon
\caption{\textsc{Restricted \textsc{Sinkhorn}}} \label{restricted_sinkhorn}
\nl \textbf{set:} $\bar{f}_u = \varepsilon\kappa\, c(K_{I_{\varepsilon, \kappa}, J^\complement_{\varepsilon, \kappa}}), \bar{f}_v  = \varepsilon\kappa^{-1} \,r(K_{I^\complement_{\varepsilon, \kappa}, J_{\varepsilon, \kappa}});$\\
\nl \For{$t=1,2,3$}{
	$f^{(t)}_v \gets K^\top_{I_{\varepsilon, \kappa}, J_{\varepsilon, \kappa}} u + \bar{f}_v;$\\
	$v^{(t)} \gets \frac{\nu_{J_{\varepsilon, \kappa}}}{\kappa f^{(t)}_v} ;$\\
	$f^{(t)}_u \gets K_{I_{\varepsilon, \kappa}, J_{\varepsilon, \kappa}} v + \bar{f}_u;$\\
	$u^{(t)} \gets \frac{\kappa \mu_{I_{\varepsilon, \kappa}}}{f^{(t)}_u};$\\
	$u \gets u^{(t)}$, $v \gets v^{(t)}$;
}
\nl \Return{$(u^{(t)}, v^{(t)})$}
\end{algorithm}

\paragraph{Comparison with other solvers.}

We have considered experiments with \textsc{Greenkhorn} algorithm~\citep{altschulernips17} but the implementation in POT library and our Python version of Matlab Altschuler's Greenkhorn code\footnote{\url{https://github.com/JasonAltschuler/OptimalTransportNIPS17}} were not competitive with \textsc{Sinkhorn}. Hence, for both versions, \textsc{Screenkhorn} is more competitive than \textsc{Greenkhorn}. The computation time gain reaches an order of $30$ when comparing our method with \textsc{Greenkhorn} while \textsc{Screenkhorn} is almost $2$ times faster than $\textsc{Sinkhorn}$. 
}

\begin{figure*}[htbp]
	\centering
	\includegraphics[width=8cm,height=6cm]{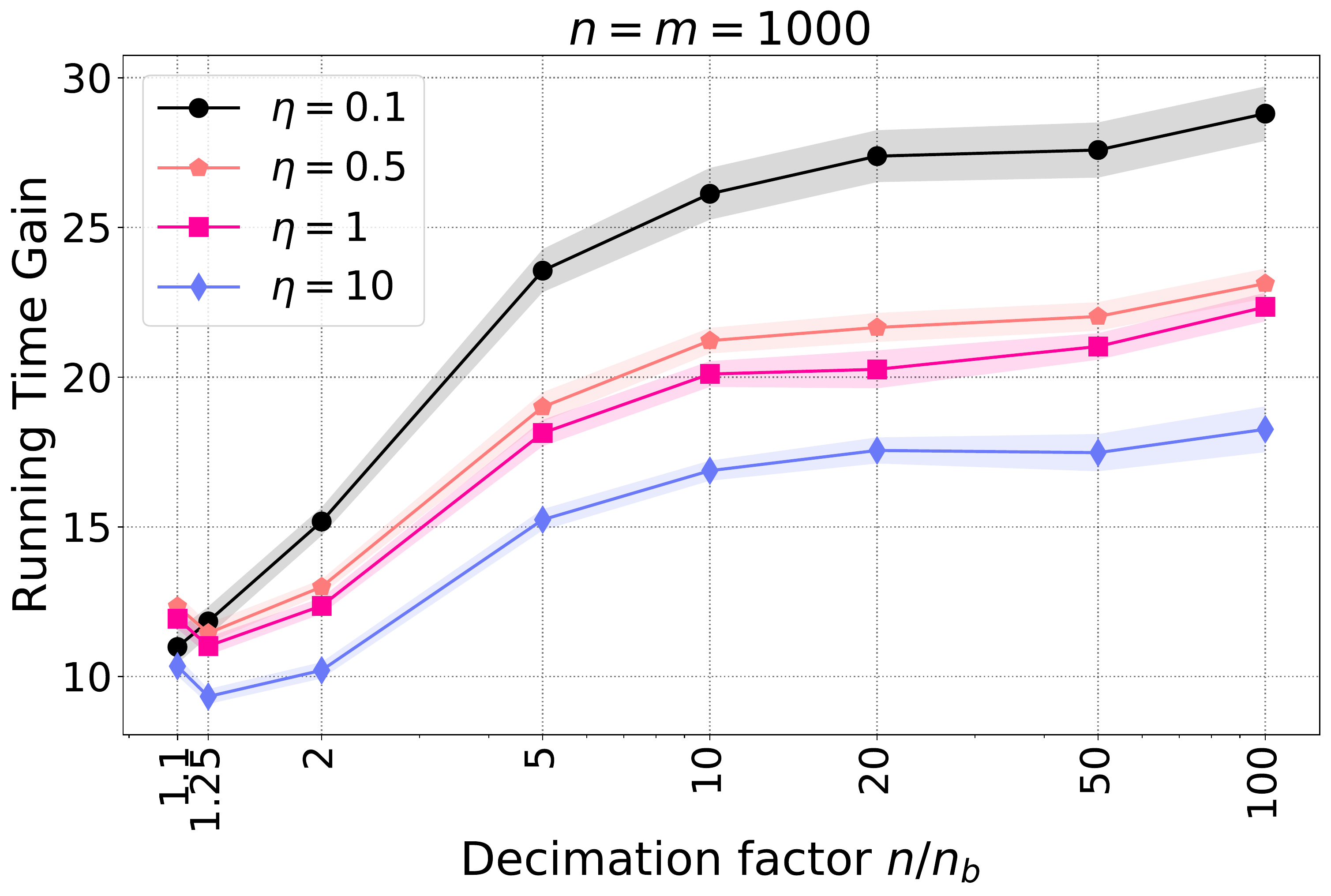}
	\caption{{$\frac{T_{\textsc{Greenkhorn}}}{T_{\text{\textsc{Screenkhorn}}}}$: Running time gain  for the toy problem (see Section~\ref{subsec:analysing_toy_problem}) as a function of the data decimation factor in \textsc{Screenkhorn}, for different settings of the regularization parameter $\eta$.
		\label{fig:screenkhorn_greenkhorn_gain}}}
\end{figure*}

\begin{figure*}[htbp]
	\centering
		~\hfill~\includegraphics[width=6.5cm]{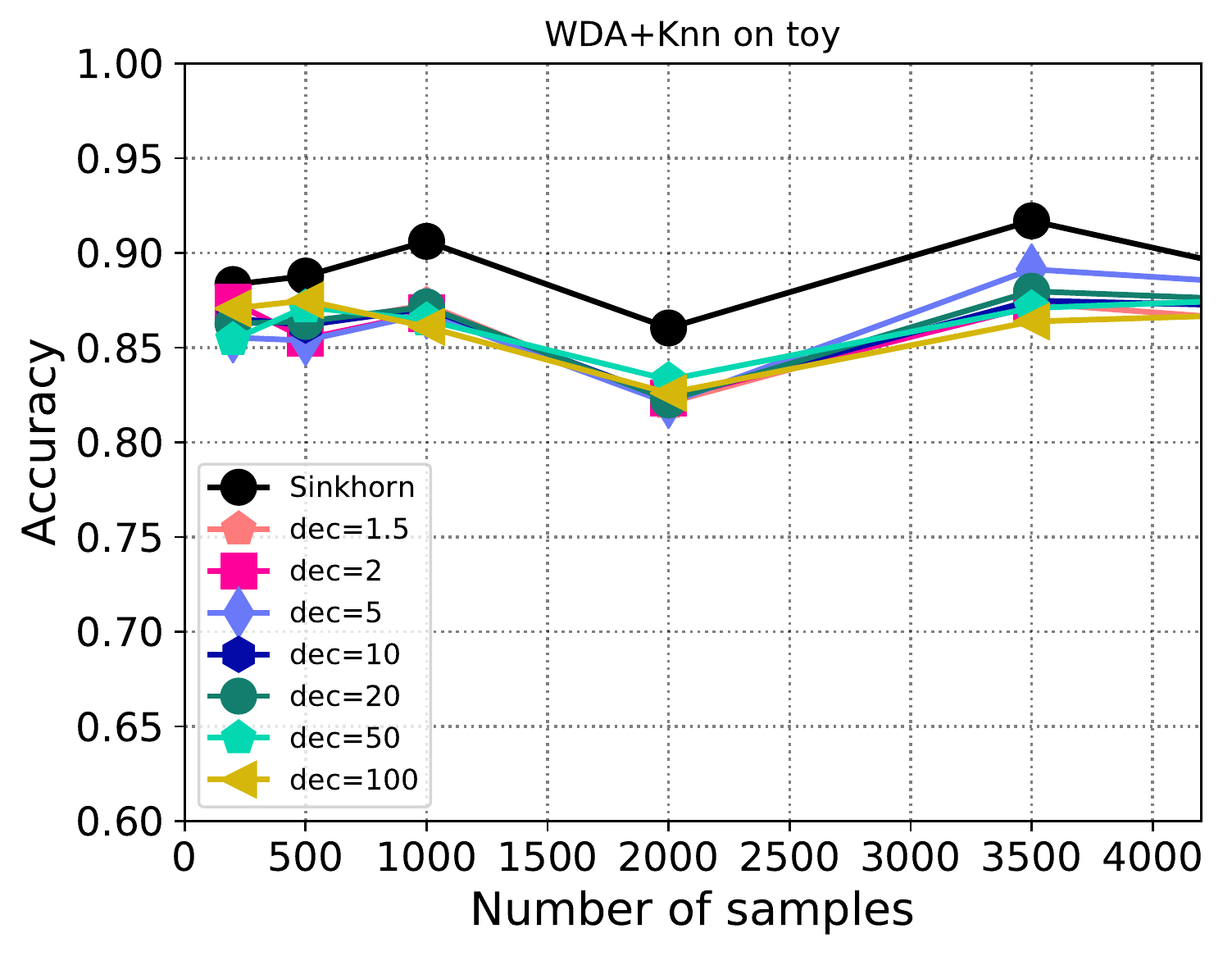}~\hfill~
	\includegraphics[width=6.5cm]{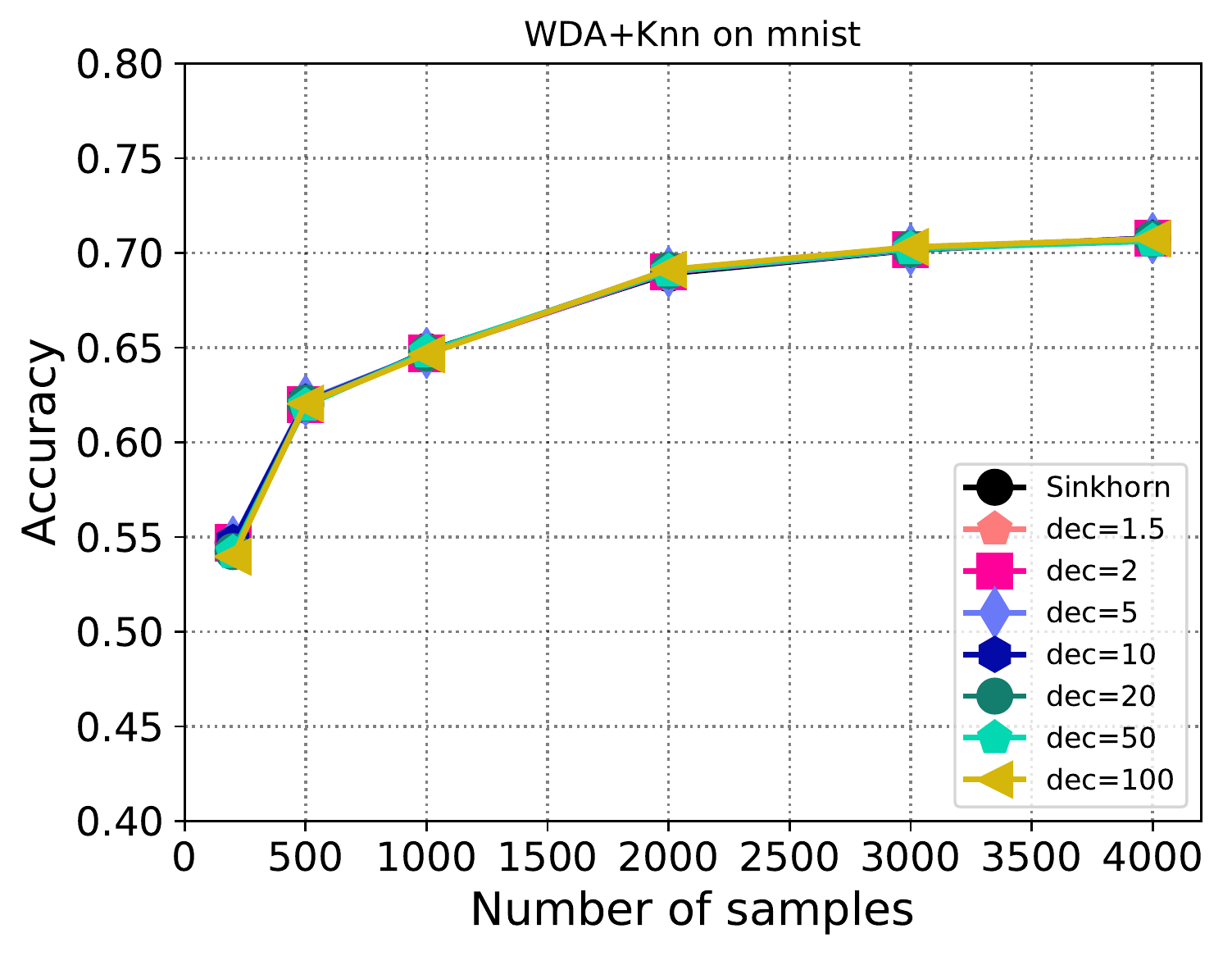}~\hfill~\\
	\caption{Accuracy of a $1$-nearest-neighbour after WDA for the (left) toy problem and, (right) MNIST). We note a slight loss of performance for the toy problem, whereas for MNIST, all approaches yield the same performance. 
		\label{fig:wda_gain}}
\end{figure*}

\begin{figure*}[htbp]

\begin{center}
	
	~\hfill~\includegraphics[width=6.6cm]{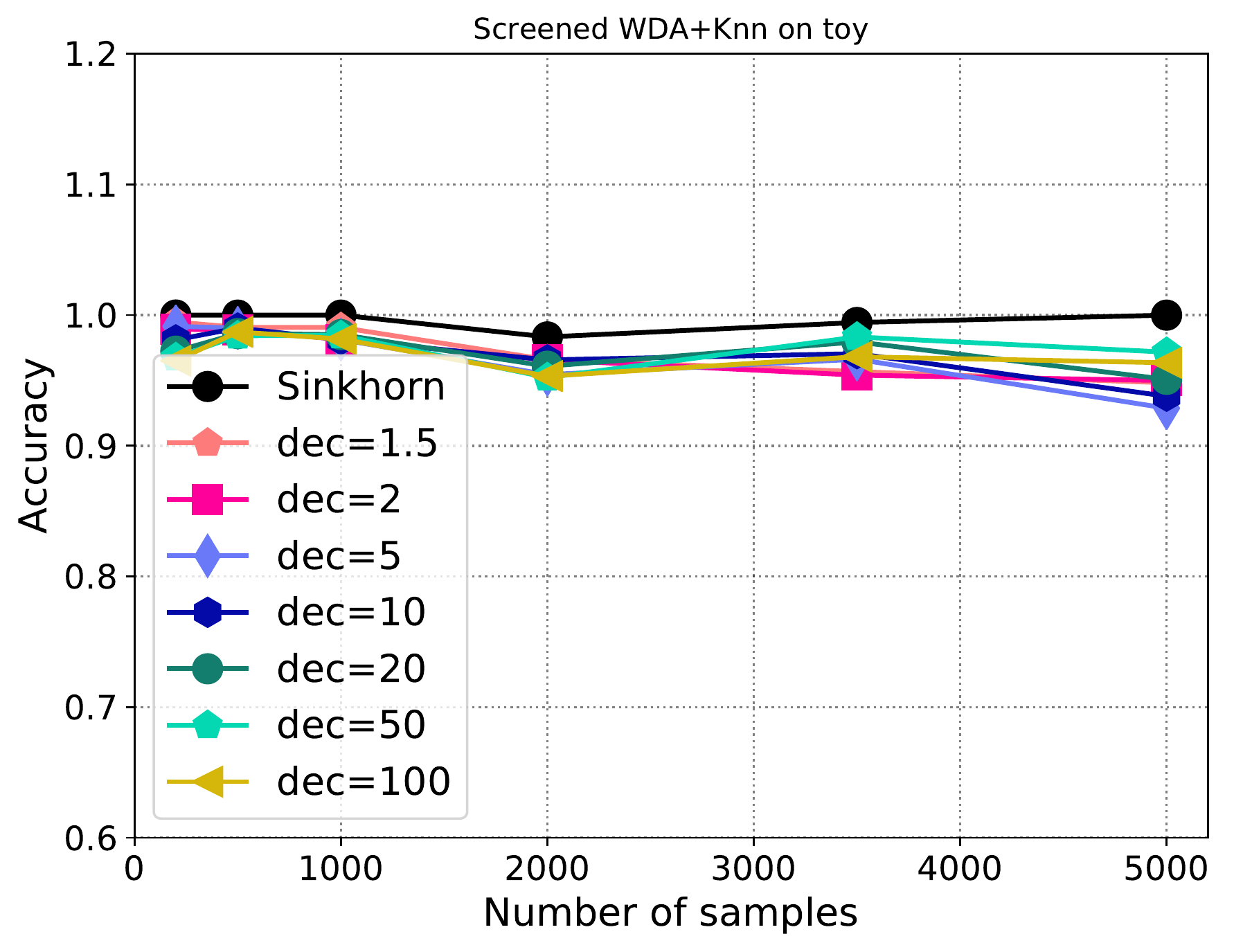}~\hfill~
	\includegraphics[width=6.5cm]{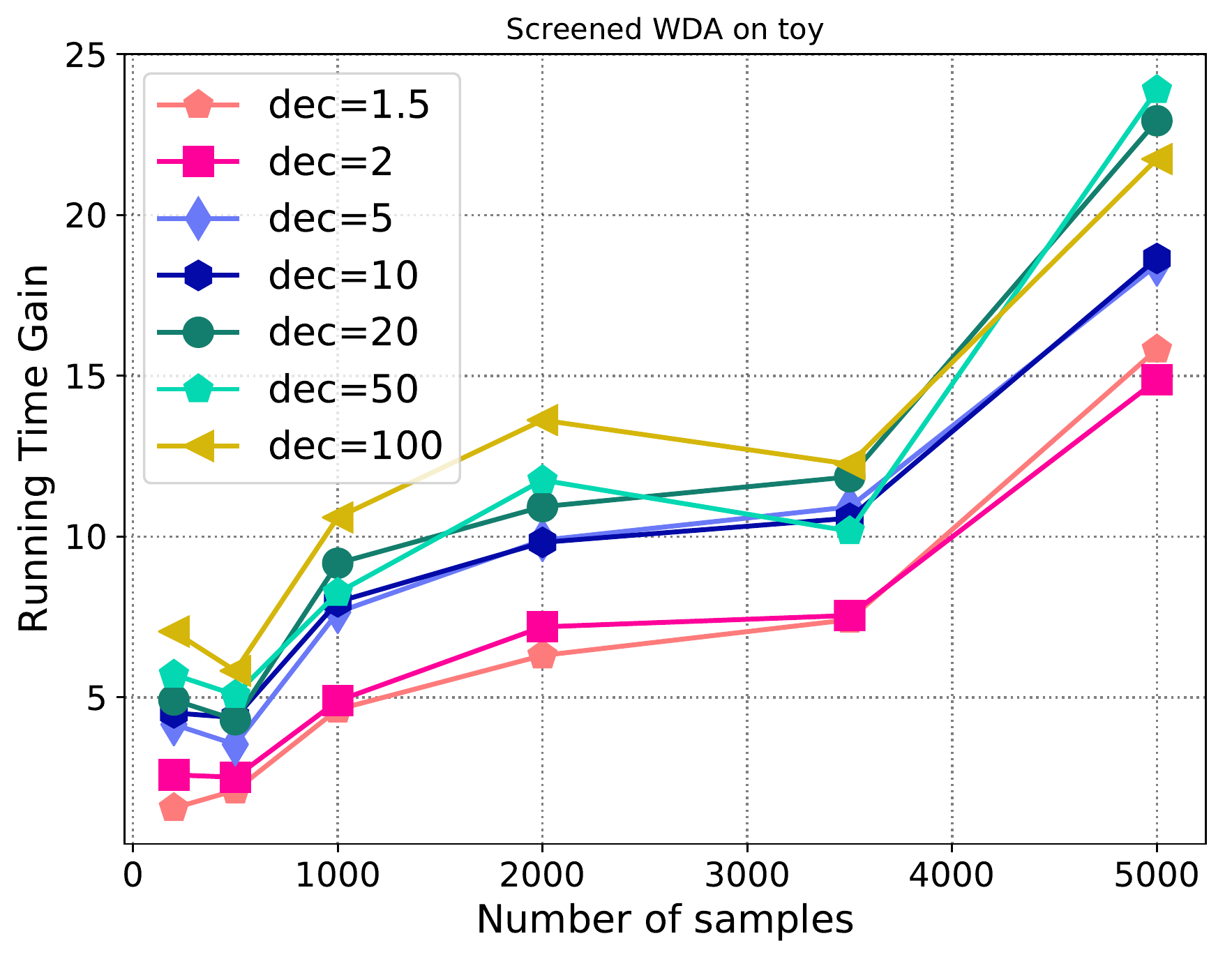}~\hfill~\\
	
	~\hfill~\includegraphics[width=6.5cm]{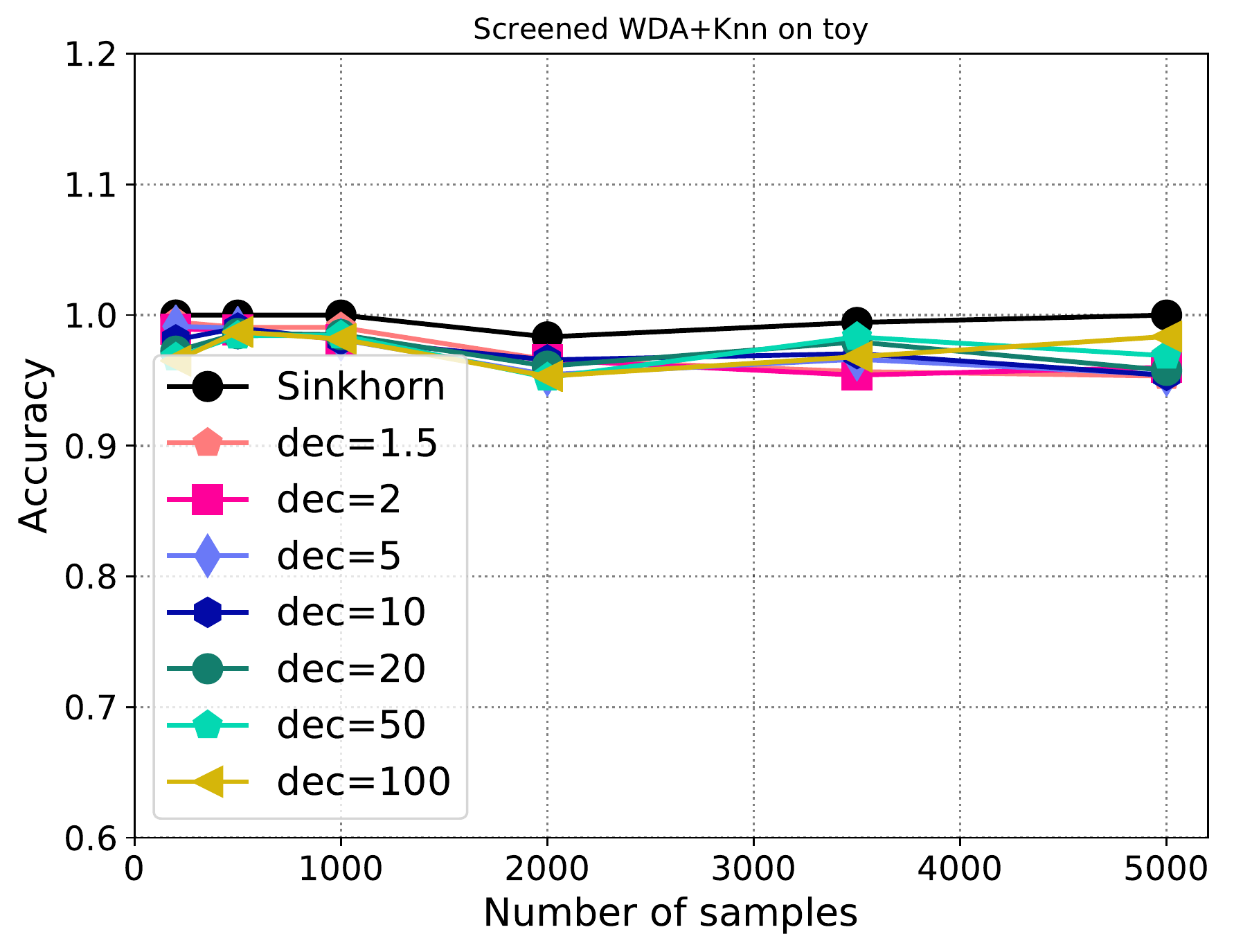}~\hfill~
	\includegraphics[width=6.5cm]{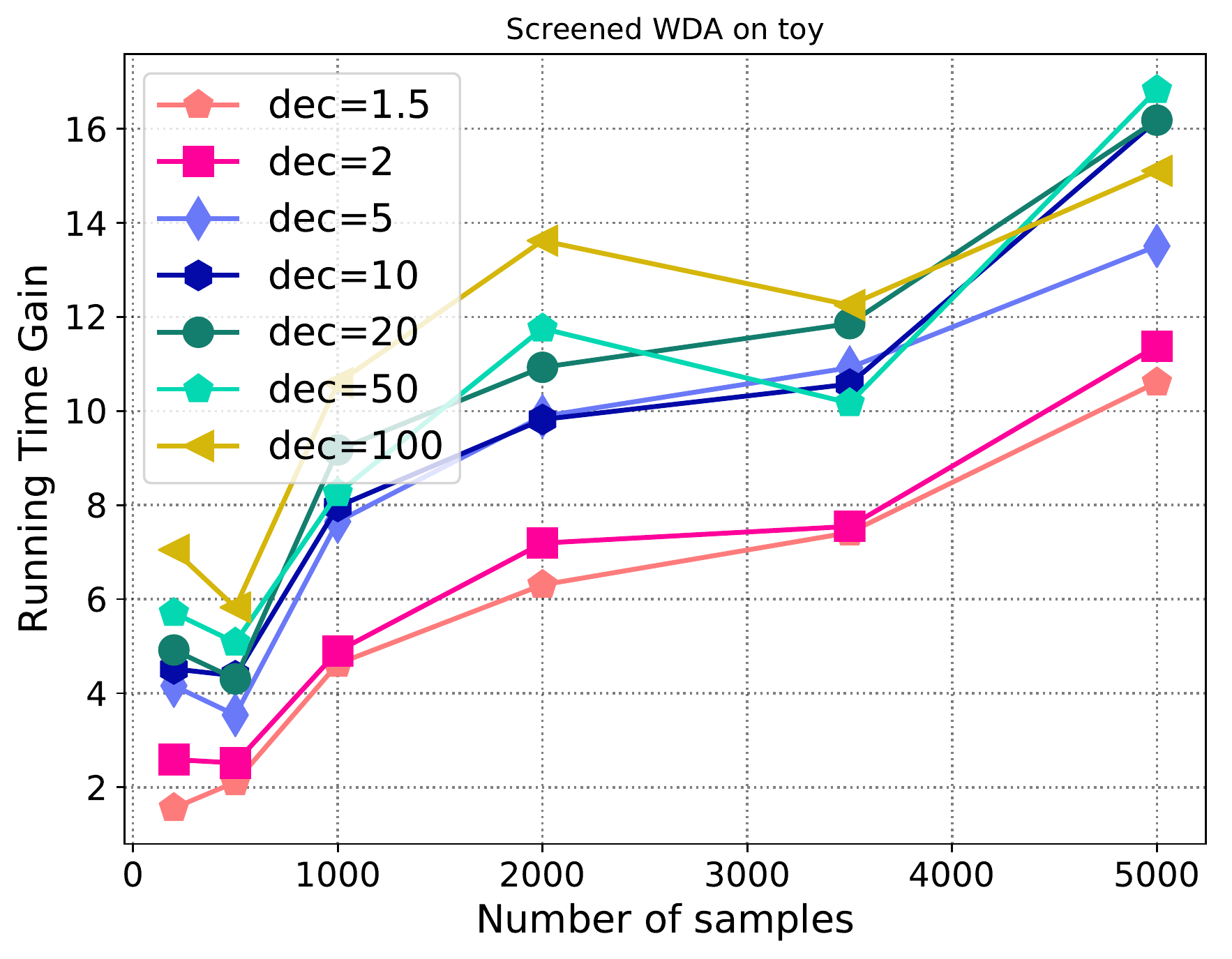}~\hfill~\\

	\caption{(top-left) Accuracy and (bottom-right) computational time gain on the toy dataset for $\eta=0.1$ and $1$-nearest-neighbour. (bottom) accuracy and gain but for a $5$-nearest-neighbour. We can note that a  slight loss of performances occur for larger training set sizes especially for $1$-nearest-neighbour. Computational gains increase with the dataset size and
		are on average of the order of magnitude. 
		\label{fig:wda_gain_toy_eta01}}

\end{center}

\end{figure*}

\begin{figure*}[htbp]
	\centering
		~\hfill~\includegraphics[width=6.cm]{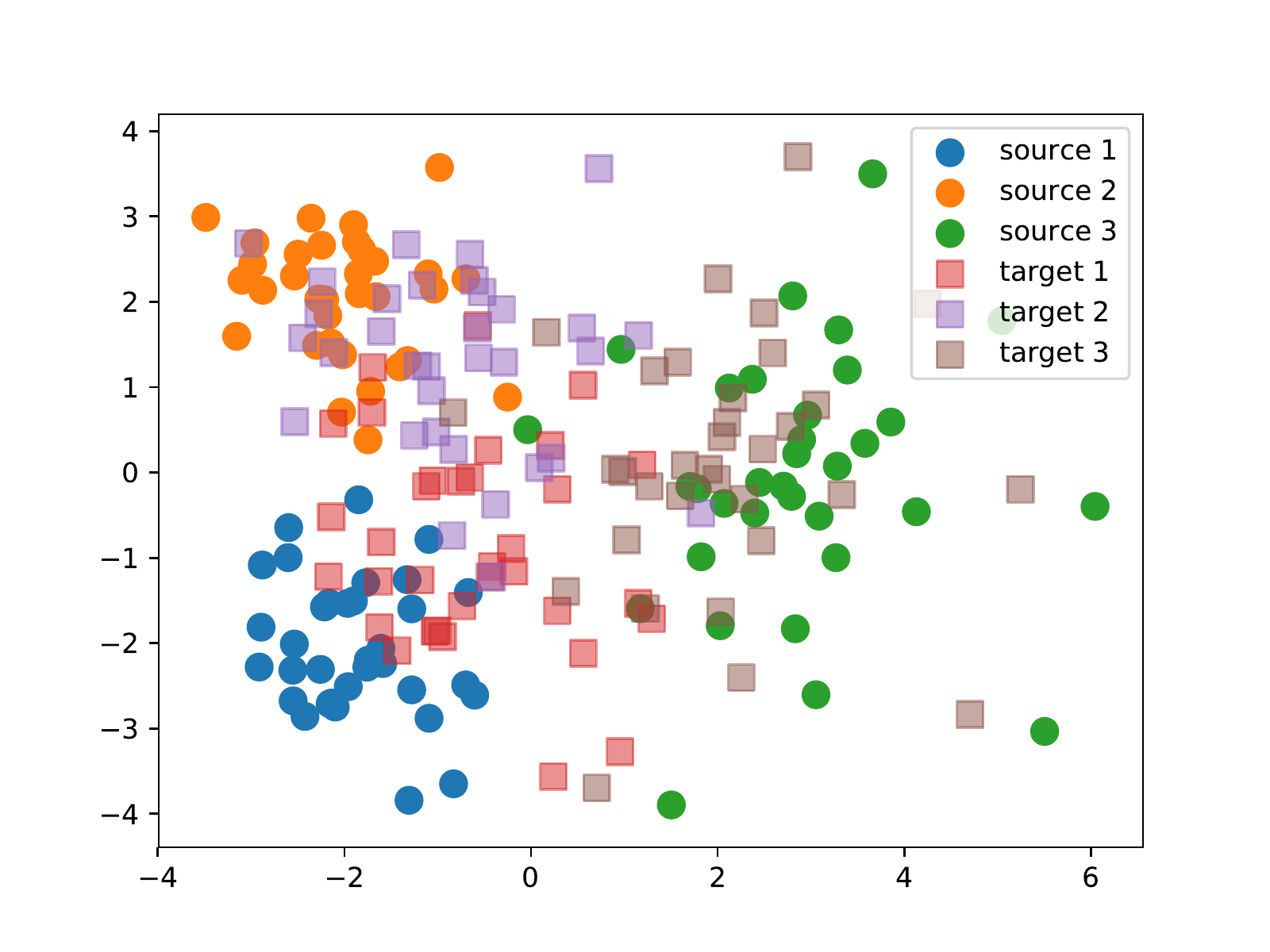}~\hfill~
	\includegraphics[width=6.cm]{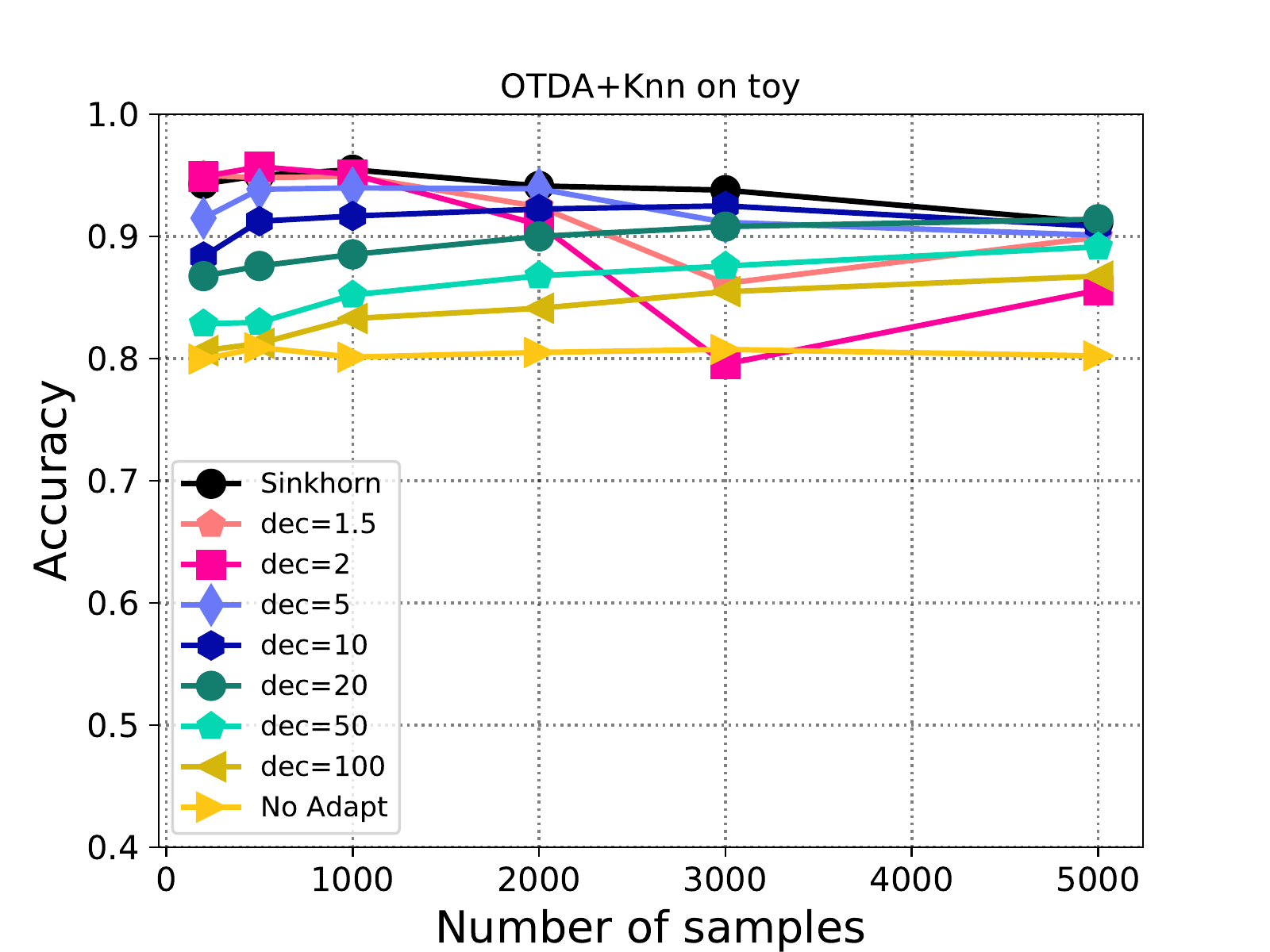}~\hfill~\\
	~\hfill~\includegraphics[width=6.cm]{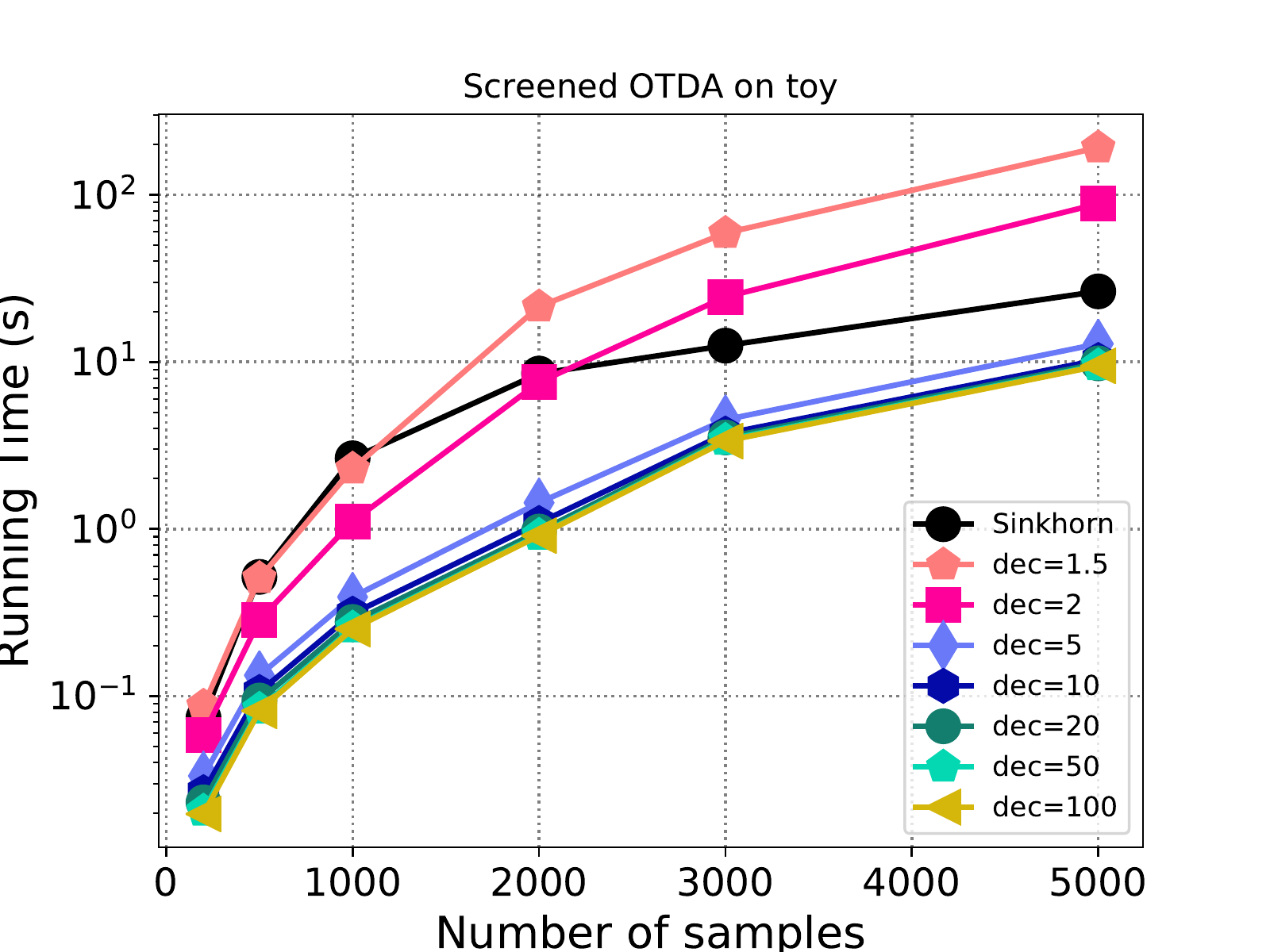}
	~\hfill~\includegraphics[width=6.cm]{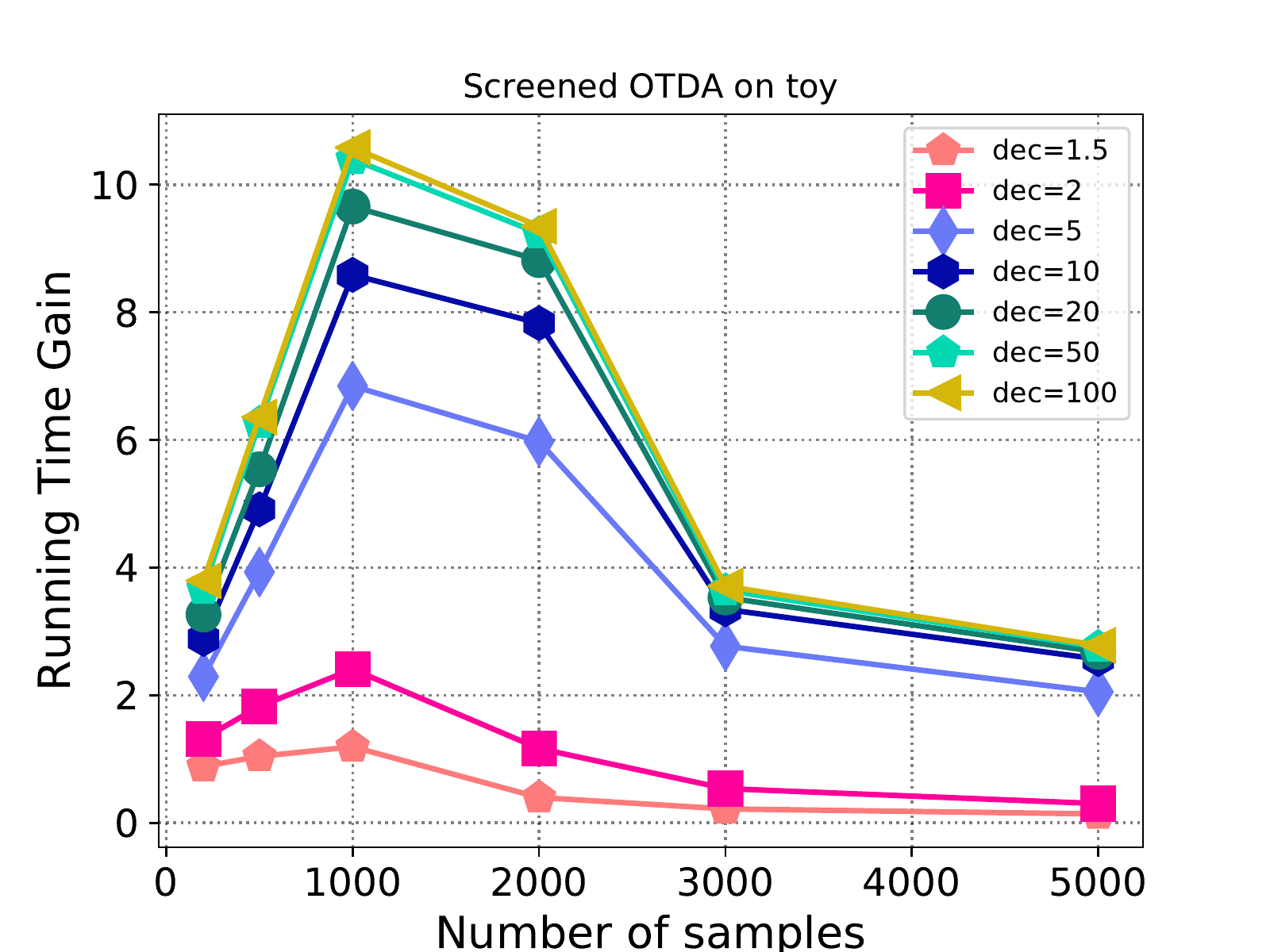}~\hfill~\\
	\caption{OT Domain Adaptation on a 3-class Gaussian toy problem. (top-left) Examples of source and target samples. (top-right) Evolution of the accuracy of a 1-nearest-neighbour classifier with respect to the number of samples. (bottom-left) Running time of the \textsc{Sinkhorn} and \textsc{Screenkhorn} for different decimation
	factors. (bottom-right). Gain in computation time. This toy problem is a problem in which classes are overlapping and distance between samples are rather limited. According to our analysis, this may be a situation in which \textsc{Screenkhorn} may
	result in smaller computational gain. We can remark that with respect to the accuracy
	\textsc{Screenkhorn} with decimation factors up to $10$ are competitive with \textsc{Sinkhorn}, although a slight loss of performance. Regarding computational time, for this example, small decimation factors does not  result in gain. However for above  $5$-factor decimation, the gain goes from  $2$ to $10$ depending on
	the number of samples. 
	\label{fig:otda:extra}}
\end{figure*}

\begin{figure*}[htbp]
	\centering
	\includegraphics[width=6.5cm]{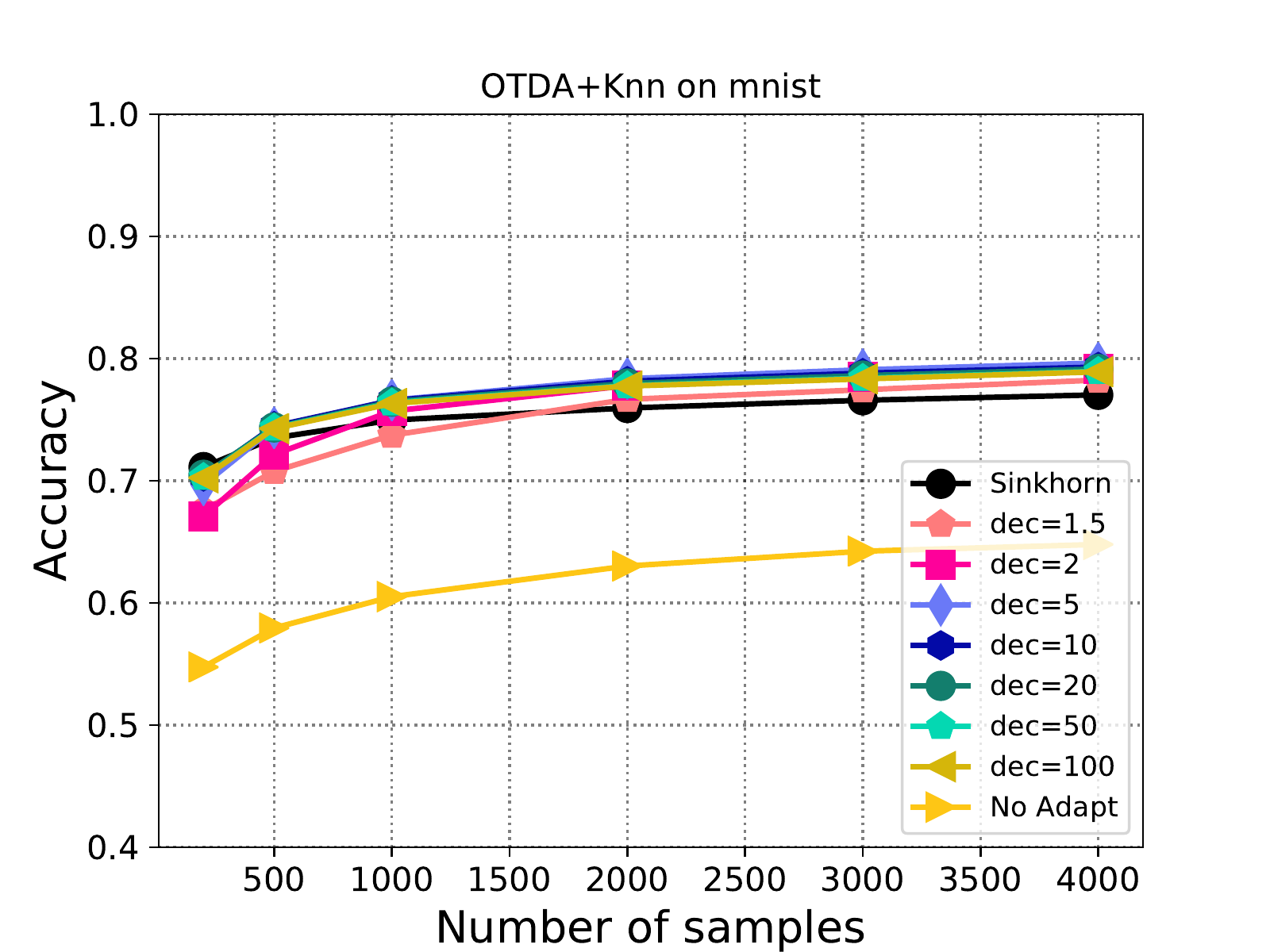}
	\includegraphics[width=6.5cm]{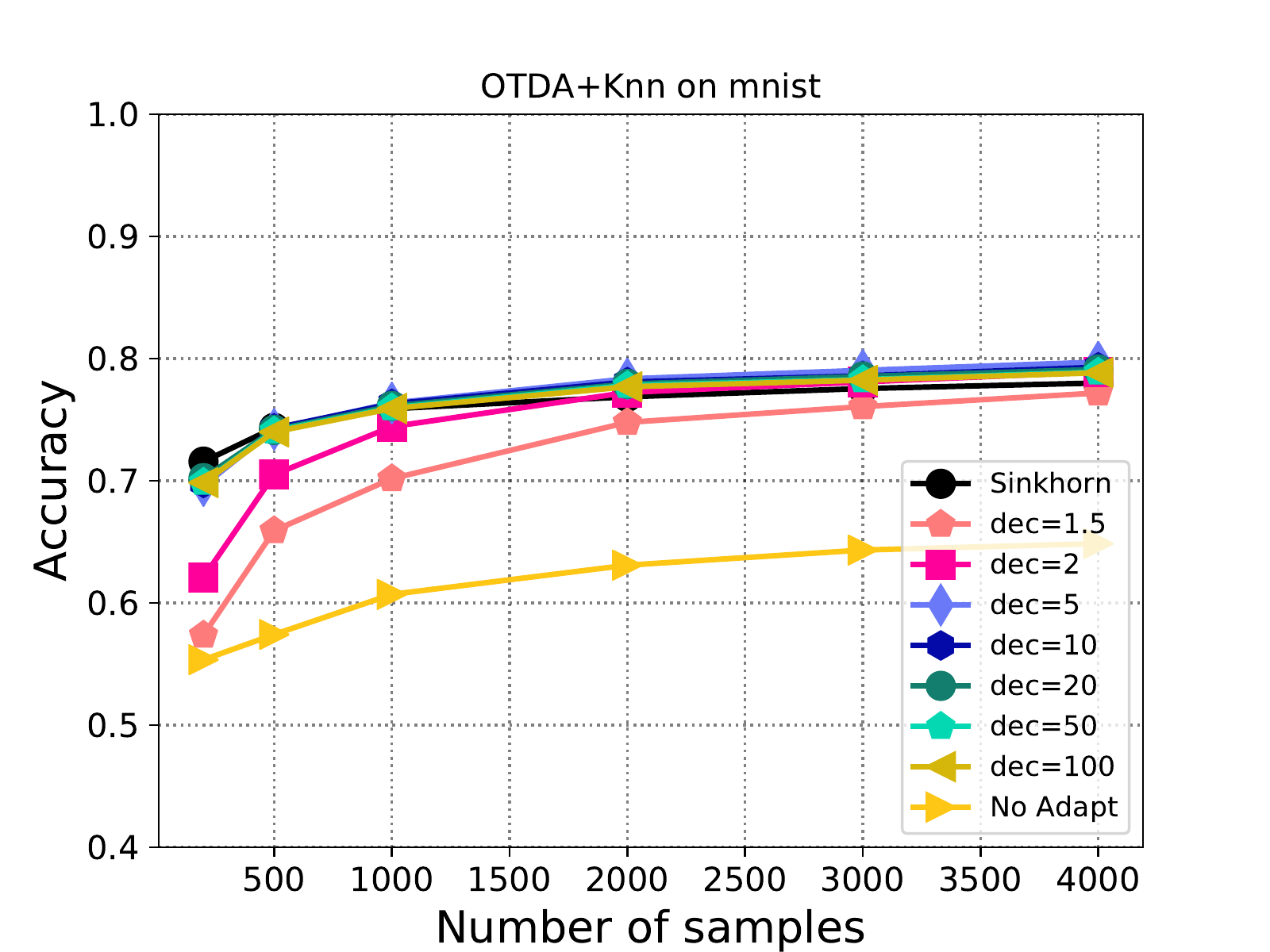}
	\includegraphics[width=6.5cm]{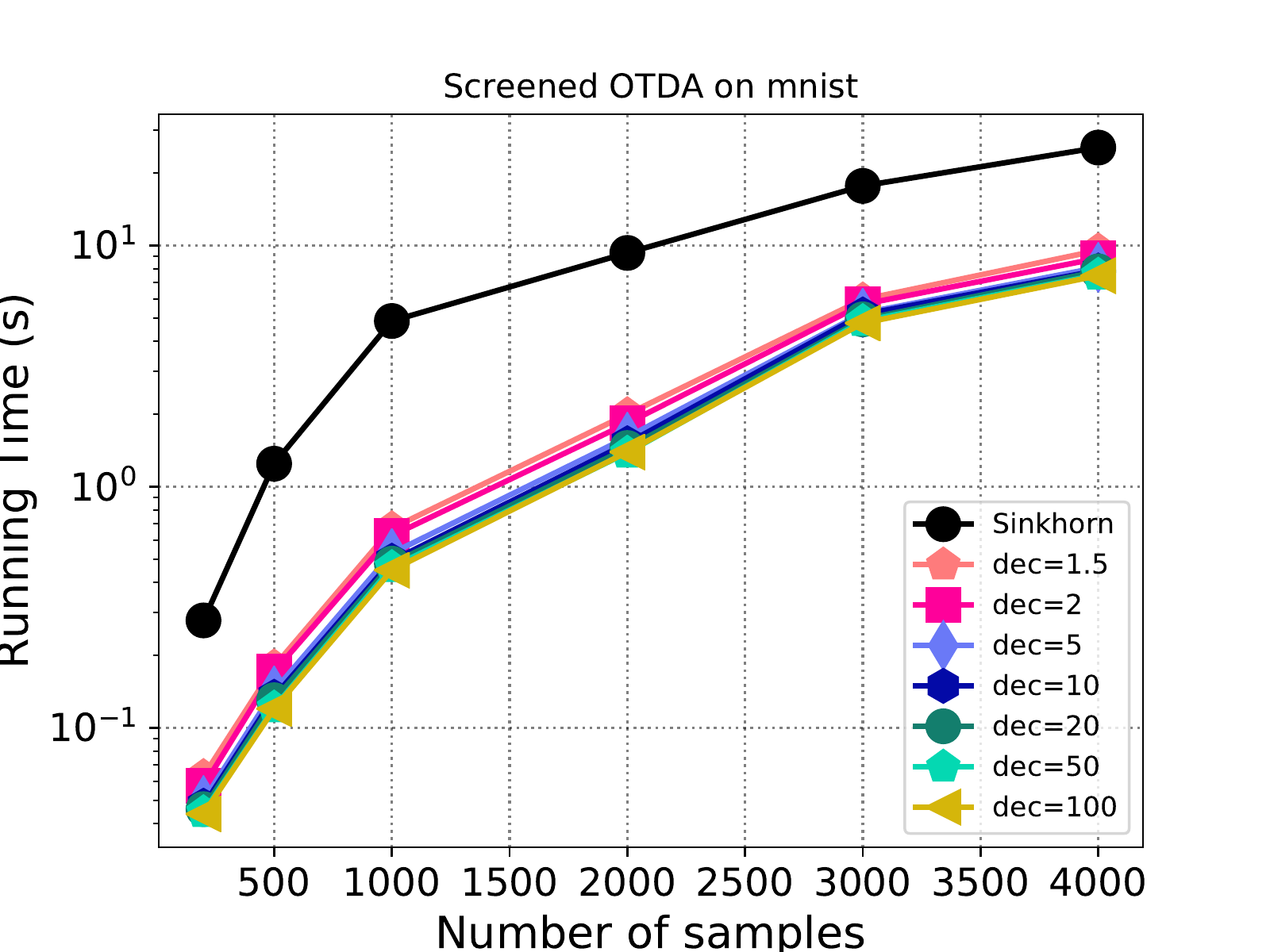}
	\includegraphics[width=6.5cm]{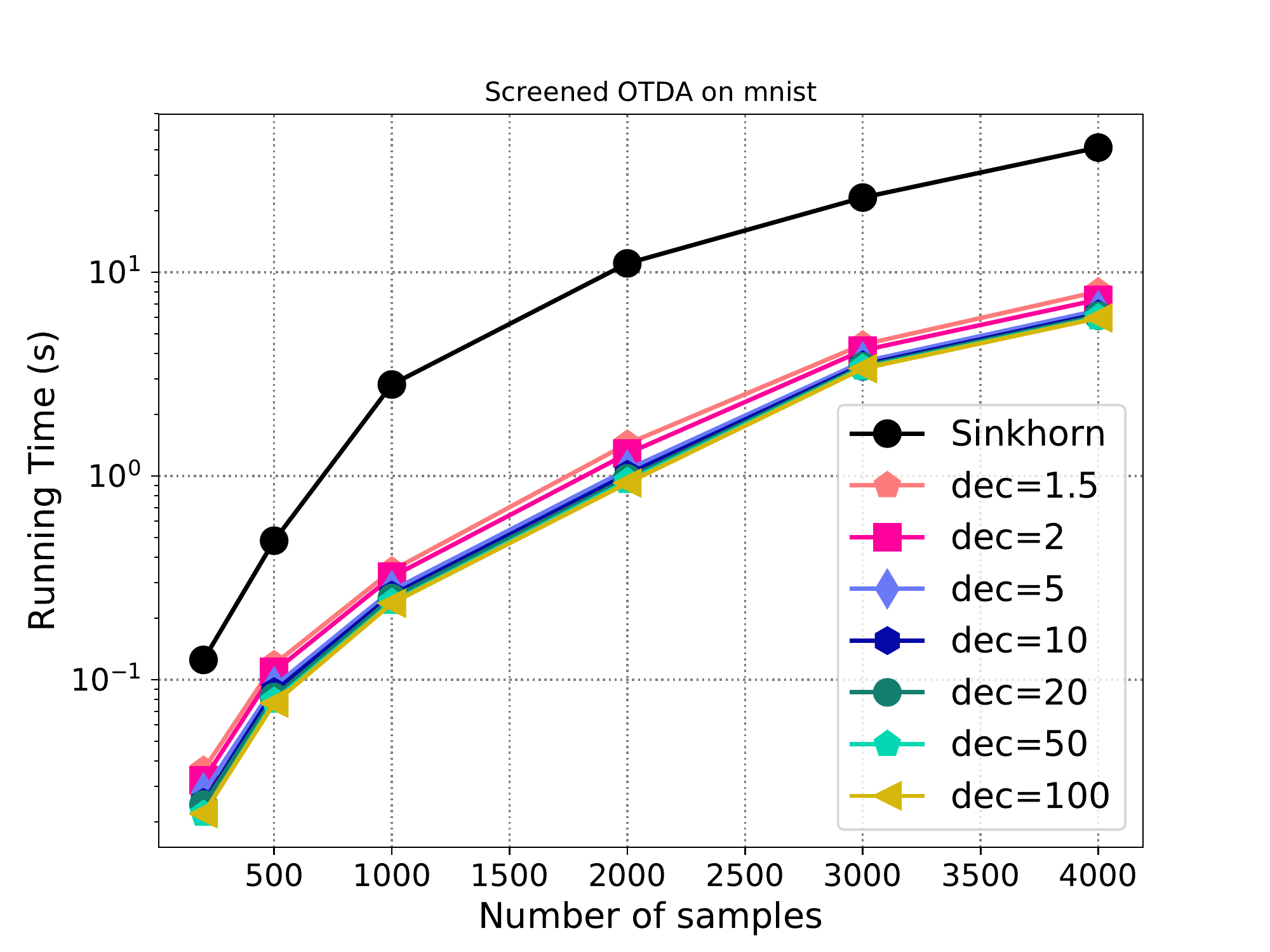}
	\caption{OT Domain adaptation MNIST to USPS : (top) Accuracy and (bottom) running time of \textsc{Sinkhorn} and \textsc{Screenkhorn} for hyperparameter of the $\ell_{p,1}$ regularizer (left) $\lambda = 1$ and (right) $\lambda=10$. Note that this
	value impacts the ground cost of each Sinkhorn problem involved in the iterative algorithm. The accuracy panels
	also report the performance of a $1$-NN when no-adaptation is performed.  
	We remark that the strenght of the class-based regularization has influence on the performance of \textsc{Screenkhorn} given a decimation factor. For small value on the left, \textsc{Screenkhorn} slightly performs better than \textsc{Sinkhorn}, while for large value, some
	decimation factors leads to loss of performances.
	Regarding, running time, we can note that \textsc{Sinkhorn} is far less efficient than
	\textsc{Screenkhorn}  with an order of magnitude for intermediate number of samples.
	\label{fig:otda:mnist:extra}}
\end{figure*}

\begin{figure*}[htbp]
	\centering

	\includegraphics[width=6.5cm]{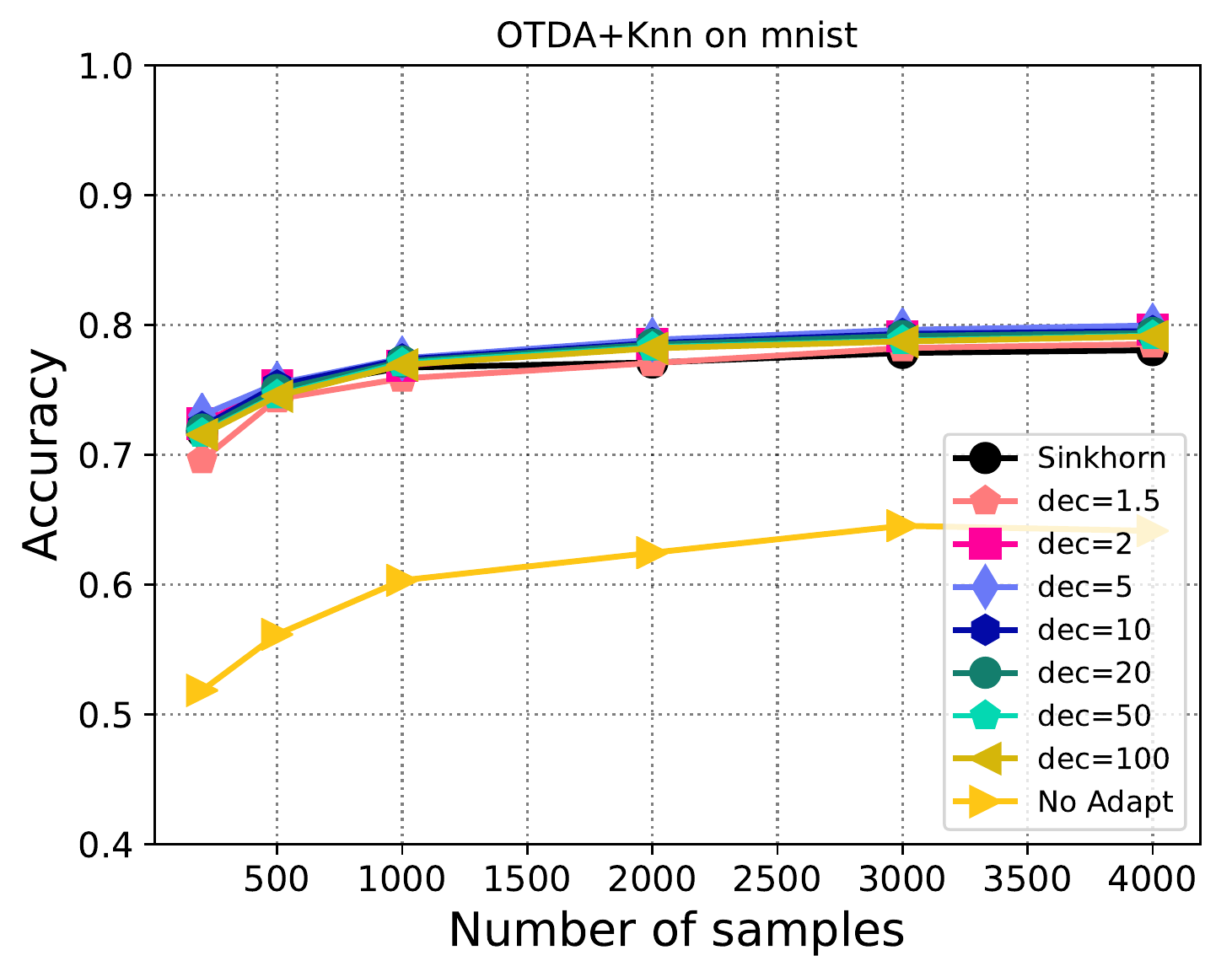}
	\includegraphics[width=6.7cm]{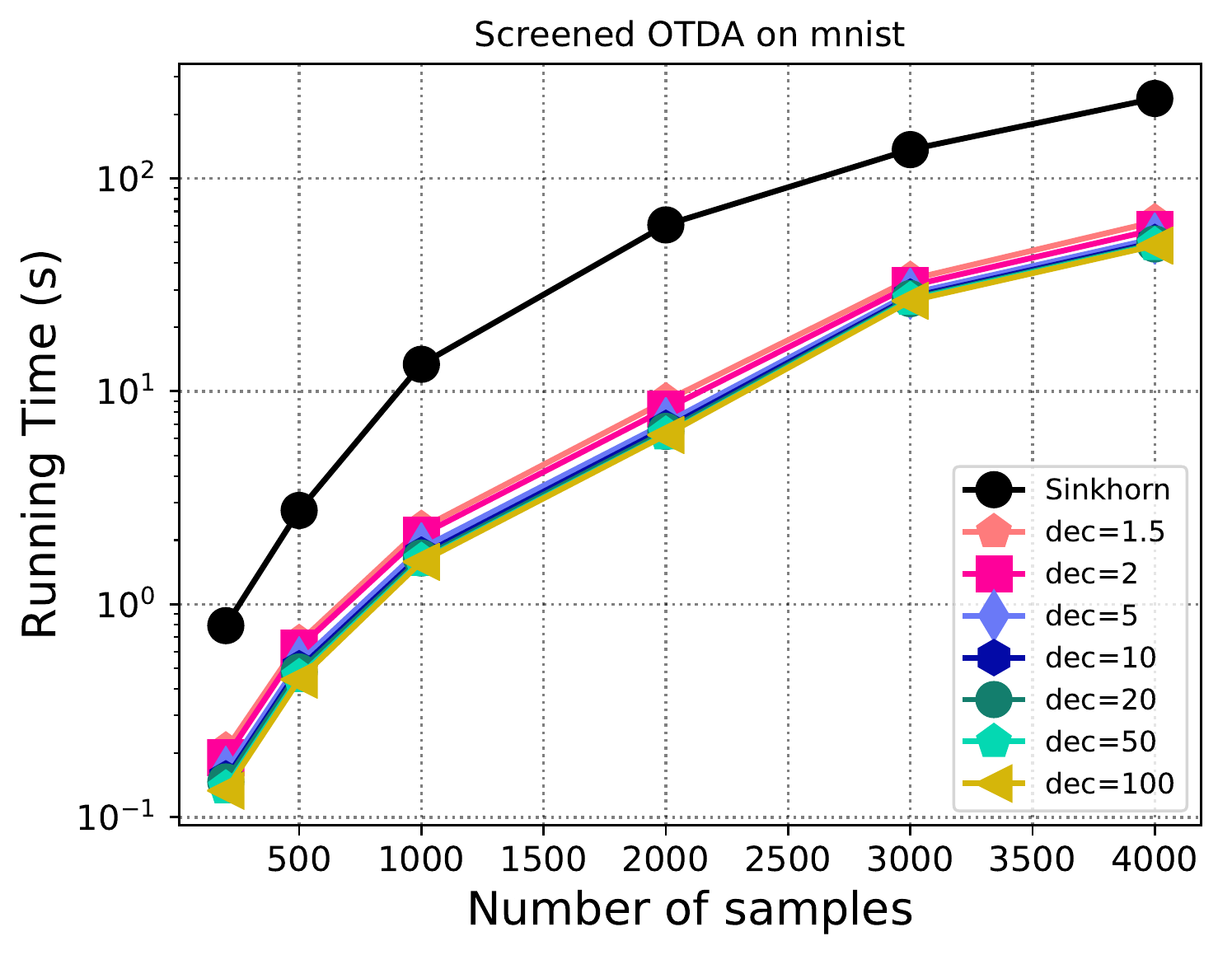}

	\caption{OT Domain adaptation MNIST to USPS : (left) Accuracy and (right) running time of \textsc{Sinkhorn} and \textsc{Screenkhorn} for the best performing (on average of $10$ trials) hyperparameter $\ell_{p,1}$ chosen among the set $\{0.1, 1,5,10 \}$. We can note that in this situation, there is not loss of accuracy while our \textsc{Screenkhorn} is still about an order of magnitude more efficient than Sinkhorn.}
\end{figure*}

\end{document}